\newtheorem{theorem}{Theorem}
\newtheorem{definition}[theorem]{Definition}
\newtheorem{lemma}[theorem]{Lemma}
\DeclareMathOperator*{\argmin}{\arg\!\min}
\newcommand{\matbeg}{\left(\begin{array}}
\newcommand{\matend}{\end{array}\right)}
\newcommand{\mathbbm}[1]{\text{\usefont{U}{bbm}{m}{n}#1}}
\DeclareRobustCommand{\officialeuro}{%
  \ifmmode\expandafter\text\fi
  {\fontencoding{U}\fontfamily{eurosym}\selectfont e}}
\newcommand{\sm}{\setminus}
\title{Learning Restricted Boltzmann Machines\\ with Sparse Latent Variables}
\author{
    Guy Bresler\footnote{Massachusetts Institute of Technology. Department of EECS. Email: \texttt{guy@mit.edu}.}
    \and 
    Rares-Darius Buhai\footnote{Massachusetts Institute of Technology. Department of EECS. Email: \texttt{rbuhai@mit.edu}. Current affiliation: ETH Zurich. Computer Science Department. Email: \texttt{rares.buhai@inf.ethz.ch}.}
}
\begin{document}

\maketitle

\begin{abstract}
Restricted Boltzmann Machines (RBMs) are a common family of undirected graphical models with latent variables. An RBM is described by a bipartite graph, with all observed variables in one layer and all latent variables in the other. We consider the task of learning an RBM given samples generated according to it. The best algorithms for this task currently have time complexity $\tilde{O}(n^2)$ for ferromagnetic RBMs (i.e., with attractive potentials) but $\tilde{O}(n^d)$ for general RBMs, where $n$ is the number of observed variables and $d$ is the maximum degree of a latent variable. Let the \textit{MRF neighborhood} of an observed variable be its neighborhood in the Markov Random Field of the marginal distribution of the observed variables. In this paper, we give an algorithm for learning general RBMs with time complexity $\tilde{O}(n^{2^s+1})$, where $s$ is the maximum number of latent variables connected to the MRF neighborhood of an observed variable. This is an improvement when $s < \log_2 (d-1)$, which corresponds to RBMs with sparse latent variables. Furthermore, we give a version of this learning algorithm that recovers a model with small prediction error and whose sample complexity is independent of the minimum potential in the Markov Random Field of the observed variables. This is of interest because the sample complexity of current algorithms scales with the inverse of the minimum potential, which cannot be controlled in terms of natural properties of the RBM.
\end{abstract}

\section{Introduction}

\subsection{Background}

Undirected graphical models, also known as \textit{Markov Random Fields} (MRFs), are probabilistic models in which a set of random variables is described with the help of an undirected graph, such that the graph structure corresponds to the dependence relations between the variables. Under mild conditions, the distribution of the random variables is determined by potentials associated with each clique of the graph \cite{hammersley1971markov}.

The joint distribution of any set of random variables can be represented as an MRF on a complete graph. However, MRFs become useful when the graph has nontrivial structure, such as bounded degree or bounded clique size. In such cases, learning and inference can often be carried out with greater efficiency. Since many phenomena of practical interest can be modelled as MRFs (e.g., magnetism \cite{brush1967history}, images \cite{li2012markov}, gene interactions and protein interactions \cite{wei2007markov, deng2002prediction}), it is of great interest to understand the complexity, both statistical and computational, of algorithmic tasks in these models.

The expressive power of graphical models is significantly strengthened by the presence of latent variables, i.e., variables that are not observed in samples generated according to the model. However, algorithmic tasks are typically more difficult in models with latent variables. Results on learning models with latent variables include \cite{mossel2005learning} for hidden Markov models, \cite{choi2011learning} for tree graphical models, \cite{chandrasekaran2010latent} for Gaussian graphical models, and \cite{anandkumar2013learning} for locally tree-like graphical models with correlation decay.

In this paper we focus on the task of learning \textit{Restricted Boltzmann Machines} (RBMs) \cite{smolensky1986information, freund1992unsupervised, hinton2002training}, which are a family of undirected graphical models with latent variables. The graph of an RBM is bipartite, with all observed variables in one layer and all latent variables in the other. This encodes the fact that the variables in one layer are jointly independent conditioned on the variables in the other layer. In practice, RBMs are used to model a set of observed features as being influenced by some unobserved and independent factors; this corresponds to the observed variables and the latent variables, respectively. RBMs are useful in common factor analysis tasks such as collaborative filtering \cite{salakhutdinov2007restricted} and topic modelling \cite{hinton2009replicated}, as well as in applications in domains as varied as speech recognition \cite{jaitly2011learning}, healthcare \cite{yan2015restricted}, and quantum mechanics \cite{nomura2017restricted}.

In formalizing the learning problem, a challenge is that there are infinitely many RBMs that induce the same marginal distribution of the observed variables. To sidestep this non-identifiability issue, the literature on learning RBMs focuses on learning the marginal distribution itself. This marginal distribution is, clearly, an MRF. Call the \textit{order} of an MRF the size of the largest clique that has a potential. Then, more specifically, it is known that the marginal distribution of the observed variables is an MRF of order at most $d$, where $d$ is the maximum degree of a latent variable in the RBM. Hence, one way to learn an RBM is to simply apply algorithms for learning MRFs.
The best current algorithms for learning MRFs have time complexity $\tilde{O}(n^r)$, where $r$ is the order of the MRF \cite{hamilton2017information,klivans2017learning,vuffray2019efficient}. Applying these algorithms to learning RBMs therefore results in time complexity $\tilde{O}(n^d)$. We note that these time complexities hide the factors that do not depend on $n$.

This paper is motivated by the following basic question: 
\[
\text{\emph{In what settings is it possible to learn RBMs with time complexity substantially better than $\tilde{O}(n^d)$?}}\] 
Reducing the runtime of learning arbitrary MRFs of order $r$ to below $n^{\Omega(r)}$ is unlikely, because learning such MRFs subsumes learning noisy parity over $r$ bits \cite{bresler2014structure}, and it is widely believed that learning $r$-parities with noise (LPN) requires time $n^{\Omega(r)}$ \cite{kearns1998efficient}. 
For ferromagnetic RBMs, i.e., RBMs with non-negative interactions, \cite{bresler2019learning} gave an algorithm with time complexity $\tilde{O}(n^2)$. In the converse direction, \cite{bresler2019learning} gave a general reduction from learning MRFs of order $r$ to learning (non-ferromagnetic) RBMs with maximum degree of a latent variable $r$.
 
In other words, the problem of learning RBMs is just as challenging as for MRFs, and therefore learning general RBMs cannot be done in time less than $n^{\Omega(d)}$ without violating conjectures about LPN. 

The reduction in \cite{bresler2019learning} from learning order $r$ MRFs to learning RBMs uses an \emph{exponential} in $r$ number of latent variables to represent each neighborhood of the MRF. Thus, there is hope that RBMs with \emph{sparse} latent variables are in fact easier to learn than general MRFs. The results of this paper demonstrate that this is indeed the case. 

\subsection{Contributions}

Let the \textit{MRF neighborhood} of an observed variable be its neighborhood in the MRF of the marginal distribution of the observed variables. Let $s$ be the maximum number of latent variables connected to the MRF neighborhood of an observed variable. We give an algorithm with time complexity $\tilde{O}(n^{2^s+1})$ that recovers with high probability the MRF neighborhoods of all observed variables. This represents an improvement over current algorithms when $s < \log_2(d-1)$.

The reduction in time complexity is made possible by the following key structural result: if the mutual information $I(X_u; X_I | X_S)$ is large for some observed variable $X_u$ and some subsets of observed variables $X_I$ and $X_S$, then there exists a subset $I'$ of $I$ with $|I'| \leq 2^s$ such that $I(X_u ; X_{I'}|X_S)$ is also large. This result holds because of the special structure of the RBM, in which, with few latent variables connected to the neighborhood of any observed variable, not too many of the low-order potentials of the induced MRF can be cancelled. 

Our algorithm is an extension of the algorithm of \cite{hamilton2017information} for learning MRFs. To find the neighborhood of a variable $X_u$, their algorithm iteratively searches over all subsets of variables $X_I$ with $|I| \leq d-1$ for one with large mutual information $I(X_u; X_I|X_S)$, which is then added to the current set of neighbors $X_S$. Our structural result implies that it is sufficient to search over subsets $X_I$ with $|I| \leq 2^s$, which reduces the time complexity from $\tilde{O}(n^{d})$ to $\tilde{O}(n^{2^s+1})$.

For our algorithm to be advantageous, it is necessary that $s < \log_2(d-1)$. Note that $s$ is implicitly also an upper bound on the maximum degree of an observed variable in the RBM. Figure \ref{figure:rbm_d20_s3} shows an example of a class of RBMs for which our assumptions are satisfied.
In this example, $s$ can be made arbitrarily smaller than $d$, $n$, and the number of latent variables. 


\begin{figure}
    \centering
    \includegraphics[scale=0.26]{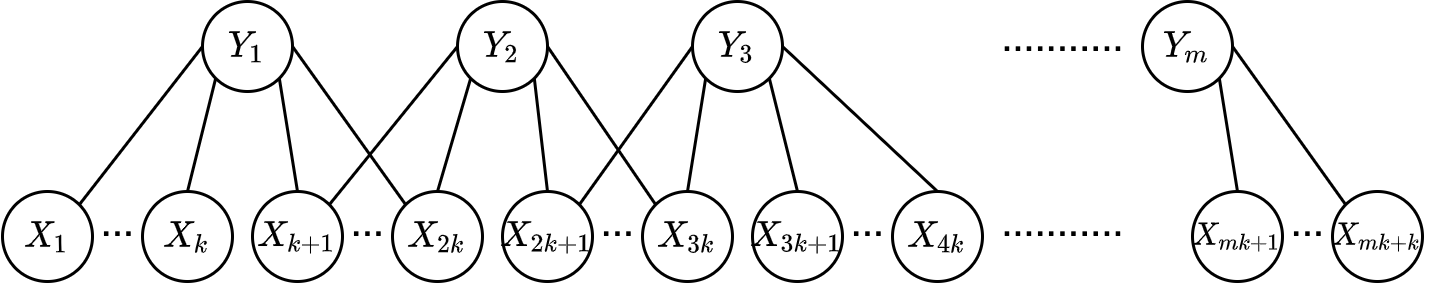}
\caption{Class of RBMs with $mk+k$ observed variables, $m$ latent variables, $d=2k$, and $s=4$. The $X$ variables represent observed variables, the $Y$ variables represent latent variables, and the edges represent non-zero interactions between variables. The ``$\cdots$'' hides variables that have consecutive indices. The variables hidden by ``$\cdots$'' have the same connections as the variables at the extremes of their respective dots.}
\label{figure:rbm_d20_s3}
\end{figure}

The sample complexity of our algorithm is the same as that of \cite{hamilton2017information}, with some additional factors due to working with subsets of size at most $2^s$. We extended \cite{hamilton2017information} instead of one of \cite{klivans2017learning,vuffray2019efficient}, which have better sample complexities, because our main goal was to improve the time complexity, and we found \cite{hamilton2017information} the most amenable to extensions in this direction. The sample complexity necessarily depends on the width (defined in Section \ref{sec:prelim}) and the minimum absolute-value non-zero potential of the MRF of the observed variables \cite{santhanam2012information}. In the Appendix F, we show that our sample complexity actually depends on a slightly weaker notion of MRF width than that used in current papers. This modified MRF width has a more natural correspondence with properties of the RBM.

The algorithm we described only recovers the structure of the MRF of the observed variables, and not its potentials. However, recovering the potentials is easy after the structure is known: e.g., see Section 6.2 in \cite{bresler2019learning}.

The second contribution of this paper is an algorithm for learning RBMs with time complexity $\tilde{O}(n^{2^s+1})$ whose sample complexity does not depend on the minimum potential of the MRF of the observed variables. The algorithm is not guaranteed to recover the correct MRF neighborhoods, but is guaranteed to recover a model with small prediction error (a distinction analogous to that between support recovery and prediction error in regression). This result is of interest because all current algorithms depend on the minimum potential, which can be degenerate even when the RBM itself has non-degenerate interactions. Learning graphical models in order to make predictions was considered before in \cite{bresler2016learning} for trees.

In more detail, we first give a structure learning algorithm that recovers the MRF neighborhoods corresponding to large potentials. Second, we give a regression algorithm that estimates the potentials corresponding to these MRF neighborhoods. Lastly, we quantify the error of the resulting model for predicting the value of an observed variable given the other observed variables. Overall, we achieve prediction error $\epsilon$ with a sample complexity that scales exponentially with $\epsilon^{-1}$, and that otherwise has dependencies comparable to our main algorithm.

\subsection{Overview of structural result}

We present now the intuition and techniques behind our structural result. Theorem \ref{thm:informal_nu_bound_alone} states an informal version of this result.

\begin{theorem}[Informal version of Theorem \ref{thm:nu_bound_alone}]
Fix observed variable $u$ and subsets of observed variables $I$ and $S$, such that all three are disjoint. Suppose that $I$ is a subset of the MRF neighborhood of $u$ and that $|I| \leq d - 1$. Then there exists a subset $I' \subseteq I$ with $|I'| \leq 2^s$ such that
\label{thm:informal_nu_bound_alone}
\[\nu_{u,I'|S} \geq C_{s,d} \cdot \nu_{u,I|S}\]
where $C_{s,d} > 0$ depends on $s$ and $d$, and where $\nu_{u,I',S}$ and $\nu_{u,I|S}$ are proxies of $I(X_u,X_{I'}|X_S)$ and $I(X_u,X_I|X_S)$, respectively.
\end{theorem}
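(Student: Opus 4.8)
The plan is to work with the \emph{effective field} of $X_u$, namely the log-odds $h_u(x) = \log\frac{\Pr[X_u=1\mid X_{-u}=x]}{\Pr[X_u=-1\mid X_{-u}=x]}$, and to exploit the RBM structure of $h_u$. Marginalizing out the latent variables gives $h_u(x) = \mathrm{bias} + \sum_{j} \psi_j(z_j(x))$, where the sum is over latent variables $j$ with $W_{uj}\neq 0$, each $z_j$ is an affine function of $\{x_i : i \in N(j)\setminus\{u\}\}$ with coefficients among the RBM weights, and each $\psi_j$ is a fixed, bounded, strictly monotone, real-analytic function built from the log-partition function of a single latent variable (a shifted difference of softplus/$\log\cosh$ terms), with all derivatives bounded on the $O(d)$-bounded range of the $z_j$. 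Since, by definition, $\nu_{u,I\mid S}$ measures the part of $h_u$ that is genuinely multilinear in $x_I$ after conditioning on $x_S$ — it is controlled by the discrete multi-derivative $\partial_I h_u$, suitably averaged over $x_S$ — the first step is to identify which latent variables can contribute: $\partial_I[\psi_j(z_j)]$ is nonzero only if $I \subseteq N(j)$, and more generally $\psi_j(z_j)$ affects the $x_I$-dependence of $h_u$ only if $N(j)\cap I\neq\emptyset$. As $I$ lies in the MRF neighborhood of $u$, there are at most $s$ such latent variables; call them $j_1,\dots,j_m$ with $m\le s$.

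Second, I would build the witness set $I'$ from the connectivity pattern to $\{j_1,\dots,j_m\}$: declare $i,i'\in I$ equivalent when they are adjacent to exactly the same subset of $\{j_1,\dots,j_m\}$, obtaining at most $2^m\le 2^s$ classes, and let $I'$ contain one representative of each class. The key structural observation is that $\partial_{I'}h_u$ is driven by \emph{the same} latent variables as $\partial_I h_u$: for a relevant $j_k$, the set $N(j_k)\cap I$ is a union of classes, so $I'\subseteq N(j_k)$ holds iff $N(j_k)\supseteq I$; hence the contributing set $K=\{k: N(j_k)\supseteq I\}$ is identical for $\partial_I h_u$ and $\partial_{I'}h_u$, and $|K|\le s$. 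Consequently, passing from $I$ to $I'$ introduces no new terms and therefore no new opportunities for cancellation — this is precisely where sparsity of the latent layer is used, and it is what makes the witness size $2^s$ rather than $d-1$.

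Third — and this is the step I expect to be the main obstacle — I would lower-bound $\nu_{u,I'\mid S}$ by $C_{s,d}\cdot\nu_{u,I\mid S}$. Writing $\partial_I h_u = \sum_{k\in K} D^k_I$ and $\partial_{I'}h_u = \sum_{k\in K} D^k_{I'}$, where $D^k_T$ is the mixed finite difference of $\psi_{j_k}$ over the coordinates $T$ with (bounded) step sizes given by the weights, one has the exact identity writing each $D^k_I$ as a \emph{signed} average of shifts of $D^k_{I'}$ over the values of $x_{I\setminus I'}$. The difficulty is that $\nu_{u,I'\mid S}$ weighs those shifts by the non-uniform conditional distribution rather than with the alternating signs, so a priori a sum that is large in the signed sense could be small in the $\nu$-sense. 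I would resolve this using the analytic structure of the $\psi_{j_k}$: strict monotonicity gives a lower bound on the first derivative over the bounded range, and boundedness of all derivatives controls higher-order differences in terms of lower-order ones, so no single term can be driven to zero; combined with an averaging/pigeonhole argument over the choice of representatives (and over the at most $(d-1)^{2^s}$ candidate sets $I'$, which is a constant in $s,d$), this should yield the quantitative non-cancellation bound with $C_{s,d}>0$ depending only on $s$, $d$, and the parameter bounds absorbed into the normalization of $\nu$. The careful bookkeeping here, and the exact way the definition of the proxy $\nu$ from the earlier sections makes the signed-versus-weighted-average comparison go through, is where I expect the real work to lie.
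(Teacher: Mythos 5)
There is a genuine gap, and it starts with the premise on which your whole reduction rests: the claim that $\nu_{u,I|S}$ ``measures the part of $h_u$ that is genuinely multilinear in $x_I$'' and is controlled by the mixed discrete derivative $\partial_I h_u$. The proxy $\nu_{u,I|S}$ is a conditional covariance-type quantity, $\mathbb{E}\left[\left|\mathbb{P}(X_u,X_I\mid X_S)-\mathbb{P}(X_u\mid X_S)\mathbb{P}(X_I\mid X_S)\right|\right]$, and it can be large while $\partial_I h_u\equiv 0$: if $u$ is strongly coupled to a single $i_1\in I$ and the remaining coordinates of $I$ are irrelevant, then $\nu_{u,I|S}$ is of the order of the pairwise correlation (the event $X_I=G$ still pins down $X_{i_1}$), yet no term of $h_u$ is multilinear in all of $x_I$. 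So largeness of $\nu_{u,I|S}$ does not force largeness of $\partial_I h_u$, and conversely, largeness of $\partial_{I'}h_u$ does not by itself give largeness of $\nu_{u,I'|S}$ --- turning potential-size lower bounds into mutual-information (or $\nu$) lower bounds is precisely the content of Theorem 4.6 of Hamilton et al., which is a separate, substantial result and not what this theorem is about. Your step three, which you yourself flag as the obstacle, is therefore missing the bridge in both directions; the ``signed average of shifts'' identity plus monotonicity/bounded-derivative heuristics does not address the actual difficulty, namely that a signed sum of up to $2^s$ terms, each a product of many bounded factors, could in principle cancel for \emph{every} small sub-product.

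The paper avoids $h_u$ entirely and works with $\nu$ itself: Lemma \ref{lemma:expr_nu} gives an exact expression $\nu_{u,I|S}(x_u,x_I|x_S)=\bigl|\sum_{q_U}a_{q_U}\prod_{i\in I}\sigma(2x_i(J^{(i)}\cdot q+h_i))\bigr|$ with at most $2^{s}$ terms (configurations $q_U$ of the latent variables touching $I$ --- note your $2^s$ comes instead from counting connectivity classes, a different mechanism), and the quantitative heart is the purely algebraic non-cancellation Lemma \ref{lemma_non_cancel_abstract}: in any $\bigl|\sum_{i=1}^m a_i\prod_{j=1}^n x_{i,j}\bigr|$ with $x_{i,j}\in[-1,1]$ one can keep only $m$ of the $n$ factors while losing at most a factor $4^{-m}n^{-m(m+1)}$, and this subset size $m$ is tight (Lemma \ref{lemma_noncanc_imposs}). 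That lemma, together with a pigeonhole step (Lemma \ref{lemma_non_cancel_nu_cond_full}) to pass from the pointwise bound to $\nu_{u,I|S}$, is what produces the constant $C_{s,d}$; nothing in your sketch supplies an analogue of it. Your class-representative construction of $I'$ is a nice observation about which latent variables contribute to full mixed derivatives, but since the theorem is not about mixed derivatives, fixing the argument would require either proving the missing non-cancellation statement for the covariance expression directly (essentially re-deriving the paper's route) or establishing a two-sided quantitative equivalence between $\nu_{u,\cdot|S}$ and derivative-type quantities, which is false in the form you use it.
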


The formal definition of $\nu$ is in Section \ref{sec:prelim}. For the purposes of this section, one can think of it as interchangeable with the mutual information. Furthermore, this section only discusses how to obtain a point-wise version of the bound, $\nu_{u,I'|S}(x_u,x_{I'}|x_S) \geq C'_{s,d} \cdot \nu_{u,I|S}(x_u,x_I|x_S)$, evaluated at specific $x_u$, $x_I$, and $x_S$. It is not too difficult to extend this result to $\nu_{u,I'|S} \geq C_{s,d} \cdot \nu_{u,I|S}$.

In general, estimating the MRF neighborhood of an observed variable is hard because the low-order information between the observed variables can vanish. In that case, to obtain any information about the distribution, it is necessary to work with high-order interactions of the observed variables. Typically, this translates into large running times. 

Theorem \ref{thm:informal_nu_bound_alone} shows that if there is some high-order $\nu_{u,I|S}$ that is non-vanishing, then there is also some $\nu_{u,I'|S}$ with $|I'| \leq 2^s$ that is non-vanishing. That is, the order up to which all the information can vanish is less than $2^s$. Or, in other words, RBMs in which all information up to a large order vanishes are complex and require \textit{many} latent variables.

To prove this result, we need to relate the mutual information in the MRF neighborhood of an observed variable to the number of latent variables connected to it. This is challenging because the latent variables have a non-linear effect on the distribution of the observed variables. This non-linearity makes it difficult to characterize what is ``lost'' when the number of latent variables is small.

The first main step of our proof is Lemma \ref{lemma:expr_nu}, which expresses $\nu_{u,I|S}(x_u,x_I|x_S)$ as a sum over $2^s$ terms, representing the configurations of the latent variables connected to $I$. Each term of the sum is a product over the observed variables in $I$. This expression is convenient because it makes explicit the contribution of the latent variables to $\nu_{u,I|S}(x_u,x_I|x_S)$. The proof of the lemma is an ``interchange of sums'', going from sums over configurations of observed variables to sums over configurations of latent variables.

The second main step is Lemma \ref{lemma_non_cancel_abstract}, which shows that for a sum over $m$ terms of products over $n$ terms, it is possible to reduce the number of terms in the products to $m$, while decreasing the original expression by at most a factor of $C'_{m,n}$, for some $C'_{m,n} > 0$ depending on $n$ and $m$. Combined with Lemma \ref{lemma:expr_nu}, this result implies the existence of a subset $I'$ with $|I'| \leq 2^s$ such that $\nu_{u,I'|S}(x_u,x_{I'}|x_S) \geq C'_{s,d} \cdot \nu_{u,I|S}(x_u,x_I|x_S)$.

\section{Preliminaries and notation}
\label{sec:prelim}

We start with some general notation: $[n]$ is the set $\{1, ..., n\}$; $\mathbbm{1}\{A\}$ is $1$ if the statement $A$ is true and $0$ otherwise; $\binom{n}{k}$ is the binomial coefficient $\frac{n!}{k!(n-k)!}$; $\sigma(x)$ is the sigmoid function $\sigma(x) = \frac{1}{1+e^{-x}}$.

\begin{definition}
A \textnormal{Markov Random Field}\footnote{This definition holds if each assignment of the random variables has positive probability, which is satisfied by the models considered in this paper.} of order $r$ is a distribution over random variables $X \in \{-1, 1\}^n$ with probability mass function
\[\mathbb{P}(X=x) \propto \exp(f(x))\]
where $f$ is a polynomial of order $r$ in the entries of $x$.
\end{definition}
Because $x \in \{-1,1\}^n$, it follows that $f$ is a multilinear polynomial, so it can be represented as 
\[f(x) = \sum_{S \subseteq [n]} \hat{f}(S) \chi_S(x)\]
where $\chi_S(x) = \prod_{i \in S} x_i$. The term $\hat{f}(S)$ is called the Fourier coefficient corresponding to $S$, and it represents the potential associated with the clique $\{X_i\}_{i \in S}$ in the MRF. There is an edge between $X_i$ and $X_j$ in the MRF if and only if there exists some $S \subseteq[n]$ such that $i, j \in S$ and $\hat{f}(S) \neq 0$. Some other relevant notation for MRFs is: let $D$ be the maximum degree of a variable; let $\alpha$ be the minimum absolute-value non-zero Fourier coefficient; let $\gamma$ be the width:
\[\gamma := \max_{u \in [n]} \sum_{\substack{S \subseteq [n]\\ u \in S}} |\hat{f}(S)|.\]

\begin{definition}
A \textnormal{Restricted Boltzmann Machine} is a distribution over observed random variables $X \in \{-1, 1\}^n$ and latent random variables $Y \in \{-1, 1\}^m$ with probability mass function
\[\mathbb{P}(X=x,Y=y) \propto \exp\left( x^T J y + h^T x + g^T y \right)\]
where $J \in \mathbb{R}^{n \times m}$ is an interaction (or weight) matrix, $h \in \mathbb{R}^n$ is an external field (or bias) on the observed variables, and $g \in \mathbb{R}^m$ is an external field (or bias) on the latent variables.
\end{definition}

There exists an edge between $X_i$ and $Y_j$ in the RBM if and only if $J_{i,j} \neq 0$. The resulting graph is bipartite, and all the variables in one layer are conditionally jointly independent given the variables in the other layer. Some other relevant notation for RBMs is: let $d$ be the maximum degree of a latent variable; let $\alpha^*$ be the minimum absolute-value non-zero interaction; let $\beta^*$ be the width:
\[\beta^* := \max\left( \max_{i \in [n]} \sum_{j=1}^m |J_{i,j}| + |h_i|, \max_{j \in [m]} \sum_{i=1}^n |J_{i,j}| + |g_j| \right).\]
In the notation above, we say that an RBM is $(\alpha^*, \beta^*)$-consistent. Typically, to ensure that the RBM is non-degenerate, it is required for $\alpha^*$ not to be too small and for $\beta^*$ not to be too large; otherwise, interactions can become undetectable or deterministic, respectively, both of which lead to non-identifiability \cite{santhanam2012information}.

In an RBM, it is known that there is a lower bound of $\sigma(-2\beta^*)$ and an upper bound of $\sigma(2\beta^*)$ on any probability of the form
\[\mathbb{P}(X_u=x_u|E) \quad \text{ or } \quad \mathbb{P}(Y_u=y_u|E)\]
where $E$ is any event that involves the other variables in the RBM. It is also known that the marginal distribution of the observed variables is given by (e.g., see Lemma 4.3 in \cite{bresler2019learning}):
\[\mathbb{P}(X=x) \propto \exp(f(x)) = \exp\left( \sum_{j=1}^m \rho(J_j \cdot x + g_j) + h^T x \right)\]
where $J_j$ is the $j$-th column of $J$ and $\rho(x) = \log(e^x+e^{-x})$. From this, it can be shown that the marginal distribution is an MRF of order at most $d$. 

We now define $s$, the maximum number of latent variables connected to the MRF neighborhood of an observed variable:
\[s := \max_{u \in [n]} \sum_{j=1}^m \mathbbm{1}\{\exists i \in [n] \setminus \{u\} \text{ and } S \subseteq [n] \text{ s.t. } u, i \in S \text{ and } \hat{f}(S) \neq 0 \text{ and } J_{i,j} \neq 0\}.\]
The MRF neighborhood of an observed variable is a subset of the two-hop neighborhood of the observed variable in the RBM; typically the two neighborhoods are identical. Therefore, an upper bound on $s$ is obtained as the maximum number of latent variables connected to the two-hop neighborhood of an observed variable in the RBM.

Finally, we define a proxy to the conditional mutual information, which is used extensively in our analysis. For random variables $X_u \in \{-1,1\}$, $X_I \in \{-1,1\}^{|I|}$, and $X_S \in \{-1,1\}^{|S|}$, let
\[\nu_{u,I|S} := \mathbb{E}_{R,G}\left[ \mathbb{E}_{X_S} \left[\left| \mathbb{P}(X_u=R,X_I=G|X_S) - \mathbb{P}(X_u=R|X_S) \mathbb{P}(X_I=G|X_S) \right|\right] \right]\]
where $R$ and $G$ come from uniform distributions over $\{-1,1\}$ and $\{-1,1\}^{|I|}$, respectively. This quantity forms a lower bound on the conditional mutual information (e.g., see Lemma 2.5 in \cite{hamilton2017information}):
\[\sqrt{\frac{1}{2} I(X_u;X_I|X_S)} \geq \nu_{u,I|S}.\]
We also define an empirical version of this proxy, with the probabilities and the expectation over $X_S$ replaced by their averages from samples:
\[\hat{\nu}_{u,I|S} := \mathbb{E}_{R,G}\left[ \hat{\mathbb{E}}_{X_S} \left[\left| \hat{\mathbb{P}}(X_u=R,X_I=G|X_S) - \hat{\mathbb{P}}(X_u=R|X_S) \hat{\mathbb{P}}(X_I=G|X_S) \right|\right] \right].\]

\section{Learning Restricted Boltzmann Machines with sparse latent variables}
\label{sec:algorithm}

To find the MRF neighborhood of an observed variable $u$ (i.e., observed variable $X_u$; we use the index and the variable interchangeably when no confusion is possible), our algorithm takes the following steps, similar to those of the algorithm of \cite{hamilton2017information}:
\begin{enumerate}
    \item Fix parameters $s$, $\tau'$, $L$. Fix observed variable $u$. Set $S:=\emptyset$.
    \item While $|S|\leq L$ and there exists a set of observed variables $I \subseteq [n] \setminus \{u\} \setminus S$ of size at most $2^{s}$ such that $\hat{\nu}_{u,I|S} > \tau'$, set $S:=S\cup I$.
    \item For each $i \in S$, if $\hat{\nu}_{u,i|S\setminus \{i\}} < \tau'$, remove $i$ from $S$.
    \item Return set $S$ as an estimate of the neighborhood of $u$.
\end{enumerate}
We use
\[L = 8/(\tau')^2, \quad \tau' = \frac{1}{(4d)^{2^{s}}} \left(\frac{1}{d}\right)^{2^{s}(2^{s}+1)} \tau, \text{ and } \tau = \frac{1}{2} \frac{4\alpha^2 (e^{-2\gamma})^{d+D-1} }{d^{4d} 2^{d+1} \binom{D}{d-1} \gamma e^{2\gamma}},\]
where $\tau$ is exactly as in \cite{hamilton2017information} when adapted to the RBM setting. In the above, $d$ is a property of the RBM, and $D$, $\alpha$, and $\gamma$ are properties of the MRF of the observed variables. 

With high probability, Step 2 is guaranteed to add to $S$ all the MRF neighbors of $u$, and Step 3 is guaranteed to prune from $S$ any non-neighbors of $u$. Therefore, with high probability, in Step 4 $S$ is exactly the MRF neighborhood of $u$. In the original algorithm of \cite{hamilton2017information}, the guarantees of Step 2 were based on this result: if $S$ does not contain the entire neighborhood of $u$, then $\nu_{u,I|S} \geq 2\tau$ for some set $I$ of size at most $d-1$. As a consequence, Step 2 entailed a search over size $d-1$ sets. The analogous result in our setting is given in Theorem \ref{thm:main_nu_bound}, which guarantees the existence of a set $I$ of size at most $2^s$, thus reducing the search to sets of this size. This theorem follows immediately from Theorem \ref{thm:nu_bound_alone}, the key structural result of our paper.

\begin{theorem}
\label{thm:nu_bound_alone}
Fix observed variable $u$ and subsets of observed variables $I$ and $S$, such that all three are disjoint. Suppose that $I$ is a subset of the MRF neighborhood of $u$ and that $|I| \leq d-1$. Then there exists a subset $I' \subseteq I$ with $|I'| \leq 2^s$ such that
\[\nu_{u,I'|S} \geq \frac{1}{(4d)^{2^{s}}} \left(\frac{1}{d}\right)^{2^{s}(2^{s}+1)} \nu_{u, I|S}.\]
\end{theorem}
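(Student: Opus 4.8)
The plan is to follow the two-step strategy announced in the overview: first rewrite $\nu_{u,I|S}(x_u,x_I|x_S)$ in a form that exposes the latent variables, and then apply an abstract ``non-cancellation'' argument to prune $I$ down to a set $I'$ of size at most $2^s$. I would work pointwise throughout — that is, prove $\nu_{u,I'|S}(x_u,x_{I'}|x_S)\geq C'_{s,d}\cdot \nu_{u,I|S}(x_u,x_I|x_S)$ for every fixed $(x_u,x_I,x_S)$ — and only at the very end average over $R,G$ and $X_S$ to recover the stated inequality for $\nu$ itself (this last averaging is the ``not too difficult'' extension mentioned after Theorem~\ref{thm:informal_nu_bound_alone}, and will cost a factor that gets absorbed into the $(4d)^{-2^s}$-type constant).

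The first main step invokes Lemma~\ref{lemma:expr_nu}: it writes the pointwise quantity $\mathbb{P}(X_u=x_u,X_I=x_I\mid X_S=x_S)-\mathbb{P}(X_u=x_u\mid X_S=x_S)\mathbb{P}(X_I=x_I\mid X_S=x_S)$ as a sum over the $2^s$ configurations of the latent variables adjacent to $I$, where each summand is a product over $i\in I$ of a factor depending only on $x_i$ and that latent configuration (times something common). Concretely one conditions on $X_S=x_S$, uses the RBM conditional-independence structure to factor the joint over $(X_I, \text{relevant }Y)$, and then performs the ``interchange of sums'' — summing out the observed variables in $I$ last rather than the latents. This puts $\nu_{u,I|S}(x_u,x_I|x_S)$ into exactly the shape ``a sum of $m:=2^s$ terms, each a product of $|I|\le d-1$ terms'' needed for the second step.

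The second main step is Lemma~\ref{lemma_non_cancel_abstract} applied with $m=2^s$ and (at most) $n=d-1$ factors per product: given $\sum_{k=1}^{m}\prod_{i\in I} a_{k,i}$ with absolute value at least $t>0$, there is a subset $I'\subseteq I$, $|I'|\le m$, with $\bigl|\sum_{k=1}^{m}\prod_{i\in I'} a_{k,i}\bigr|\ge C'_{m,n}\, t$. Here is where the RBM-specific bounds enter: each individual factor $a_{k,i}$ is controlled in magnitude both above and below using $\mathbb{P}(\cdot\mid E)\in[\sigma(-2\beta^*),\sigma(2\beta^*)]$, so that throwing away a factor changes the product by a bounded multiplicative amount — this boundedness is what prevents the many latent configurations from conspiring to cancel. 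One would then track how the bound from Lemma~\ref{lemma_non_cancel_abstract}, in terms of $s$ and $d$, combines with the loss from the pointwise-to-$\nu$ averaging, and check it simplifies to the claimed $\frac{1}{(4d)^{2^s}}\bigl(\frac{1}{d}\bigr)^{2^s(2^s+1)}$; the exponents $2^s$ and $2^s(2^s+1)$ strongly suggest the constant $C'_{m,n}$ in the abstract lemma is of the form roughly $(1/n)^{m(m+1)}$ with an extra $(1/n)^{m}$-ish factor, so the arithmetic should line up.

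I expect the main obstacle to be the second step, i.e., proving the abstract non-cancellation lemma with a quantitatively good constant and correctly matching its hypotheses to what Lemma~\ref{lemma:expr_nu} produces. The subtlety is that a sum of products can be small even when no single product is small (massive cancellation), so one needs an argument — presumably by induction on $m$, picking a coordinate $i$ on which the ``profile'' $(a_{1,i},\dots,a_{m,i})$ across the $m$ terms is sufficiently non-degenerate, factoring the sum along $i=\pm 1$ (recall $x_i\in\{-1,1\}$), and recursing on a sub-sum with fewer terms — and then verifying that the non-degeneracy one needs is always available thanks to the two-sided bounds on the factors. Keeping the resulting constant from collapsing to something like $d^{-\exp(s)}$ rather than the stated $d^{-\mathrm{poly}(2^s)}$ will require care in how the induction is set up. Everything else — the conditional-independence bookkeeping in Lemma~\ref{lemma:expr_nu}, and the final averaging over $R$, $G$, $X_S$ — is routine by comparison.
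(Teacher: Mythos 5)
Your scaffolding matches the paper's: decompose $\nu_{u,I|S}(x_u,x_I|x_S)$ pointwise via Lemma~\ref{lemma:expr_nu} into a sum over the $2^s$ configurations of the latent variables adjacent to $I$, each summand a coefficient times $\prod_{i\in I}\sigma(2x_i(J^{(i)}\cdot q+h_i))$, then prune $I$ by a non-cancellation argument, then average. But the heart of the theorem is precisely the abstract non-cancellation lemma (Lemma~\ref{lemma_non_cancel_abstract}), which you leave unproven, and the mechanism you sketch for it is wrong. You claim the two-sided bounds $\sigma(\pm 2\beta^*)$ on the individual factors are ``what prevents the many latent configurations from conspiring to cancel,'' but lower bounds on factors cannot do that: deleting a factor rescales different summands by different amounts, so a large sum can become exactly zero after deletion, and the paper's tightness construction (Lemma~\ref{lemma_noncanc_imposs}) gives factors that may be taken bounded away from zero for which \emph{every} subset of size at most $m-1$ cancels exactly. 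The actual proof uses only $|x_{i,j}|\le 1$ and no lower bound at all: an induction on the number of summands $m$ in which one subtracts a fixed row's last factor $x_{m,m+1}$ to annihilate one summand, combined with the triangle inequality and the contradiction hypothesis that all small-subset sums are tiny (giving the $1/(2^m-1)$ loss for $n=m+1$), followed by a block-partition recursion — split the $n$ factors into $m+1$ blocks, treat each block-product as a single variable, and iterate about $\tfrac{m+1}{2}\ln n$ times — which is what produces the $(1/n)^{m(m+1)}$ exponent. Your proposed induction (``pick a coordinate where the profile is non-degenerate, factor along $x_i=\pm 1$'') also conflates the spin values $x_i\in\{-1,1\}$ with the actual product factors $\sigma(2x_i(J^{(i)}\cdot q+h_i))\in(0,1)$, so as written it does not engage with the object Lemma~\ref{lemma:expr_nu} produces.

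A second, smaller gap: the final passage from the pointwise bound to $\nu_{u,I|S}$ is not mere averaging. The subset $I'$ guaranteed by the pointwise statement (Lemma~\ref{lemma_non_cancel_nu_indiv_cond}) depends on the configuration $(x_u,x_I,x_S)$; to obtain one subset valid for the averaged quantity the paper applies the pigeonhole principle over the at most $|I|^{2^s}$ candidate subsets and then a multiplicity argument relating the resulting partial sum to $\nu_{u,I'|S}$ (Lemma~\ref{lemma_non_cancel_nu_cond_full}). This is exactly where the constant degrades from $4^{-2^s}$ to $(4|I|)^{-2^s}$, so saying the averaging ``costs a factor that gets absorbed'' skips the only nontrivial point of that step.
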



Using the result in Theorem \ref{thm:nu_bound_alone}, we now state and prove Theorem \ref{thm:main_nu_bound}.

\begin{theorem}
\label{thm:main_nu_bound}
Fix an observed variable $u$ and a subset of observed variables $S$, such that the two are disjoint. Suppose that $S$ does not contain the entire MRF neighborhood of $u$. Then there exists some subset $I$ of the MRF neighborhood of $u$ with $|I| \leq 2^s$ such that
\[\nu_{u,I|S} \geq \frac{1}{(4d)^{2^{s}}} \left(\frac{1}{d}\right)^{2^{s}(2^{s}+1)} \frac{4\alpha^2 (e^{-2\gamma})^{d+D-1} }{d^{4d} 2^{d+1} \binom{D}{d-1} \gamma e^{2\gamma}} = 2 \tau'.\]
\end{theorem}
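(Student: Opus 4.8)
The plan is to combine Theorem \ref{thm:nu_bound_alone} with the known lower bound from \cite{hamilton2017information} (adapted to the RBM setting) on the best size-$(d-1)$ mutual information proxy when $S$ misses a neighbor of $u$. Concretely, since the marginal distribution of the observed variables is an MRF of order at most $d$ with maximum degree $D$, minimum nonzero Fourier coefficient $\alpha$, and width $\gamma$, the analysis in \cite{hamilton2017information} yields that whenever $S$ does not contain the full MRF neighborhood of $u$, there is a subset $I$ of the MRF neighborhood of $u$ with $|I| \leq d-1$ such that
\[
\nu_{u,I|S} \;\geq\; 2\tau \;=\; \frac{4\alpha^2 (e^{-2\gamma})^{d+D-1}}{d^{4d}\, 2^{d+1} \binom{D}{d-1}\, \gamma\, e^{2\gamma}}.
\]
I would state this as the first step, citing Lemma 2.5 / the structural result of \cite{hamilton2017information} (and noting that $\tau$ above is exactly their $\tau$ transported to the MRF induced by the RBM).

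Next I would invoke Theorem \ref{thm:nu_bound_alone} on this particular triple $(u, I, S)$: since $I$ is a subset of the MRF neighborhood of $u$ and $|I| \leq d-1$, there exists $I' \subseteq I$ with $|I'| \leq 2^s$ such that
\[
\nu_{u,I'|S} \;\geq\; \frac{1}{(4d)^{2^s}} \left(\frac{1}{d}\right)^{2^s(2^s+1)} \nu_{u,I|S}.
\]
Chaining the two inequalities and renaming $I'$ to $I$ gives exactly the claimed bound, with the right-hand side equal to $2\tau'$ by the definition of $\tau'$ in Section \ref{sec:algorithm}. The containment $I' \subseteq I \subseteq$ (MRF neighborhood of $u$) and $|I'| \leq 2^s$ deliver the remaining conclusions of the statement.

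The only real content is whether the size-$(d-1)$ lower bound of \cite{hamilton2017information} transfers verbatim; since the marginal of the observed variables is an MRF of order at most $d$ and all of $D$, $\alpha$, $\gamma$ are defined precisely to be its degree, minimum potential, and width, this is immediate, and I would simply remark that their argument is agnostic to how the MRF arose. Hence the main (indeed, the sole) substantive obstacle lives entirely in Theorem \ref{thm:nu_bound_alone}; the proof of Theorem \ref{thm:main_nu_bound} itself is a two-line composition. I would therefore keep this proof short: one display for the \cite{hamilton2017information} bound, one application of Theorem \ref{thm:nu_bound_alone}, and one line identifying the product of constants with $2\tau'$.
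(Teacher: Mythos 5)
Your proposal is correct and follows essentially the same two-step argument as the paper: invoke the structural lower bound of \cite{hamilton2017information} (specifically their Theorem 4.6, rather than Lemma 2.5, which is the mutual-information comparison) to obtain a set $I$ of size at most $d-1$ with $\nu_{u,I|S} \geq 2\tau$, then apply Theorem \ref{thm:nu_bound_alone} to extract $I' \subseteq I$ with $|I'| \leq 2^s$ and chain the inequalities to get $2\tau'$. No gaps; the paper's proof is exactly this composition.
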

\begin{proof}
By Theorem 4.6 in \cite{hamilton2017information}, we have that there exists some subset $I$ of neighbors of $u$ with $|I| \leq d-1$ such that 
\[\nu_{u,I|S} \geq \frac{4\alpha^2 (e^{-2\gamma})^{d+D-1} }{d^{4d} 2^{d+1} \binom{D}{d-1} \gamma e^{2\gamma}} = 2 \tau.\]
Then, by Theorem \ref{thm:nu_bound_alone}, we have that there exists some subset $I' \subseteq I$ with $|I'| \leq 2^s$ such that
\[\nu_{u,I'|S} \geq \frac{1}{(4d)^{2^{s}}} \left(\frac{1}{d}\right)^{2^{s}(2^{s}+1)} 2 \tau = \frac{1}{(4d)^{2^{s}}} \left(\frac{1}{d}\right)^{2^{s}(2^{s}+1)} \frac{4\alpha^2 (e^{-2\gamma})^{d+D-1} }{d^{4d} 2^{d+1} \binom{D}{d-1} \gamma e^{2\gamma}} = 2 \tau'.\]
\end{proof}

Theorem \ref{thm:algorithm} states the guarantees of our algorithm. The analysis is very similar to that in \cite{hamilton2017information}, and is deferred to the Appendix B. Then, Section \ref{sec:structural} sketches the proof of Theorem \ref{thm:nu_bound_alone}.

\begin{theorem}
\label{thm:algorithm}
Fix $\omega > 0$. Suppose we are given $M$ samples from an RBM, where
\[M \geq \frac{60 \cdot 2^{2L}}{(\tau')^2 (e^{-2\gamma})^{2L}} \left( \log(1/\omega) + \log(L+2^s+1) + (L+2^s+1)\log(2n) + \log 2 \right).\]
Then with probability at least $1-\omega$, our algorithm, when run from each observed variable $u$, recovers the correct neighborhood of $u$. Each run of the algorithm takes $O(MLn^{2^s+1})$ time.
\end{theorem}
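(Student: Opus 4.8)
The plan is to verify that the generic analysis of the mutual-information-based structure learning algorithm of \cite{hamilton2017information} goes through in our setting, with the only modifications being (a) the search in Step~2 is over sets of size at most $2^s$ rather than $d-1$, and (b) the threshold $\tau'$ is used in place of $\tau$. The argument naturally splits into a correctness part and a sample-complexity part, followed by the runtime count.

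First I would establish correctness assuming exact values of $\nu$. The loop in Step~2 can run at most $L = 8/(\tau')^2$ times: each iteration adds a set $I$ with $\nu_{u,I|S} > \tau'/2$ (after accounting for the estimation error), and a standard potential/entropy argument — as in \cite{hamilton2017information}, using $\sqrt{\tfrac12 I(X_u;X_I\mid X_S)} \ge \nu_{u,I|S}$ and the chain rule for mutual information — bounds the total number of such additions by $8/(\tau')^2$, since each contributes $\Omega((\tau')^2)$ to $I(X_u; X_S)\le \log 2$. When the loop terminates, Theorem~\ref{thm:main_nu_bound} (the contrapositive) guarantees $S$ contains the entire MRF neighborhood of $u$: if it did not, there would be a set $I$ of size at most $2^s$ with $\nu_{u,I|S}\ge 2\tau'$, contradicting termination. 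Finally, Step~3 prunes any $i\in S$ that is a non-neighbor: if $i$ is not an MRF neighbor of $u$, then $X_u$ and $X_i$ are conditionally independent given $X_{S\setminus\{i\}}$ once $S\setminus\{i\}$ contains the neighborhood, so $\nu_{u,i\mid S\setminus\{i\}} = 0 < \tau'$; and no true neighbor is removed because for a true neighbor the non-degeneracy bound forces $\nu_{u,i\mid S\setminus\{i\}}$ above $2\tau'$ (this is exactly the lower bound feeding into the definition of $\tau$). Hence $S$ is exactly the MRF neighborhood of $u$.

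Next I would handle the sampling error: I need $|\hat\nu_{u,I|S} - \nu_{u,I|S}| < \tau'/2$ uniformly over all $u$, all $S$ with $|S|\le L$, and all $I$ with $|I|\le 2^s$ that are queried. The quantity $\hat\nu_{u,I|S}$ is an average over configurations of $X_S$ of empirical conditional correlations; conditioning on $X_S$ restricts to a random subset of samples whose size concentrates around $M\cdot \mathbb P(X_S = x_S)$, and the RBM bound $\mathbb P(X_S=x_S)\ge \sigma(-2\beta^*)^{|S|}$ — translated to the MRF side this is at least $(e^{-2\gamma})^{|S|}/2^{|S|}$ up to constants — ensures enough samples land in each conditioning bucket. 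A Hoeffding/Chernoff bound for each conditional empirical probability, then a union bound over the $\le 2^{2L}$ configurations of $X_S$, over the $\le (2n)^{L+2^s+1}$ choices of $(u,S,I)$ (there are at most $L+2^s$ indices in $S\cup I$ besides $u$), and over the two probability types, yields exactly the stated form of $M$; the factor $2^{2L}/(e^{-2\gamma})^{2L}$ and the $(\tau')^{-2}$ come from needing accuracy $\tau'/2$ on averages of terms of magnitude $\le 1$ while controlling the small conditioning probabilities. This is the part I expect to be the main obstacle, since one must carefully track how the error in $\hat{\mathbb P}(\cdot\mid X_S)$ propagates through the product $\hat{\mathbb P}(X_u=R\mid X_S)\hat{\mathbb P}(X_I=G\mid X_S)$ and the outer empirical expectation over $X_S$, and verify that the exponent in $2^{2L}$ and $(e^{-2\gamma})^{-2L}$ is tight enough to match the claimed bound — but this is precisely the computation carried out in \cite{hamilton2017information}, now with $2^s$ in the role of $d-1$.

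Finally, the runtime: one run of the algorithm from a fixed $u$ executes Step~2 at most $L$ times, and each iteration scans all $I\subseteq[n]\setminus\{u\}\setminus S$ with $|I|\le 2^s$, of which there are $O(n^{2^s})$, computing $\hat\nu_{u,I|S}$ in $O(M\cdot n)$ time (one pass over the $M$ samples, bucketing by $X_S$, for each of $O(n)$... in fact $O(1)$ per bucket after sorting, but the generous bound $O(Mn)$ absorbs this); Step~3 is cheaper. This gives $O(MLn^{2^s+1})$ per starting vertex, as claimed. Running from all $n$ vertices multiplies by $n$, which is absorbed into the $\tilde O$ notation in the introduction but is left explicit as a per-vertex cost in the statement.
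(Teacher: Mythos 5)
Your proposal is correct and follows essentially the same route as the paper: both reduce the argument to the analysis of \cite{hamilton2017information} (their Lemmas 5.3--5.7), with the added-set size $2^s$ and threshold $\tau'$ in place of $d-1$ and $\tau$, using the mutual-information potential argument to bound the number of additions, Theorem \ref{thm:main_nu_bound} to certify that Step 2 captures all neighbors and that no neighbor is pruned, conditional independence to prune non-neighbors, and the same union-bound concentration statement for the sample complexity. The only (immaterial) difference is in the runtime bookkeeping, where you charge a generous $O(Mn)$ per evaluation of $\hat{\nu}_{u,I|S}$ per starting vertex while the paper charges $O(M)$ per evaluation and absorbs the factor $n$ from running at all vertices; both yield the stated $O(MLn^{2^s+1})$ bound.
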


\section{Proof sketch of structural result}
\label{sec:structural}
The proofs of the lemmas in this section can be found in the Appendix A.
Consider the mutual information proxy when the values of $X_u$, $X_I$, and $X_S$ are fixed:
\begin{align*}
&\nu_{u,I|S}(x_u,x_I|x_S)\\
&\quad = \left| \mathbb{P}(X_u=x_u,X_I=x_I|X_S=x_S) - \mathbb{P}(X_u=x_u|X_S=x_S) \mathbb{P}(X_I=x_I|X_S=x_S) \right|.
\end{align*}

We first establish a version of Theorem \ref{thm:nu_bound_alone} for $\nu_{u,I|S}(x_u,x_I|x_S)$, and then generalize it to $\nu_{u,I|S}$.

In Lemma \ref{lemma:expr_nu}, we express $\nu_{u,I|S}(x_u,x_I|x_S)$ as a sum over configurations of latent variables $U$ connected to observed variables in $I$. Note that $|U| \leq s$, so the summation is over at most $2^s$ terms. 

\begin{lemma}
\label{lemma:expr_nu}
Fix observed variable $u$ and subsets of observed variables $I$ and $S$, such that all three are disjoint. Suppose that $I$ is a subset of the MRF neighborhood of $u$. Then
\[\nu_{u,I|S}(x_{u},x_{I}|x_S)= \left| \sum_{q_U \in \{-1,1\}^{|U|}} \left(\sum_{q_{\sim U} \in \{-1,1\}^{m - |U|}} \bar{f}(q, x_u, x_S)\right) \prod_{i \in I} \sigma(2x_i(J^{(i)}\cdot q + h_i)) \right|\]
for some function $\bar{f}$, where $U$ is the set of latent variables connected to observed variables in $I$, $J^{(i)}$ is the $i$-th row of $J$, and the entries of $q_{\sim U}$ in the expression $J^{(i)} \cdot q$ are arbitrary.
\end{lemma}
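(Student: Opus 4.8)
The plan is to expand each of the three probabilities appearing in $\nu_{u,I|S}(x_u,x_I|x_S)$ by conditioning on the full latent configuration, and then to reorganize the resulting sum by separating out the latent variables in $U$. Write a latent configuration as $q=(q_U,q_{\sim U})\in\{-1,1\}^{|U|}\times\{-1,1\}^{m-|U|}$. The defining property of an RBM is that, conditioned on $Y=q$, the observed variables are jointly independent with $\mathbb{P}(X_i=x_i|Y=q)=\sigma(2x_i(J^{(i)}\cdot q+h_i))$; in particular $X_u$ and $X_I$ are conditionally independent of $X_S$ given $Y$. Therefore
\[\mathbb{P}(X_u=x_u,X_I=x_I|X_S=x_S)=\sum_{q\in\{-1,1\}^m}\sigma(2x_u(J^{(u)}\cdot q+h_u))\Bigl(\prod_{i\in I}\sigma(2x_i(J^{(i)}\cdot q+h_i))\Bigr)\mathbb{P}(Y=q|X_S=x_S),\]
and analogously $\mathbb{P}(X_u=x_u|X_S=x_S)$ and $\mathbb{P}(X_I=x_I|X_S=x_S)$ are the same sum but with the product over $i\in I$ (resp.\ the factor $\sigma(2x_u(\,\cdot\,))$) deleted.

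The key step --- the ``interchange of sums'' --- is to note that $\prod_{i\in I}\sigma(2x_i(J^{(i)}\cdot q+h_i))$ depends on $q$ only through $q_U$: by the definition of $U$, every $i\in I$ has $J_{i,j}=0$ for $j\notin U$, so $J^{(i)}\cdot q=\sum_{j\in U}J_{i,j}q_j$ (this is the sense in which the entries of $q_{\sim U}$ inside $J^{(i)}\cdot q$ may be taken arbitrary). I split each sum over $q$ into an outer sum over $q_U$ and an inner sum over $q_{\sim U}$ and pull this product outside the inner sum. For $\mathbb{P}(X_u=x_u,X_I=x_I|X_S=x_S)$ the inner sum becomes $\sum_{q_{\sim U}}\sigma(2x_u(J^{(u)}\cdot q+h_u))\,\mathbb{P}(Y=q|X_S=x_S)$; for $\mathbb{P}(X_I=x_I|X_S=x_S)$ it marginalizes $q_{\sim U}$ and becomes $\mathbb{P}(Y_U=q_U|X_S=x_S)$. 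Multiplying the latter by the $q$-independent factor $\mathbb{P}(X_u=x_u|X_S=x_S)$ and re-expanding $\mathbb{P}(Y_U=q_U|X_S=x_S)=\sum_{q_{\sim U}}\mathbb{P}(Y=q|X_S=x_S)$ puts the product-of-marginals term into the same shape, with the \emph{same} outer factor $\prod_{i\in I}\sigma(\cdot)$.

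Subtracting the two expressions term by term in $q_U$ now yields exactly the claimed identity, with
\[\bar f(q,x_u,x_S)=\bigl(\sigma(2x_u(J^{(u)}\cdot q+h_u))-\mathbb{P}(X_u=x_u|X_S=x_S)\bigr)\,\mathbb{P}(Y=q|X_S=x_S),\]
and taking absolute values finishes the proof. (Using that $I$ is contained in the MRF neighborhood of $u$, every latent variable adjacent to some $i\in I$ is counted by the definition of $s$, so $|U|\le s$ and the outer sum has at most $2^s$ terms, as asserted in the surrounding text.) I do not expect a genuine obstacle: the only care needed is bookkeeping --- in particular, reattaching the $q$-independent factor $\mathbb{P}(X_u=x_u|X_S=x_S)$ to an inner sum so that both terms share \emph{identically} the same outer product $\prod_{i\in I}\sigma(2x_i(J^{(i)}\cdot q+h_i))$, which is what lets the subtraction be carried out inside a single sum over $q_U$ --- together with being careful to invoke the conditional independence of $(X_u,X_I)$ from $X_S$ given $Y$ rather than a weaker statement.
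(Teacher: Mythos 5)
Your proof is correct and takes essentially the same route as the paper: the paper's weight $\lambda(q,x_S)$ is exactly $\mathbb{P}(Y=q\mid X_S=x_S)$ and its $\bar f(q,x_u,x_S)=\lambda(q,x_S)\bigl[\sigma(2x_u(J^{(u)}\cdot q+h_u))-\mathbb{E}_{q'\sim\lambda(\cdot,x_S)}\sigma(2x_u(J^{(u)}\cdot q'+h_u))\bigr]$ coincides with your $\bar f$, the only cosmetic difference being that you obtain the expansion over latent configurations by conditioning on $Y$ and using the RBM's conditional independence, whereas the paper derives the same identity algebraically from the marginal MRF form $f(x)=\sum_j\rho(J_j\cdot x+g_j)+h^Tx$ in its auxiliary lemmas. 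The final step---splitting the sum over $q$ into an outer sum over $q_U$ and an inner sum over $q_{\sim U}$, justified by $J_{i,j}=0$ for $i\in I$, $j\notin U$, with $|U|\le s$ following from $I$ being contained in the MRF neighborhood of $u$---is identical to the paper's.
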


Lemma \ref{lemma_non_cancel_abstract} gives a generic non-cancellation result for expressions of the form $\left|\sum_{i=1}^m a_i \prod_{j=1}^n x_{i,j}\right|$. 
Then, Lemma \ref{lemma_non_cancel_nu_indiv_cond} applies this result to the form of $\nu_{u,I|S}(x_u,x_I|x_S)$ in Lemma \ref{lemma:expr_nu}, and guarantees the existence of a subset $I' \subseteq I$ with $|I'| \leq 2^s$ such that $\nu_{u,I'|S}(x_u,x_{I'}|x_S)$ is within a bounded factor of $\nu_{u,I|S}(x_u,x_I|x_S)$.

\begin{lemma}
\label{lemma_non_cancel_abstract}
Let $x_{1,1}, ..., x_{m,n} \in [-1, 1]$, with $n > m$. Then, for any $a \in \mathbb{R}^m$, there exists a subset $S \subseteq [n]$ with $|S| \leq m$ such that
\[\left| \sum_{i=1}^m a_i \prod_{j \in S} x_{i,j} \right| \geq \frac{1}{4^m} \left(\frac{1}{n}\right)^{m(m+1)} \left| \sum_{i=1}^m a_i \prod_{j=1}^{n} x_{i,j} \right|.\]
\end{lemma}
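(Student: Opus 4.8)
The plan is to argue by induction on $m$, the number of terms in the outer sum. Write $P_S(x) := \sum_{i=1}^m a_i \prod_{j \in S} x_{i,j}$ for $S \subseteq [n]$, so the goal is to pass from $S = [n]$ down to some $S$ with $|S| \le m$ while losing at most the stated factor. The base case $m = 0$ is vacuous (the empty sum is $0$), and for $m = 1$ a single factor already has $|S| \le 1$ after we delete all but (at most) one index; more useful as a base is to observe that once $|S| \le m$ we are done, so the induction only needs to handle $|S| = n > m$. The inductive step should remove \emph{one} index $j^\star$ from the current set $S$ of size $k > m$, at the cost of a factor depending on $k$ (or on $n$), and then recurse; after at most $n - m$ removals we reach size $m$, and multiplying the per-step factors gives something like $\prod_{k=m+1}^{n}(\text{loss at size }k)$, which I expect to collapse into the clean bound $\frac{1}{4^m}(1/n)^{m(m+1)}$.

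The key step — and the main obstacle — is the single-index removal: given $|S| = k > m$ and a value of $|P_S|$, find $j^\star \in S$ such that replacing the factor $x_{i,j^\star}$ by, say, $1$ (equivalently, deleting index $j^\star$) does not decrease $|P_{S}|$ by more than a controlled factor. The natural idea is to exploit that $x \mapsto P_S(x)$, viewed as a function of the $k$ variables $\{x_{i,j}: i\in[m], j\in S\}$, is multilinear, and to use an averaging or extreme-point argument. Concretely, $P_S$ restricted to each coordinate $x_{i,j}$ is affine; since $k > m$, the $m$ ``coefficient vectors'' (the column vectors $(x_{1,j},\dots,x_{m,j})^T$ for $j\in S$, or rather the way each $a_i\prod_{j\in S}x_{i,j}$ depends on a fixed $j$) are forced into a low-dimensional relation, so some index is ``redundant'' enough that it can be set to a boundary value without killing the sum. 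I would try to make this precise by a pigeonhole/linear-algebra claim: among $k > m$ indices in $S$, there is one, $j^\star$, such that the partial derivative $\partial P_S/\partial x_{i,j^\star}$ pattern across $i$ is aligned with $P_S$ itself well enough that $|P_S|_{x_{i,j^\star}=1}| \ge c_k |P_S|$ for $c_k \ge 1/(4n)$-ish. An alternative, possibly cleaner route: sort by magnitude of the $a_i$'s or group terms by which monomials survive, and directly pick the index appearing in the fewest ``conflicting'' roles.

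Assuming the single-step bound of the form $|P_{S\setminus\{j^\star\}}| \ge \frac{1}{4}\cdot\frac{1}{n^{?}}\,|P_S|$ with an exponent that, summed over the $n - m < n$ removals and the $m$ surviving terms, telescopes to $m(m+1)$, the remaining work is bookkeeping: verify the exponent arithmetic ($\sum$ over at most $n$ steps, each contributing at most $O(m/n)$ in the exponent of $1/n$, plus the constant $4^{-m}$ accumulating over $m$-dependent steps). I would finish by checking the edge cases where some $x_{i,j}=0$ (which can only make the left side smaller or equal and is handled by the same monotonicity used in the removal step) and where cancellation makes $P_{[n]}=0$ (trivially true). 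The crux is entirely in establishing the per-index removal with the right dependence; everything else is induction and arithmetic.
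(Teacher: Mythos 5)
Your plan has two genuine gaps, and the second one is structural: even if you could prove your per-index removal step, the accounting would not give the stated bound. You propose to delete one index at a time from $S$, losing a factor of roughly $1/4$ or $1/(4n)$ per deletion, and to iterate $n-m$ times. Any per-step loss factor bounded away from $1$ accumulates over $n-m$ steps to a total loss that is exponential in $n$ (e.g.\ $(4n)^{-(n-m)}$), whereas the lemma demands a loss of only $\frac{1}{4^m}\left(\frac{1}{n}\right)^{m(m+1)}$, i.e.\ an exponent depending on $m$ alone. Your remark that the exponents ``telescope to $m(m+1)$'' is not substantiated and cannot hold for this one-at-a-time scheme. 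The paper avoids this by a blocking trick you are missing: partition the current index set into $m+1$ blocks, treat each block product $\prod_{k \in Q_j} x_{i,k} \in [-1,1]$ as a single variable, and apply an $(m+1)\to m$ removal lemma to discard an \emph{entire block} (a constant fraction of the indices) at a cost of $\frac{1}{2^m-1}$; only $O(m\log n)$ applications are then needed, and $(2^m)^{-O(m\log n)}$ is exactly what yields the $\frac{1}{4^m} n^{-m(m+1)}$ form.

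The first gap is that your key single-index (or single-block) removal claim is only gestured at, not proved, and the quantitative form you guess is the wrong one. The provable statement (the paper's Lemma \ref{lemma_non_cancel_abstract_plusone}) is: among $m+1$ product factors, one can be dropped while retaining at least a $\frac{1}{2^m-1}$ fraction of the value --- a loss exponential in $m$, not a benign $1/(4n)$. Its proof is not a pigeonhole or ``aligned partial derivatives'' argument; it is an induction on $m$ in which one writes $x_{i,m+1} = (x_{i,m+1}-x_{m,m+1}) + x_{m,m+1}$, so that the shifted sum has a vanishing $m$-th term and the inductive hypothesis for $m-1$ applies, followed by a triangle-inequality bookkeeping under the assumption that all candidate subsets are small. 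Some such cancellation-sensitive argument is necessary: the paper's tightness example (Lemma \ref{lemma_noncanc_imposs}) shows configurations where every subset of size at most $m-1$ gives exactly zero while the full product does not, so no soft linear-algebra ``redundant index'' heuristic can yield a loss factor better than one exponential in $m$ per removal. In short, you would need both the correct removal lemma and the blocking recursion; as written, neither the crux step nor the arithmetic goes through.
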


We remark that, in this general form, Lemma \ref{lemma_non_cancel_abstract} is optimal in the size of the subset that it guarantees not to be cancelled. That is, there are examples with $\sum_{i=1}^m a_i \prod_{j=1}^n x_{i,j} \neq 0$ but $\sum_{i=1}^m a_i \prod_{j \in S} x_{i,j} = 0$ for all subsets $S \subseteq [n]$ with $|S| \leq m - 1$. See the Appendix A for a more detailed discussion. 

\begin{lemma}
\label{lemma_non_cancel_nu_indiv_cond}
Fix observed variable $u$ and subsets of observed variables $I$ and $S$, such that all three are disjoint. Suppose that $I$ is a subset of the MRF neighborhood of $u$. Fix any assignments $x_u$, $x_I$, and $x_S$. Then there exists a subset $I' \subseteq I$ with $|I'| \leq 2^s$ such that
\[\nu_{u, I'|S}(x_u, x_{I'}|x_S) \geq \frac{1}{4^{2^s}} \left(\frac{1}{|I|}\right)^{2^s(2^s+1)} \nu_{u, I | S}(x_u, x_{I} | x_S)\]
where $x_{I'}$ agrees with $x_I$.
\end{lemma}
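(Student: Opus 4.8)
The plan is to combine the two structural ingredients already established: the closed-form expression for $\nu_{u,I|S}(x_u,x_I|x_S)$ from Lemma \ref{lemma:expr_nu} and the abstract non-cancellation bound of Lemma \ref{lemma_non_cancel_abstract}. First I would write, using Lemma \ref{lemma:expr_nu},
\[
\nu_{u,I|S}(x_u,x_I|x_S) = \left| \sum_{q_U \in \{-1,1\}^{|U|}} a_{q_U} \prod_{i \in I} \sigma\!\left(2x_i(J^{(i)}\cdot q + h_i)\right) \right|,
\]
where $a_{q_U} := \sum_{q_{\sim U}} \bar f(q, x_u, x_S)$ and $U$ is the set of latent variables adjacent to $I$. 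The outer sum has at most $2^{|U|} \le 2^s$ terms, and for each fixed $q_U$ the inner product is over $|I|$ factors, each of which is a sigmoid value lying in $(0,1) \subseteq [-1,1]$. So this is exactly an expression of the form $\left|\sum_{i=1}^{m} a_i \prod_{j=1}^{n} x_{i,j}\right|$ with $m = 2^{|U|} \le 2^s$ and $n = |I|$.

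Next I would apply Lemma \ref{lemma_non_cancel_abstract}. To invoke it I need $n > m$, i.e.\ $|I| > 2^{|U|}$; if instead $|I| \le 2^{|U|} \le 2^s$ then $I$ itself already has size at most $2^s$, so I can take $I' = I$ and the claimed inequality holds trivially (the constant on the right is at most $1$). In the remaining case, Lemma \ref{lemma_non_cancel_abstract} hands me a subset $T$ of the index set $I$ with $|T| \le m \le 2^s$ such that
\[
\left| \sum_{q_U} a_{q_U} \prod_{i \in T} \sigma\!\left(2x_i(J^{(i)}\cdot q + h_i)\right) \right| \ge \frac{1}{4^{m}}\left(\frac{1}{n}\right)^{m(m+1)} \nu_{u,I|S}(x_u,x_I|x_S) \ge \frac{1}{4^{2^s}}\left(\frac{1}{|I|}\right)^{2^s(2^s+1)} \nu_{u,I|S}(x_u,x_I|x_S),
\]
where the last step uses $m \le 2^s$ and $n = |I|$ to weaken the constant (note $4^m \le 4^{2^s}$ and $n^{m(m+1)} \le |I|^{2^s(2^s+1)}$). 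The final point I need is that this left-hand side is precisely $\nu_{u,T|S}(x_u,x_T|x_S)$ for $x_T$ the restriction of $x_I$ to the coordinates in $T$: this follows by re-applying Lemma \ref{lemma:expr_nu} with $I$ replaced by $T$ and observing that the set of latent variables adjacent to $T$ is a subset $U' \subseteq U$, so the expansion for $\nu_{u,T|S}$ groups the same $\bar f$ values but now indexed by $q_{U'}$ — and summing the coefficients $a_{q_U}$ over the coordinates of $q_{U \setminus U'}$ reproduces exactly the coefficients for $q_{U'}$. Setting $I' = T$ finishes the proof.

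The main obstacle I anticipate is the last step: verifying that deleting coordinates from $I$ and deleting the corresponding freed-up latent variables from $U$ is consistent, so that the truncated sum really is $\nu_{u,T|S}(x_u,x_T|x_S)$ rather than merely an upper bound for it. This requires checking that $\bar f(q, x_u, x_S)$ does not secretly depend on the coordinates of $x_I$ outside $T$ (it should not, since $\bar f$ only sees $q$, $x_u$, $x_S$ by the statement of Lemma \ref{lemma:expr_nu}) and that the partition of latent variables into $U'$ and its complement behaves well under the sum $\sum_{q_{\sim U}}$. A mild subtlety is that the constant in Lemma \ref{lemma_non_cancel_abstract} is stated with $n$ in the base, and here $n = |I|$ rather than $|I|-1$ or some degree bound, so I should be careful to state the final constant with $|I|$ and only later, when feeding this into Theorem \ref{thm:nu_bound_alone}, weaken $|I| \le d-1 \le d$. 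Everything else is bookkeeping.
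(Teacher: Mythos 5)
Your proposal is correct and follows essentially the same route as the paper: expand $\nu_{u,I|S}(x_u,x_I|x_S)$ via Lemma \ref{lemma:expr_nu}, apply Lemma \ref{lemma_non_cancel_abstract} with the inner sums $\sum_{q_{\sim U}}\bar f(q,x_u,x_S)$ as coefficients and the sigmoids as the $[-1,1]$ variables, and then re-identify the truncated sum as $\nu_{u,I'|S}(x_u,x_{I'}|x_S)$ by re-applying Lemma \ref{lemma:expr_nu} (using that the latent variables adjacent to $I'$ form a subset of $U$), finally weakening the constant via $|U|\leq s$. Your explicit treatment of the degenerate case $|I|\leq 2^{|U|}$ (where the $n>m$ hypothesis of Lemma \ref{lemma_non_cancel_abstract} fails and one takes $I'=I$) is a small point the paper glosses over, but it does not change the argument.
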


Finally, Lemma \ref{lemma_non_cancel_nu_cond_full} extends the result about $\nu_{u, I | S}(x_u, x_{I} | x_S)$ to a result about $\nu_{u,I|S}$. The difficulty lies in the fact that the subset $I'$ guaranateed to exist in Lemma \ref{lemma_non_cancel_nu_indiv_cond} may be different for different configurations $(x_u, x_I, x_S)$. Nevertheless, the number of subsets $I'$ with $|I'| \leq 2^s$ is smaller than the number of configurations $(x_u, x_I, x_S)$, so we obtain a viable bound via the pigeonhole principle.

\begin{lemma}
\label{lemma_non_cancel_nu_cond_full}
Fix observed variable $u$ and subsets of observed variables $I$ and $S$, such that all three are disjoint. Suppose that $I$ is a subset of the MRF neighborhood of $u$. Then there exists a subset $I' \subseteq I$ with $|I'| \leq 2^s$ such that
\[\nu_{u, I' | S} \geq\frac{1}{(4|I|)^{2^s}} \left( \frac{1}{|I|} \right)^{2^s(2^s+1)} \nu_{u, I | S}.\]
\end{lemma}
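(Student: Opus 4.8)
The plan is to deduce Lemma \ref{lemma_non_cancel_nu_cond_full} from the pointwise version in Lemma \ref{lemma_non_cancel_nu_indiv_cond} via a pigeonhole/averaging argument. Recall that $\nu_{u,I|S}$ is, by definition, an average of $\nu_{u,I|S}(x_u,x_I|x_S)$ over the uniform choice of $R = x_u \in \{-1,1\}$, $G = x_I \in \{-1,1\}^{|I|}$, and over $X_S$; concretely $\nu_{u,I|S} = \sum_{x_u,x_I} 2^{-1-|I|}\,\mathbb{E}_{X_S}[\nu_{u,I|S}(x_u,x_I|X_S)]$. Lemma \ref{lemma_non_cancel_nu_indiv_cond} assigns to each triple $(x_u,x_I,x_S)$ a subset $I' = I'(x_u,x_I,x_S) \subseteq I$ with $|I'|\le 2^s$ and $\nu_{u,I'|S}(x_u,x_{I'}|x_S) \ge c\cdot \nu_{u,I|S}(x_u,x_I|x_S)$, where $c = \frac{1}{4^{2^s}}(1/|I|)^{2^s(2^s+1)}$. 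The subset depends on the configuration, so it cannot be pulled out of the average directly; instead I group configurations by the subset assigned to them.

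First I would fix an enumeration of the subsets of $I$ of size at most $2^s$; there are at most $\sum_{k=0}^{2^s}\binom{|I|}{k} \le (|I|+1)^{2^s}$ of them, and I will use the cruder bound $(2|I|)^{2^s}$ (valid since $|I|\ge 2^s$, hence $|I|+1 \le 2|I|$) because it matches the factor $(4|I|)^{2^s} = 2^{2^s}(2|I|)^{2^s}$ appearing in the statement together with the $4^{2^s}$ from Lemma \ref{lemma_non_cancel_nu_indiv_cond}. For each such subset $T$, let $\mathcal{A}_T$ be the set of configurations $(x_u,x_I,x_S)$ for which the lemma's procedure returns $I'=T$ (breaking ties so the $\mathcal{A}_T$ partition the configuration space). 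Weighting each configuration by its probability mass under the averaging distribution defining $\nu$, the total weight is $1$, so there is some $T^\star$ whose class $\mathcal{A}_{T^\star}$ carries weight at least $1/(2|I|)^{2^s}$ of the quantity $\sum 2^{-1-|I|}\mathbb{E}_{X_S}[\nu_{u,I|S}(x_u,x_I|X_S)] = \nu_{u,I|S}$. More precisely, since $\sum_T \big(\text{contribution of }\mathcal{A}_T\text{ to }\nu_{u,I|S}\big) = \nu_{u,I|S}$ and there are at most $(2|I|)^{2^s}$ terms, some term is at least $\nu_{u,I|S}/(2|I|)^{2^s}$.

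Next I would lower bound $\nu_{u,T^\star|S}$ by the contribution of the configurations in $\mathcal{A}_{T^\star}$. Here the key observation is that $\nu_{u,T^\star|S}$, being again an average of absolute values $\nu_{u,T^\star|S}(x_u,x_{T^\star}|x_S)$, is at least the partial sum over any subcollection of configurations, because all summands are nonnegative — and crucially, multiple configurations $(x_u,x_I,x_S)$ projecting to the same $(x_u,x_{T^\star},x_S)$ only reinforce this (one may simply keep one representative per projected configuration, or note the inequality survives regardless since we are discarding nonnegative terms). Applying Lemma \ref{lemma_non_cancel_nu_indiv_cond} pointwise on $\mathcal{A}_{T^\star}$ gives $\nu_{u,T^\star|S}(x_u,x_{T^\star}|x_S) \ge c\,\nu_{u,I|S}(x_u,x_I|x_S)$ for each such configuration, so summing with the averaging weights yields $\nu_{u,T^\star|S} \ge c \cdot \big(\text{contribution of }\mathcal{A}_{T^\star}\big) \ge c \cdot \nu_{u,I|S}/(2|I|)^{2^s}$, which is exactly $\frac{1}{(4|I|)^{2^s}}(1/|I|)^{2^s(2^s+1)}\nu_{u,I|S}$. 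Taking $I' = T^\star$ finishes the proof.

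The main obstacle is the bookkeeping around projecting configurations: when we pass from $I$ to $I'=T^\star$, several configurations $x_I$ collapse to the same $x_{T^\star}$, and the averaging weight $2^{-1-|I|}$ used for $\nu_{u,I|S}$ differs from the weight $2^{-1-|I'|}$ used for $\nu_{u,I'|S}$. One must check that this reweighting does not hurt — it does not, because each projected configuration inherits total weight $2^{-1-|T^\star|}$ from its $2^{|I|-|T^\star|}$ preimages (each of weight $2^{-1-|I|}$), so the weights match up exactly, and in any case discarding nonnegative terms can only decrease the right side. I would also double-check the arithmetic tying the count $(2|I|)^{2^s}$ of small subsets to the claimed constant, so that the two factors combine to $(4|I|)^{2^s}$ rather than something slightly off.
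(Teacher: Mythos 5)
Your proposal follows essentially the same route as the paper: write $\nu_{u,I|S}$ as a weighted sum of the pointwise quantities $\nu_{u,I|S}(x_u,x_I|x_S)$, partition configurations according to the subset $I'$ produced by Lemma \ref{lemma_non_cancel_nu_indiv_cond}, pigeonhole to find a subset class carrying a $1/(\text{number of subsets})$ fraction of the mass, and then lower bound $\nu_{u,I'|S}$ by that class's contribution via the weight accounting (each projected configuration $(x_u,x_{I'},x_S)$ has at most $2^{|I|-|I'|}$ preimages of weight $2^{-1-|I|}\mathbb{P}(x_S)$, matching the weight $2^{-1-|I'|}\mathbb{P}(x_S)$ in $\nu_{u,I'|S}$); that bookkeeping is exactly the paper's and is correct. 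The one flaw is the arithmetic you yourself flagged: with your subset count $(2|I|)^{2^s}$ the two factors combine to
\[
\frac{1}{4^{2^s}}\left(\frac{1}{|I|}\right)^{2^s(2^s+1)}\cdot\frac{1}{(2|I|)^{2^s}}
=\frac{1}{(8|I|)^{2^s}}\left(\frac{1}{|I|}\right)^{2^s(2^s+1)},
\]
which is weaker than the stated constant; the identity $(4|I|)^{2^s}=2^{2^s}(2|I|)^{2^s}$ does not absorb the $4^{2^s}$ from the pointwise lemma. The paper instead bounds the number of \emph{nonempty} subsets of $I$ of size at most $2^s$ by $|I|^{2^s}$ (configurations assigned the empty set contribute zero mass, since $\nu_{u,\emptyset|S}(\cdot)=0$ forces $\nu_{u,I|S}(\cdot)=0$ there), and then $4^{2^s}\cdot|I|^{2^s}=(4|I|)^{2^s}$ gives exactly the claimed bound; substituting this sharper count into your argument closes the gap with no other change.
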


This result completes the proof of Theorem \ref{thm:nu_bound_alone}. 

\section{Making good predictions independently of the minimum potential}
\label{sec:alpha}

Figure \ref{fig:cov_go_zero} shows an RBM for which $\alpha$ can be arbitrarily small, while $\alpha^* = 1$ and $\beta^*=2$. That is, the induced MRF can be degenerate, while the RBM itself has interactions that are far from degenerate. This is problematic: the sample complexity of our algorithm, which scales with the inverse of $\alpha$, can be arbitrarily large, even for seemingly well-behaved RBMs. 
In particular, we note that $\alpha$ is an opaque property of the RBM, and it is \emph{a priori} unclear how small it is. 

We emphasize that this scaling with the inverse of $\alpha$ is necessary information-theoretically \cite{santhanam2012information}. All current algorithms for learning MRFs and RBMs have this dependency, and it is impossible to remove it while still guaranteeing the recovery of the structure of the model.

Instead, in this section we give an algorithm that learns an RBM with small prediction error, independently of $\alpha$. We necessarily lose the guarantee on structure recovery, but we guarantee accurate prediction even for RBMs in which $\alpha$ is arbitrarily degenerate. The algorithm is composed of a structure learning step that recovers the MRF neighborhoods corresponding to large potentials, and a regression step that estimates the values of these potentials.

\begin{figure}
  \centering
  \includegraphics[scale=0.26]{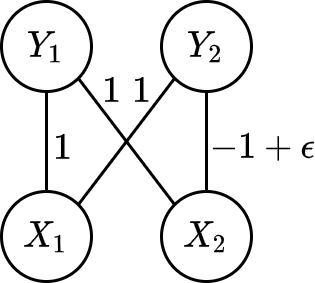}
  \caption{RBM with $\alpha \to 0$ as $\epsilon \to 0$ and $\alpha^*=1$, $\beta^*=2$ when $0 \leq \epsilon \leq 2$. The $X$ variables represent observed variables, the $Y$ variables represent latent variables, and the edges represent non-zero interactions between variables. All external field terms are zero.}
  \label{fig:cov_go_zero}
\end{figure}

\subsection{Structure learning algorithm}
\label{section:alpha_result}

The structure learning algorithm is guaranteed to recover the MRF neighborhoods corresponding to potentials that are at least $\zeta$ in absolute value. The guarantees of the algorithm are stated in Theorem \ref{thm:recov_alpha}, which is proved in the Appendix D.

The main differences between this algorithm and the one in Section \ref{sec:algorithm} are: first, the thresholds for $\hat{\nu}_{u,I|S}$ are defined in terms of $\zeta$ instead of $\alpha$, and second, the threshold for $\hat{\nu}_{u,I|S}$ in the additive step (Step 2) is smaller than that used in the pruning step (Step 3), in order to guarantee the pruning of all non-neighbors. The algorithm is described in detail in the Appendix C.

\begin{theorem}
\label{thm:recov_alpha}
Fix $\omega > 0$. Suppose we are given $M$ samples from an RBM, where $M$ is as in Theorem \ref{thm:algorithm} if $\alpha$ were equal to
\[\alpha = \frac{\zeta}{\sqrt{3} \cdot 2^{D/2+2^{s}} \cdot D^{2^{s-1}(2^s+2)}}.\]
Then with probability at least $1-\omega$, our algorithm, when run starting from each observed variable $u$, recovers a subset of the MRF neighbors of $u$, such that all neighbors which are connected to $u$ through a Fourier coefficient of absolute value at least $\zeta$ are included in the subset. Each run of the algorithm takes $O(MLn^{2^s+1})$ time.
\end{theorem}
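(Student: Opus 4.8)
The plan is to reduce Theorem~\ref{thm:recov_alpha} to two statements about the modified algorithm of Appendix~C, whose structure is that of Section~\ref{sec:algorithm} except that Step~2 uses an additive threshold $\tau'_{\mathrm{add}}$ and Step~3 a strictly larger pruning threshold $\tau'_{\mathrm{prune}}$, both defined through $\zeta$ instead of $\alpha$. Write $N(u)$ for the MRF neighborhood of $u$ and $N_{\ge\zeta}(u)$ for the set of $v\in N(u)$ such that $|\hat f(C)|\ge\zeta$ for some clique $C$ with $u,v\in C$. The two statements are: \textbf{(i)} with probability at least $1-\omega$ the loop of Step~2 halts within $L=8/(\tau'_{\mathrm{add}})^2$ rounds and, at that point, $S\supseteq N_{\ge\zeta}(u)$; and \textbf{(ii)} on the same event Step~3 retains every element of $N_{\ge\zeta}(u)$ and deletes every observed variable outside $N(u)$. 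Together these give $N_{\ge\zeta}(u)\subseteq S\subseteq N(u)$, which is the claim. Exactly as in Theorem~\ref{thm:algorithm}, the only role of the sample count $M$ is to ensure, via a Hoeffding bound and a union bound over the $\mathrm{poly}(n^{L+2^s+1})$ triples $(u,I,S)$ inspected, that every empirical proxy $\hat\nu_{u,I|S}$ is within additive error $\tfrac12(\tau'_{\mathrm{prune}}-\tau'_{\mathrm{add}})$ of $\nu_{u,I|S}$; this is a routine transcription of the concentration analysis behind Theorem~\ref{thm:algorithm} with $\alpha$ replaced by the value in the statement, and I would verify it first.

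For (i), I would establish a $\zeta$-analogue of Theorem~4.6 of \cite{hamilton2017information}: if $S\not\supseteq N_{\ge\zeta}(u)$, then some set $I\subseteq N(u)$ with $|I|\le D$ satisfies $\nu_{u,I|S}\ge \tfrac{1}{3\cdot 2^{D}}$ times the Theorem~4.6 lower bound of \cite{hamilton2017information} with $\alpha$ replaced by $\zeta$. This is obtained by rerunning the witnessing-monomial argument of \cite{hamilton2017information} with the designated coefficient of absolute value at least $\zeta$ joining $u$ to a missed neighbor, in place of the global minimum $\alpha$; a convenient feature is that any clique $C$ with $\hat f(C)\ne 0$, $u\in C$, and $|\hat f(C)|\ge\zeta$ automatically has $C\setminus\{u\}\subseteq N_{\ge\zeta}(u)$, so the context needed to isolate the witnessing monomial is already contained in any $S$ that Step~2 can produce. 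Applying Theorem~\ref{thm:nu_bound_alone} then gives a subset $I'\subseteq I$ with $|I'|\le 2^s$ and $\nu_{u,I'|S}$ smaller by a factor at least $(4D)^{-2^s}D^{-2^s(2^s+1)}$. Chasing constants, the resulting lower bound equals $2\tau'_{\mathrm{add}}$ precisely when $\tau'_{\mathrm{add}}$ is the quantity $\tau'$ of Theorem~\ref{thm:algorithm} evaluated at $\alpha=\zeta/(\sqrt3\cdot 2^{D/2+2^s}\cdot D^{2^{s-1}(2^s+2)})$, because $(\sqrt3\cdot 2^{D/2+2^s}\cdot D^{2^{s-1}(2^s+2)})^2 = 3\cdot 2^{D}\cdot (4D)^{2^s}D^{2^s(2^s+1)}$, the product of the $3\cdot 2^{D}$ loss from using a designated rather than minimum coefficient and the inverse $(4D)^{2^s}D^{2^s(2^s+1)}$ of the reduction factor of Lemma~\ref{lemma_non_cancel_nu_cond_full} with $|I|\le D$. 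Combined with the concentration bound, Step~2 then finds some qualifying $I$ whenever $S\not\supseteq N_{\ge\zeta}(u)$; since each addition decreases $H(X_u\mid X_S)$ by at least $2(\tau'_{\mathrm{add}})^2$ while $H(X_u\mid X_S)\le\log 2$, the loop halts within $L$ rounds, necessarily with $S\supseteq N_{\ge\zeta}(u)$.

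For (ii), let $S$ be the set at the end of Step~2, so $S\supseteq N_{\ge\zeta}(u)$, and let $B:=N(u)\setminus S$ be the uncovered neighbors, each joined to $u$ only by coefficients of absolute value below $\zeta$. If $i\notin N(u)$, then $(S\setminus\{i\})\cup B$ separates $u$ from $i$ in the MRF, so $\mathrm{Cov}(X_u,X_i\mid X_{S\setminus\{i\}})$ is a covariance of conditional expectations over $X_B$; since $X_u$ couples to $X_B$ only through the weak potentials attached to $B$, this covariance is small, yielding $\nu_{u,i|S\setminus\{i\}}<\tau'_{\mathrm{prune}}$, so $i$ is pruned. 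If $v\in N_{\ge\zeta}(u)$, the clique-containment observation above shows that the full vertex set of some clique $C\ni u,v$ with $|\hat f(C)|\ge\zeta$ lies in $\{u\}\cup N_{\ge\zeta}(u)$, hence $C\setminus\{u,v\}\subseteq S\setminus\{v\}$, so conditioning on $X_{S\setminus\{v\}}$ keeps that interaction exposed and $\nu_{u,v|S\setminus\{v\}}\ge\tau'_{\mathrm{prune}}$, so $v$ is retained. Combining (i), (ii), and the running-time count $O(MLn^{2^s+1})$ (each of the $\le L$ rounds of Step~2 scans $O(n^{2^s})$ candidate sets, each $\hat\nu$ costing $O(Mn)$, and Step~3 is cheaper) completes the proof. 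The step I expect to be the main obstacle is the quantitative reconciliation inside (ii): the upper bound on $\nu_{u,i|S\setminus\{i\}}$ for non-neighbors and the lower bound on $\nu_{u,v|S\setminus\{v\}}$ for $\ge\zeta$-neighbors both scale with $\zeta$ up to degree- and width-dependent constants, and forcing the former strictly below the latter — so that a single threshold $\tau'_{\mathrm{prune}}$ separates the two populations — is what necessitates the gap $\tau'_{\mathrm{add}}<\tau'_{\mathrm{prune}}$ and fixes the degree-dependent inflation $\sqrt3\cdot 2^{D/2+2^s}\cdot D^{2^{s-1}(2^s+2)}$ in the effective value of $\alpha$.
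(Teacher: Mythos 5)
There are two genuine gaps in your argument, both in part (ii), and they are precisely the points where the paper's proof has to depart from the Section~\ref{sec:algorithm} template. First, retention of strong neighbors: you keep the singleton pruning test of Section~\ref{sec:algorithm} and assert that for $v\in N_{\ge\zeta}(u)$ one has $\nu_{u,v|S\setminus\{v\}}\ge\tau'_{\mathrm{prune}}$ because the rest of a witnessing clique lies in $S\setminus\{v\}$. This does not follow: since $S$ is no longer guaranteed to contain \emph{all} neighbors of $u$ (the weak ones may be missing), the structural results (the one-nonvanishing-hyperedge analogue of Theorem 4.6 of \cite{hamilton2017information} together with Theorem~\ref{thm:nu_bound_alone}, i.e.\ Theorem~\ref{thm_recov_alpha_nu}) only guarantee that \emph{some} set $I$ of size at most $2^s$, possibly containing uncaptured weak neighbors, has a large proxy; they give no lower bound on the singleton proxy $\nu_{u,v|S\setminus\{v\}}$, which can be killed by cancellations involving the missing weak neighbors. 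This is exactly why the Appendix~C algorithm replaces the singleton test in Step~3 by a search over all sets $I$ of size at most $2^s$ before marking $i$ for removal; with your version of Step~3 the retention claim is unproven (and the paper's Lemma~\ref{lemma_prune_alpha_nu} is proved for the modified test).

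Second, pruning of non-neighbors: your bound rests on ``$X_u$ couples to $X_B$ only through weak potentials, hence the conditional covariance is small,'' but this is quantitatively unsubstantiated. The pruning threshold is $\tau'(\zeta)\propto\zeta^2$ multiplied by factors like $e^{-2\gamma(d+D-1)}/(d^{4d}\binom{D}{d-1}\dots)$, whereas a direct weak-coupling estimate scales (at best) linearly in $\zeta$ with combinatorial factors, so it does not place $\nu_{u,i|S\setminus\{i\}}$ below $\tau'(\zeta)$. The paper's mechanism is different and is the heart of the proof: by the data-processing inequality for the proxy (Lemma~\ref{lemma:intermediate_nu}), any proxy to a non-neighbor is at most $2^{D-1}$ times $\nu_{u,I^*|S\setminus\{i\}}$ where $I^*$ is the set of uncaptured neighbors; Lemma~\ref{lemma_non_cancel_nu_cond_full} converts this to a bound by $2^{D-1}(4D)^{2^s}D^{2^s(2^s+1)}$ times the empirical proxy of some set of size at most $2^s$, which is at most $\tau'(\zeta\cdot\eta)$ because Step~2 halted. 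It is this chain, not any loss in the Step~2 witness bound, that determines $\eta$ and hence the inflated effective $\alpha$ in the sample complexity; your accounting, which attributes a $3\cdot 2^{D}$ loss to using a ``designated rather than minimum'' coefficient in the analogue of Theorem 4.6, is misplaced (the paper's relaxed Theorem 4.6 incurs no such loss, since its proof already only uses one nonvanishing hyperedge). Your part (i) is essentially right in spirit, but note that the paper runs Step~2 at the smaller threshold $\tau'(\zeta\cdot\eta)$ precisely so that $S$ captures all neighbors connected through coefficients of size at least $\zeta\cdot\eta$, which is the hypothesis its pruning lemma actually needs.
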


\subsection{Regression algorithm}
\label{sec_prediction_error}
Note that 
\[\mathbb{P}(X_u=1|X_{[n] \setminus \{u\}}=x_{[n] \setminus \{u\}}) = \sigma\left(2 \sum_{S \subseteq [n] \setminus \{u\}} \hat{f}(S \cup \{u\}) \chi_S(x) \right).\]
Therefore, following the approach of \cite{wu2019sparse}, we can frame the recovery of the Fourier coefficients as a regression task. Let $n(u)$ be the set of MRF neighbors of $u$ recovered by the algorithm in Section \ref{section:alpha_result}. Note that $|n(u)| \leq D$. Let $z \in \{-1,1\}^{2^{|n(u)|}}$, $w \in \mathbb{R}^{2^{|n(u)|}}$, and $y \in \{-1,1\}$, with $z_S = \chi_S(X)$, $w_S = 2 \hat{f}(S \cup \{u\})$, and $y=X_u$, for all subsets $S \subseteq n(u)$. Then, if $n(u)$ were equal to the true set of MRF neighbors, we could rewrite the conditional probability statement above as
\[\mathbb{P}(y=1|z) = \sigma(w \cdot z), \quad \text{with } ||w||_1 \leq 2\gamma.\]
Then, finding an estimate $\hat{w}$ would amount to a constrained regression problem. In our setting, we solve the same problem, and we show that the resulting estimate has small prediction error. We estimate $\hat{w}$ as follows:
\[\hat{w} \in \argmin_{w \in \mathbb{R}^{|n(u)|}} \frac{1}{M} \sum_{i=1}^M l(y^{(i)} (w \cdot z^{(i)})) \quad \text{ s.t. } ||w||_1 \leq 2\gamma,\]
where we assume we have access to $M$ i.i.d. samples $(z, y)$, and where $l: \mathbb{R} \to \mathbb{R}$ is the loss function
\[l(y (w \cdot z)) = \ln(1 + e^{-y (w \cdot z)}) = \begin{cases}
-\ln \sigma(w \cdot z), & \text{ if } y = 1\\
-\ln (1 -\sigma(w \cdot z )), & \text{ if } y = -1
\end{cases}.\]
The objective above is convex, and the problem is solvable in time $\tilde{O}((2^D)^4)$ by the $l_1$-regularized logistic regression method described in \cite{koh2007interior}. Then, Theorem \ref{thm:alpha_distance} gives theoretical guarantees for the prediction error achieved by this regression algorithm. The proof is deferred to the Appendix D.

\begin{theorem}
\label{thm:alpha_distance}
Fix $\delta > 0$ and $\epsilon > 0$. Suppose that we are given neighborhoods $n(u)$ satisfying the guarantees of Theorem \ref{thm:recov_alpha} for each observed variable $u$. Suppose that we are given $M$ samples from the RBM, and that we have
\[M = \Omega\left(\gamma^2\ln(8 \cdot n \cdot 2^{D} / \delta)/\epsilon^2\right), \quad \zeta \leq \frac{\sqrt{\epsilon}}{D^d \sqrt{1+e^{2\gamma}}}.\]
Let $z_u$ and $\hat{w}_u$ be the features and the estimate of the weights when the regression algorithm is run at observed variable $u$. Then, with probability at least $1-\delta$, for all variables $u$,
\[\mathbb{E}\left[\left(\mathbb{P}(X_u=1|X_{\setminus u}) - \sigma\left(\hat{w}_u \cdot z_u\right)\right)^2\right] \leq \epsilon.\]
\end{theorem}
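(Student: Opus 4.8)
The plan is to run the standard analysis of $\ell_1$-constrained logistic regression (as in \cite{wu2019sparse}) while carrying along the small model misspecification caused by the fact that $n(u)$ may omit neighbors connected to $u$ only through Fourier coefficients of absolute value below $\zeta$. Write $p(x) := \mathbb{P}(X_u = 1 \mid X_{\setminus u} = x) = \sigma(\eta(x))$ with true log-odds $\eta(x) = 2\sum_{S \subseteq [n]\setminus\{u\}} \hat f(S\cup\{u\})\chi_S(x)$, and set $w^* := (2\hat f(S\cup\{u\}))_{S\subseteq n(u)}$, so that $w^*\cdot z_u = \eta^*(x) := 2\sum_{S\subseteq n(u)}\hat f(S\cup\{u\})\chi_S(x)$ and $\|w^*\|_1 \le 2\sum_{S}|\hat f(S\cup\{u\})| \le 2\gamma$; in particular $w^*$ is feasible for the regression. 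The first step is to bound the misspecification $\Delta := \max_x |\eta(x)-\eta^*(x)|$. If $S\not\subseteq n(u)$ and $\hat f(S\cup\{u\})\neq 0$, pick $i\in S\setminus n(u)$; then $i$ is an MRF neighbor of $u$, so by the guarantee of Theorem~\ref{thm:recov_alpha} it is not connected to $u$ through any Fourier coefficient of absolute value $\ge\zeta$, hence $|\hat f(S\cup\{u\})| < \zeta$. Each such $S$ is a subset of the neighborhood of $u$ (size $\le D$) of size $\le d-1$, so there are at most $\sum_{k=0}^{d-1}\binom{D}{k}\le D^{d}$ of them, giving $\Delta \le 2D^{d}\zeta$ and therefore $|p(x)-\sigma(\eta^*(x))|\le\tfrac14\Delta$ for all $x$.

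The second step rewrites the population logistic risk $L(w) := \mathbb{E}_{x,y}[l(y(w\cdot z_u))]$, by conditioning on $x$, as $L(w) = \mathbb{E}_x[H(p(x))] + \mathbb{E}_x\!\left[D_{\mathrm{KL}}\!\big(\mathrm{Bern}(p(x))\,\|\,\mathrm{Bern}(\sigma(w\cdot z_u))\big)\right]$, where $H$ is the binary entropy and $D_{\mathrm{KL}}$ is the Kullback–Leibler divergence. Since $|\eta^*(x)|\le\|w^*\|_1\le 2\gamma$, each $\sigma(\eta^*(x))$ lies in $[\sigma(-2\gamma),\sigma(2\gamma)]$, so $\sigma(\eta^*)(1-\sigma(\eta^*))\ge \tfrac{1}{2(1+e^{2\gamma})}$, and the elementary bound $D_{\mathrm{KL}}(\mathrm{Bern}(p)\|\mathrm{Bern}(q))\le (p-q)^2/(q(1-q))$ yields $L(w^*)-\mathbb{E}_x[H(p(x))] \le \tfrac18(1+e^{2\gamma})\Delta^2 \le \tfrac12(1+e^{2\gamma})D^{2d}\zeta^2 \le \tfrac12\epsilon$, using the hypothesis $\zeta\le\sqrt{\epsilon}/(D^{d}\sqrt{1+e^{2\gamma}})$.

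The third step is uniform convergence of $\hat L(w) := \tfrac1M\sum_i l(y^{(i)}(w\cdot z^{(i)}_u))$ to $L(w)$ over $\{\|w\|_1\le 2\gamma\}$. The map $t\mapsto\ln(1+e^{-t})$ is $1$-Lipschitz and on the feasible set $|w\cdot z_u|\le 2\gamma$, so by Talagrand contraction the empirical Rademacher complexity of the loss class is bounded by that of $\{w\cdot z_u:\|w\|_1\le 2\gamma\}$, which is $O\!\big(\gamma\sqrt{\log(2^{|n(u)|})/M}\big) = O\!\big(\gamma\sqrt{D/M}\big)$ because $\|z_u\|_\infty\le 1$ and $z_u$ has at most $2^{D}$ coordinates. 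Combining this with McDiarmid's inequality (the loss is bounded by $2\gamma+\ln 2$) and a union bound over the $n$ choices of $u$ gives $\sup_{\|w\|_1\le 2\gamma}|\hat L(w)-L(w)| \le \epsilon'$ for all $u$ simultaneously with probability $\ge 1-\delta$, provided $M = \Omega\!\big(\gamma^2\ln(8n\,2^{D}/\delta)/(\epsilon')^2\big)$; taking $\epsilon'=\epsilon/8$ gives the stated sample size. The ERM chain $L(\hat w_u)\le\hat L(\hat w_u)+\epsilon'\le\hat L(w^*)+\epsilon'\le L(w^*)+2\epsilon'$, together with the decomposition of $L$, yields $\mathbb{E}_x[D_{\mathrm{KL}}(\mathrm{Bern}(p(x))\|\mathrm{Bern}(\sigma(\hat w_u\cdot z_u)))] \le (L(w^*)-\mathbb{E}_x[H(p(x))]) + 2\epsilon' \le \tfrac12\epsilon + 2\epsilon' = \tfrac34\epsilon$; Pinsker's inequality $D_{\mathrm{KL}}(\mathrm{Bern}(p)\|\mathrm{Bern}(q))\ge 2(p-q)^2$ then gives $\mathbb{E}[(p(x)-\sigma(\hat w_u\cdot z_u))^2]\le\tfrac38\epsilon\le\epsilon$, which is the claim (with room to spare in the constants).

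The main obstacle is the third step: lining up the Rademacher/McDiarmid constants with the stated $M=\Omega(\gamma^2\ln(8n\,2^{D}/\delta)/\epsilon^2)$, i.e. simultaneously handling the $\ell_1$-ball complexity against an ambient dimension exponential in $D$, the $O(\gamma)$ range of the predictor, the $O(\gamma)$ bound on the loss, and the union bound over all $u$. A secondary technicality is the bookkeeping in the misspecification step — counting the nonzero coefficients $\hat f(S\cup\{u\})$ and checking that the constant in $\Delta$ is absorbed by the slack between $\sum_{k<d}\binom{D}{k}$ and $D^{d}$ — but this is routine.
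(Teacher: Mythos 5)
Your proposal is correct and follows essentially the same route as the paper's proof (via Theorem \ref{thm:alpha_distance_psi}): bound the misspecification from the coefficients dropped by Theorem \ref{thm:recov_alpha} using the $\zeta$ guarantee and a $D^d$ count, decompose the logistic risk into entropy plus KL (the content of Lemma \ref{lemma:diff_kl}), control the KL at the pseudo-true weights by an inverse-Pinsker bound with $\min(\sigma,1-\sigma)\geq \sigma(-2\gamma)$ (Lemma \ref{lemma:zeta_upper_bound_kl}), use Rademacher-complexity uniform convergence over the $\ell_1$ ball plus the ERM chain (Lemma \ref{lemma:alpha_w_closeness}), and finish with Pinsker. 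The only cosmetic differences are that you bound the misspecification pointwise in $x$ (avoiding the paper's iterated-expectation/Jensen step conditioned on $z$) and use two-sided uniform convergence where the paper combines a one-sided uniform bound with a single Hoeffding bound at $\bar{w}$.
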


The sample complexity of the combination of structure learning and regression is given by the sum of the sample complexities of the two algorithms. When $\delta$ is constant, the number of samples required by regression is absorbed by the number of samples required by strucutre learning. For structure learning, plugging in the upper bound on $\zeta$ required by Theorem \ref{thm:alpha_distance}, we get that the sample complexity is exponential in $\epsilon^{-1}$. Note that the factors $D^d$ and $\sqrt{1+e^{2\gamma}}$ in the upper bound on $\zeta$, as well as the factors that appear in Theorem \ref{thm:recov_alpha} from the relative scaling of $\alpha$ and $\zeta$, do not influence the sample complexity much, because factors of similar order already appear in the sample complexity of the structure learning algorithm. Overall, for constant $\delta$ and constant $\epsilon$, the combined sample complexity is comparable to that of the algorithm in Section \ref{sec:algorithm}, without the $\alpha$ dependency.

\newpage

\bibliography{main.bib}
\bibliographystyle{plain}

\newpage
\raggedbottom
\allowdisplaybreaks
\appendix
\section{Proof of Theorem \ref{thm:nu_bound_alone}}

\subsection{Proof of Lemma \ref{lemma:expr_nu}}
\label{app_proof_expr_nu_full}

We first state and prove Lemmas \ref{lemma_uniform_e_fx}, \ref{lemma_prob_expr_nu}, and \ref{lemma:expr_nu_full}, which provide the foundation for the proof of Lemma \ref{lemma:expr_nu}.

\begin{lemma}
\label{lemma_uniform_e_fx}
Let $f(x) = \sum_{j=1}^m \rho(J_j \cdot x + g_j) + h^T x$, where $x \in \{-1,1\}^n$, $J \in \mathbb{R}^{n \times m}$, $h \in \mathbb{R}^n$, and $g \in \mathbb{R}^m$. Then
\begin{align*}
&\mathbb{E}_\mathcal{U}[\mathbbm{1}_{X_I=x_I} e^{f(x)} | X_S=x_S]\\
&\quad = \sum_{q\in\{-1,1\}^m} e^{g \cdot q} \prod_{i \in S} e^{x_i(J^{(i)} \cdot q + h_i)} \prod_{i \in [n] \setminus S} \cosh(J^{(i)} \cdot q + h_i) \prod_{i \in I} \sigma(2x_i(J^{(i)}\cdot q + h_i))
\end{align*}
where $\mathcal{U}$ denotes the uniform distribution over $\{-1,1\}^n$ and where $J^{(i)}$ denotes the $i$-th row of $J$.
\end{lemma}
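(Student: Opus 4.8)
The plan is to expand $e^{f(x)}$ into a sum over configurations $q\in\{-1,1\}^m$ of the latent variables and then push the conditional uniform expectation inside, evaluating it coordinate by coordinate using independence of the $X_i$ under $\mathcal{U}$.

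First I would use $\rho(t)=\log(e^t+e^{-t})$ to write $e^{f(x)} = e^{h^Tx}\prod_{j=1}^m\bigl(e^{J_j\cdot x+g_j}+e^{-(J_j\cdot x+g_j)}\bigr)$, express each factor as $\sum_{q_j\in\{-1,1\}}e^{q_j(J_j\cdot x+g_j)}$, and expand the product over $j$ to obtain $e^{f(x)}=\sum_{q\in\{-1,1\}^m}e^{g\cdot q}\,e^{\sum_j q_j(J_j\cdot x)}\,e^{h^Tx}$. Interchanging the two sums in the exponent, $\sum_j q_j(J_j\cdot x)=\sum_i x_i(J^{(i)}\cdot q)$, so $e^{f(x)}=\sum_q e^{g\cdot q}\prod_{i=1}^n e^{x_i(J^{(i)}\cdot q+h_i)}$.

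Next I would apply $\mathbb{E}_\mathcal{U}[\mathbbm{1}_{X_I=x_I}\,(\cdot)\mid X_S=x_S]$ to this identity and move it inside the finite sum over $q$. Since the coordinates are i.i.d.\ uniform $\pm1$ under $\mathcal{U}$, conditioning on $X_S=x_S$ fixes exactly those coordinates and the expectation factorizes over $i$: the factor is $e^{x_i(J^{(i)}\cdot q+h_i)}$ for $i\in S$; it is $\tfrac12 e^{x_i(J^{(i)}\cdot q+h_i)}$ for $i\in I$ (since $\mathbb{E}[\mathbbm{1}_{X_i=x_i}e^{X_i(J^{(i)}\cdot q+h_i)}]=\tfrac12 e^{x_i(J^{(i)}\cdot q+h_i)}$); and it is $\cosh(J^{(i)}\cdot q+h_i)$ for $i\in[n]\setminus(S\cup I)$. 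Finally, I would match this with the claimed formula by rewriting, via $\sigma(2t)=e^t/(2\cosh t)$ and the evenness of $\cosh$, the $I$-factors as $\tfrac12 e^{x_i(J^{(i)}\cdot q+h_i)}=\cosh(J^{(i)}\cdot q+h_i)\,\sigma(2x_i(J^{(i)}\cdot q+h_i))$, and then absorbing $\prod_{i\in I}\cosh(J^{(i)}\cdot q + h_i)$ together with the $i\in[n]\setminus(S\cup I)$ factors into $\prod_{i\in[n]\setminus S}\cosh(J^{(i)}\cdot q+h_i)$ (here using $I\cap S=\emptyset$).

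There is no genuine obstacle in this lemma; it is a bookkeeping computation. The only points requiring care are keeping the three index sets $S$, $I$, and $[n]\setminus(S\cup I)$ straight, and the elementary rewriting in the last step. The conceptual content — the move that recurs in the rest of the argument — is just the ``expand $\cosh$ and interchange of sums'' that converts a sum over observed configurations into a sum over the $2^m$ latent configurations.
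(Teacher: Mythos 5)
Your proposal is correct and follows essentially the same route as the paper: expand $e^{f(x)}$ via $e^t+e^{-t}=\sum_{q_j\in\{-1,1\}}e^{q_j t}$ into a sum over latent configurations $q$, interchange sums, factorize the conditional uniform expectation coordinate-by-coordinate (giving $e^{x_i(\cdot)}$ on $S$, $\tfrac12 e^{x_i(\cdot)}$ on $I$, $\cosh(\cdot)$ elsewhere), and rewrite the $I$-factors as $\cosh(\cdot)\,\sigma(2x_i(\cdot))$. The paper carries this out by writing the conditional expectation as an explicit normalized sum over $x$, but the computation is the same.
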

\begin{proof}
\begin{align*}
&\mathbb{E}_{\mathcal{U}}[\mathbbm{1}_{X_I=x_I} e^{f(x)}|X_S=x_S]\\
&\quad = \frac{1}{2^{n-|S|}} \sum_{x \in \{-1,1\}^{n}} \mathbbm{1}_{X_S=x_S, X_I=x_I} \cdot e^{h \cdot x} \prod_{j=1}^m \left(e^{J_j \cdot x + g_j} + e^{-J_j \cdot x - g_j}\right) \\
&\quad = \frac{1}{2^{n-|S|}} \sum_{x \in \{-1,1\}^n} \mathbbm{1}_{X_S=x_S, X_I=x_I} \cdot e^{h \cdot x} \sum_{q \in \{-1, 1\}^m} e^{(x^T J + g^T) q}\\
&\quad = \frac{1}{2^{n-|S|}} \sum_{x \in \{-1,1\}^n} \mathbbm{1}_{X_S=x_S, X_I=x_I} \sum_{q \in \{-1, 1\}^m} e^{x^T (J q+h)} e^{g \cdot q}\\
&\quad = \frac{1}{2^{n-|S|}} \sum_{q \in \{-1, 1\}^m} \sum_{x \in \{-1,1\}^n} \mathbbm{1}_{X_S=x_S, X_I=x_I} \cdot e^{x^T (Jq+h)}e^{g \cdot q}\\
&\quad = \frac{1}{2^{n-|S|}} \sum_{q \in \{-1, 1\}^m} e^{g \cdot q} \sum_{x \in \{-1,1\}^n} \mathbbm{1}_{X_S=x_S, X_I=x_I} \cdot e^{\sum_{i=1}^n x_i (J^{(i)} \cdot q + h_i)}\\
&\quad = \frac{1}{2^{n-|S|}} \sum_{q \in \{-1, 1\}^m} e^{g \cdot q} \left(\sum_{x_{[n]\sm (S \cup I)}} \prod_{i \in [n]\sm (S \cup I)} e^{x_i (J^{(i)} \cdot q + h_i)} \right)\prod_{i \in S \cup I} e^{x_i (J^{(i)} \cdot q + h_i)}\\
&\quad = \frac{1}{2^{n-|S|}} \sum_{q \in \{-1, 1\}^m} e^{g \cdot q} \prod_{i \in [n]\sm (S \cup I)} \left(e^{J^{(i)} \cdot q + h_i} + e^{-J^{(i)} \cdot q - h_i}\right) \prod_{i \in S \cup I} e^{x_i (J^{(i)} \cdot q + h_i)}\\
&\quad = \sum_{q \in \{-1, 1\}^m} e^{g \cdot q} \prod_{i \in S}e^{x_i (J^{(i)} \cdot q + h_i)} \prod_{i \in [n]\sm (S \cup I)} \cosh(J^{(i)} \cdot q + h_i) \prod_{i \in I} \frac{e^{x_i (J^{(i)} \cdot q + h_i)}}{2}\\
&\quad = \sum_{q \in \{-1, 1\}^m} e^{g \cdot q} \prod_{i \in S}e^{x_i (J^{(i)} \cdot q + h_i)} \prod_{i \in [n]\sm S} \cosh(J^{(i)} \cdot q + h_i) \prod_{i \in I} \frac{e^{x_i (J^{(i)} \cdot q + h_i)}}{2\cosh(J^{(i)} \cdot q + h_i)}\\
&\quad = \sum_{q \in \{-1, 1\}^m} e^{g \cdot q} \prod_{i \in S}e^{x_i (J^{(i)} \cdot q + h_i)} \prod_{i \in [n]\sm S} \cosh(J^{(i)} \cdot q + h_i) \prod_{i \in I} \sigma(2x_i(J^{(i)}\cdot q + h_i)).
\end{align*}
\end{proof}

\begin{lemma}
\label{lemma_prob_expr_nu}
Fix subsets of observed variables $I$ and $S$, such that the two are disjoint. Then
\[\mathbb{P}(X_I=x_I|X_S=x_S) = \sum_{q \in \{-1,1\}^m} \lambda(q, x_S) \prod_{i \in I} \sigma(2x_i(J^{(i)}\cdot q + h_i))\]
where 
\[\lambda(q, x_S) = \frac{e^{g \cdot q} \prod_{i \in S}e^{x_i (J^{(i)} \cdot q + h_i)} \prod_{i \in [n]\sm S} \cosh(J^{(i)} \cdot q + h_i)}{\sum_{q' \in \{-1, 1\}^m} e^{g \cdot q'} \prod_{i \in S}e^{x_i (J^{(i)} \cdot q' + h_i)} \prod_{i \in [n]\sm S} \cosh(J^{(i)} \cdot q' + h_i)}.\]
\end{lemma}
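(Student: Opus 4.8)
The plan is to express the conditional probability as a ratio of two quantities, each of which is handled directly by Lemma \ref{lemma_uniform_e_fx}. Recall from the preliminaries that the marginal law of the observed variables satisfies $\mathbb{P}(X=x) \propto \exp(f(x))$ with $f(x) = \sum_{j=1}^m \rho(J_j \cdot x + g_j) + h^T x$, which is exactly the function appearing in Lemma \ref{lemma_uniform_e_fx}. Writing $Z = \sum_{x \in \{-1,1\}^n} e^{f(x)}$ for the partition function and $\sum_{x \,:\, X_S = x_S}$ for the sum over $x \in \{-1,1\}^n$ whose restriction to the coordinates in $S$ equals $x_S$, disjointness of $I$ and $S$ gives
\[
\mathbb{P}(X_I = x_I \mid X_S = x_S) = \frac{\tfrac{1}{Z}\sum_{x \,:\, X_S = x_S} \mathbbm{1}_{X_I = x_I}\, e^{f(x)}}{\tfrac{1}{Z}\sum_{x \,:\, X_S = x_S} e^{f(x)}} = \frac{\mathbb{E}_{\mathcal{U}}[\mathbbm{1}_{X_I = x_I}\, e^{f(x)} \mid X_S = x_S]}{\mathbb{E}_{\mathcal{U}}[e^{f(x)} \mid X_S = x_S]},
\]
where in the last step both the factor $1/Z$ and the factor $1/2^{\,n-|S|}$ implicit in $\mathbb{E}_{\mathcal{U}}[\,\cdot \mid X_S = x_S]$ cancel between numerator and denominator.

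Next I would apply Lemma \ref{lemma_uniform_e_fx} twice. For the numerator it gives
\[
\mathbb{E}_{\mathcal{U}}[\mathbbm{1}_{X_I = x_I}\, e^{f(x)} \mid X_S = x_S] = \sum_{q\in\{-1,1\}^m} e^{g\cdot q}\prod_{i\in S}e^{x_i(J^{(i)}\cdot q + h_i)}\prod_{i\in[n]\setminus S}\cosh(J^{(i)}\cdot q + h_i)\prod_{i\in I}\sigma(2x_i(J^{(i)}\cdot q + h_i)),
\]
while for the denominator, applying the same lemma with $I$ replaced by $\emptyset$ makes the product over $i \in I$ empty (equal to $1$), so
\[
\mathbb{E}_{\mathcal{U}}[e^{f(x)} \mid X_S = x_S] = \sum_{q'\in\{-1,1\}^m} e^{g\cdot q'}\prod_{i\in S}e^{x_i(J^{(i)}\cdot q' + h_i)}\prod_{i\in[n]\setminus S}\cosh(J^{(i)}\cdot q' + h_i).
\]
Since the denominator does not depend on $q$, dividing distributes it over the sum in the numerator, and the resulting coefficient of $\prod_{i\in I}\sigma(2x_i(J^{(i)}\cdot q + h_i))$ is precisely $\lambda(q, x_S)$ as defined in the statement. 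This yields the claimed identity.

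The argument is essentially bookkeeping, so I do not expect a genuine obstacle; the only steps deserving a moment's care are the cancellation of the normalizing factors (the $1/Z$ together with the $1/2^{\,n-|S|}$ hidden in the conditional uniform expectation) and the observation that Lemma \ref{lemma_uniform_e_fx} also computes the denominator, through its $I = \emptyset$ specialization. Everything else is substitution.
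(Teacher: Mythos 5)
Your proposal matches the paper's own proof: the paper likewise writes $\mathbb{P}(X_I=x_I|X_S=x_S)$ as the ratio $\mathbb{E}_\mathcal{U}[\mathbbm{1}_{X_I=x_I} e^{f(x)}|X_S=x_S]\,/\,\mathbb{E}_\mathcal{U}[e^{f(x)}|X_S=x_S]$ after cancelling $1/Z$ and the $1/2^{\,n-|S|}$ factor, then applies Lemma \ref{lemma_uniform_e_fx} to numerator and (in its $I=\emptyset$ form) denominator and identifies the coefficient as $\lambda(q,x_S)$. Your argument is correct and essentially identical in approach.
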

\begin{proof}
The MRF of the observed variables has a probability mass function that is proportional to $\exp(f(x))$, where $f(x)$ is as in Lemma \ref{lemma_uniform_e_fx}. Then
\begin{align*}
&\mathbb{P}(X_I=x_I|X_S=x_S)\\
& = \frac{\mathbb{P}(X_S=x_S, X_I=x_I)}{\mathbb{P}(X_S=x_S)}\\
&= \frac{\mathbb{E}[\mathbbm{1}_{X_S=x_S,X_I=x_I}]}{\mathbb{E}[\mathbbm{1}_{X_S=x_S}]}\\
&= \frac{\frac{1}{Z}\sum_{x\in\{-1,1\}^n} \mathbbm{1}_{X_S=x_S,X_I=x_I} \cdot e^{f(x)}}{\frac{1}{Z}\sum_{x\in\{-1,1\}^n} \mathbbm{1}_{X_S=x_S} \cdot e^{f(x)}}\\
&= \frac{\frac{1}{2^{n-|S|}}\sum_{x\in\{-1,1\}^n} \mathbbm{1}_{X_S=x_S,X_I=x_I} \cdot e^{f(x)}}{\frac{1}{2^{n-|S|}}\sum_{x\in\{-1,1\}^n} \mathbbm{1}_{X_S=x_S} \cdot e^{f(x)}}\\
&= \frac{\mathbb{E}_\mathcal{U}[\mathbbm{1}_{X_I=x_I} e^{f(x)}|X_S=x_S]}{\mathbb{E}_\mathcal{U}[e^{f(x)}|X_S=x_S]}\\
&= \frac{\sum_{q \in \{-1, 1\}^m} e^{g \cdot q} \prod_{i \in S}e^{x_i (J^{(i)} \cdot q + h_i)} \prod_{i \in [n] \setminus S} \cosh(J^{(i)} \cdot q + h_i) \prod_{i \in I} \sigma(2x_i(J^{(i)}\cdot q + h_i))}{\sum_{q' \in \{-1, 1\}^m} e^{g \cdot q'} \prod_{i \in S}e^{x_i (J^{(i)} \cdot q' + h_i)} \prod_{i \in [n] \setminus S}  \cosh(J^{(i)} \cdot q' + h_i)}\\
&= \sum_{q \in \{-1,1\}^m} \lambda(q, x_S) \prod_{i \in I} \sigma(2x_i(J^{(i)}\cdot q + h_i)).
\end{align*}
\end{proof}

\begin{lemma}
\label{lemma:expr_nu_full}
Fix observed variable $u$ and subsets of observed variables $I$ and $S$, such that all three are disjoint. Then
\[\nu_{u,I|S}(x_{u},x_{I}|x_S) = \left| \sum_{q \in \{-1,1\}^m} \bar{f}(q, x_u, x_S) \prod_{i \in I} \sigma(2x_i(J^{(i)}\cdot q + h_i)) \right|\]
where $J^{(i)}$ denotes the $i$-th row of $J$ and where
\[\bar{f}(q, x_u, x_S) = \lambda(q, x_S) \left[\sigma(2x_u(J^{(u)}\cdot q + h_u)) - \mathbb{E}_{q' \sim \lambda(\cdot, x_S)} \sigma(2x_u(J^{(u)}\cdot q' + h_u)) \right],\]
\[\lambda(q, x_S) = \frac{e^{g \cdot q} \prod_{i \in S}e^{x_i (J^{(i)} \cdot q + h_i)} \prod_{i \in [n]\sm S} \cosh(J^{(i)} \cdot q + h_i)}{\sum_{q' \in \{-1, 1\}^m} e^{g \cdot q'} \prod_{i \in S}e^{x_i (J^{(i)} \cdot q' + h_i)} \prod_{i \in [n]\sm S} \cosh(J^{(i)} \cdot q' + h_i)}.\]
\end{lemma}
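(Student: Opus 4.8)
The plan is to derive the claimed identity by applying Lemma~\ref{lemma_prob_expr_nu} three times and then collecting terms into a single sum over latent configurations. Recall that, by definition,
\[
\nu_{u,I|S}(x_u,x_I|x_S) = \bigl| \mathbb{P}(X_u=x_u,X_I=x_I|X_S=x_S) - \mathbb{P}(X_u=x_u|X_S=x_S)\,\mathbb{P}(X_I=x_I|X_S=x_S) \bigr|,
\]
so it suffices to rewrite each of the three probabilities on the right-hand side in the form supplied by Lemma~\ref{lemma_prob_expr_nu}, which uses the weights $\lambda(q,x_S)$ appearing in the statement.

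Since $u$, $I$, and $S$ are pairwise disjoint, the set $I \cup \{u\}$ is disjoint from $S$, so Lemma~\ref{lemma_prob_expr_nu} applied with $I \cup \{u\}$ in place of $I$ gives
\[
\mathbb{P}(X_u=x_u,X_I=x_I|X_S=x_S) = \sum_{q \in \{-1,1\}^m} \lambda(q,x_S)\,\sigma\bigl(2x_u(J^{(u)}\cdot q+h_u)\bigr)\prod_{i \in I}\sigma\bigl(2x_i(J^{(i)}\cdot q+h_i)\bigr).
\]
Applying the same lemma with $\{u\}$ in place of $I$, and using that $\lambda(\cdot,x_S)$ is a probability distribution over $\{-1,1\}^m$, gives
\[
\mathbb{P}(X_u=x_u|X_S=x_S) = \sum_{q' \in \{-1,1\}^m} \lambda(q',x_S)\,\sigma\bigl(2x_u(J^{(u)}\cdot q'+h_u)\bigr) = \mathbb{E}_{q'\sim\lambda(\cdot,x_S)}\,\sigma\bigl(2x_u(J^{(u)}\cdot q'+h_u)\bigr),
\]
and the lemma as stated supplies $\mathbb{P}(X_I=x_I|X_S=x_S) = \sum_{q}\lambda(q,x_S)\prod_{i\in I}\sigma(2x_i(J^{(i)}\cdot q+h_i))$.

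Finally I would substitute these three identities into the definition of $\nu_{u,I|S}(x_u,x_I|x_S)$. Since the scalar $\mathbb{E}_{q'\sim\lambda(\cdot,x_S)}\sigma(2x_u(J^{(u)}\cdot q'+h_u))$ does not depend on $q$, it can be pulled inside the sum defining $\mathbb{P}(X_I=x_I|X_S=x_S)$, after which the two sums over $q$ combine so that the coefficient of $\prod_{i\in I}\sigma(2x_i(J^{(i)}\cdot q+h_i))$ becomes exactly $\lambda(q,x_S)\bigl[\sigma(2x_u(J^{(u)}\cdot q+h_u)) - \mathbb{E}_{q'\sim\lambda(\cdot,x_S)}\sigma(2x_u(J^{(u)}\cdot q'+h_u))\bigr] = \bar{f}(q,x_u,x_S)$; taking absolute values yields the claim. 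This argument is pure bookkeeping, so there is no substantive obstacle; the only points needing a line of care are verifying that the disjointness hypothesis licenses applying Lemma~\ref{lemma_prob_expr_nu} to the enlarged set $I\cup\{u\}$, and that the factoring preserves the product structure so that the $u$-factor separates cleanly from the $I$-factors (which is precisely what makes this expression useful downstream in Lemma~\ref{lemma:expr_nu}).
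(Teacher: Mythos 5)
Your proposal is correct and follows essentially the same route as the paper: the paper's proof likewise applies Lemma \ref{lemma_prob_expr_nu} to $I\cup\{u\}$, $\{u\}$, and $I$, and then regroups (via a double sum over $q,q'$, equivalent to your ``pull the constant inside'' step) so that the coefficient of $\prod_{i\in I}\sigma(2x_i(J^{(i)}\cdot q+h_i))$ becomes $\bar{f}(q,x_u,x_S)$. No gaps.
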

\begin{proof}
We apply Lemma \ref{lemma_prob_expr_nu} to the terms in the definition of $\nu_{u,I|S}(x_u,x_I|x_S)$:
\begin{align*}
&\mathbb{P}(X_{u}=x_{u}, X_{I}=x_{I}|X_S=x_S) - \mathbb{P}(X_{u}=x_{u}|X_S=x_S)\mathbb{P}(X_{I}=x_{I}|X_S=x_S)\\
& = \sum_{q\in\{-1,1\}^m} \lambda(q, x_S) \sigma(2x_u(J^{(u)}\cdot q + h_u)) \prod_{i \in I} \sigma(2x_i(J^{(i)}\cdot q + h_i))\\
& \quad - \left[\sum_{q\in\{-1,1\}^m} \lambda(q, x_S) \sigma(2x_u(J^{(u)}\cdot q + h_u))\right]\left[\sum_{q\in\{-1,1\}^m} \lambda(q, x_S) \prod_{i \in I} \sigma(2x_i(J^{(i)}\cdot q + h_i))\right]\\
& = \sum_{q\in\{-1,1\}^m} \sum_{q'\in\{-1,1\}^m} \lambda(q, x_S) \lambda(q', x_S) \sigma(2x_u(J^{(u)}\cdot q + h_u)) \prod_{i \in I} \sigma(2x_i(J^{(i)}\cdot q + h_i))\\
& \quad - \sum_{q\in\{-1,1\}^m} \sum_{q'\in\{-1,1\}^m} \lambda(q, x_S) \lambda(q', x_S) \sigma(2x_u(J^{(u)}\cdot q' + h_u)) \prod_{i \in I} \sigma(2x_i(J^{(i)}\cdot q + h_i))\\
& = \sum_{q\in\{-1,1\}^m} \lambda(q, x_S) \left[\sigma(2x_u(J^{(u)}\cdot q + h_u)) - \mathbb{E}_{q' \sim \lambda(\cdot, x_S)} \sigma(2x_u(J^{(u)}\cdot q' + h_u))\right]\\
&\qquad \cdot \prod_{i \in I} \sigma(2x_i(J^{(i)}\cdot q + h_i))\\
&=  \sum_{q \in \{-1,1\}^m} \bar{f}(q, x_u, x_S) \prod_{i \in I} \sigma(2x_i(J^{(i)}\cdot q + h_i)).
\end{align*}
\end{proof}

\begin{proof}[Proof of Lemma \ref{lemma:expr_nu}]
Note that, if $J_{i,j} = 0$ for all $i \in I$, then the term $\prod_{i \in I} \sigma(2x_i(J^{(i)}\cdot q + h_i))$ is independent of the value of $q_j$. Let $U = \{j \in [m]: J_{i,j} \neq 0 \text{ for some } i \in I\}$ be the set of latent variables with connections to observed variables in $I$. By Lemma \ref{lemma:expr_nu_full}, we can write then
\[\nu_{u,I|S}(x_{u},x_{I}|x_S)= \left| \sum_{q_U \in \{-1,1\}^{|U|}} \left( \sum_{q_{\sim U} \in \{-1,1\}^{m-|U|}} \bar{f}(q, x_u, x_S)\right) \prod_{i \in I} \sigma(2x_i(J^{(i)}\cdot q + h_i)) \right|.\]
\end{proof}

\subsection{Proof of Lemma \ref{lemma_non_cancel_abstract}}

A special case of Lemma \ref{lemma_non_cancel_abstract} is given in Lemma \ref{lemma_non_cancel_abstract_plusone}. Then, we prove Lemma \ref{lemma_non_cancel_abstract}. Lastly, Section \ref{subsec:impossibility} shows that these lemmas are tight in the size of the subset that they guarantee not to be cancelled.

\begin{lemma}
\label{lemma_non_cancel_abstract_plusone}
Let $x_{1,1}, ..., x_{m,m+1} \in [-1, 1]$. Then, for any $a \in \mathbb{R}^m$, there exists a subset $S \subseteq [m+1]$ with $|S| \leq m$ such that
\[\left| \sum_{i=1}^m a_i \prod_{j \in S} x_{i,j} \right| \geq \frac{1}{2^m - 1} \cdot \left| \sum_{i=1}^m a_i \prod_{j=1}^{m+1} x_{i,j} \right|.\]
\end{lemma}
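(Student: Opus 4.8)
The plan is to induct on $m$, peeling off one variable at a time. For $m=1$ the claim is trivial: $|a_1 x_{1,1} x_{1,2}| \le |a_1|$, so one of $S=\{1\}$, $S=\{2\}$ (giving $|a_1 x_{1,1}|$ or $|a_1 x_{1,2}|$) already achieves a value at least $|a_1 x_{1,1} x_{1,2}|$, and $2^1-1=1$. For the inductive step, consider the ``full'' expression $F := \sum_{i=1}^m a_i \prod_{j=1}^{m+1} x_{i,j}$. The key idea is to look at what happens when we set the last coordinate $x_{i,m+1}$ either to $+1$ or keep the product $\prod_{j=1}^m x_{i,j}$ without it; more precisely, I would compare $F$ with the two shorter sums $G := \sum_{i=1}^m a_i \prod_{j=1}^m x_{i,j}$ (drop variable $m+1$ entirely) and use a case analysis on whether $G$ itself is large.

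\textbf{Case 1: $|G| \ge \tfrac{1}{2^m-1}|F|$ is not obviously helpful directly}, so instead I would argue as follows. Write $F = \sum_i b_i$ where $b_i := a_i \prod_{j=1}^{m+1} x_{i,j}$, and note each $|b_i| \le |a_i|$. The cleaner route: apply the inductive hypothesis to the $m$ coefficients $a_i' := a_i x_{i,m+1}$ and the $m$ variables $x_{i,1},\dots,x_{i,m}$ — but wait, that needs $a_i'$ to not depend on the variables, which it does not (it depends on $x_{i,m+1}$, a constant once fixed). So the inductive hypothesis applied to $\{a_i x_{i,m+1}\}_{i=1}^m$ and the $m\times m$ array $(x_{i,j})_{j\le m}$ gives a subset $S' \subseteq [m]$ with $|S'|\le m-1$ and $\big|\sum_i a_i x_{i,m+1}\prod_{j\in S'} x_{i,j}\big| \ge \tfrac{1}{2^{m-1}-1}|F|$. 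Now $S := S' \cup \{m+1\}$ has size $\le m$ and realizes exactly that quantity, which is $\ge \tfrac{1}{2^{m-1}-1}|F| \ge \tfrac{1}{2^m-1}|F|$.

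So the induction actually goes through almost immediately — the apparent subtlety (coefficients depending on variables) dissolves because we only ever fix one extra coordinate into the coefficient vector, and the inductive hypothesis is stated for \emph{arbitrary} $a\in\mathbb{R}^m$. The constant even improves from $2^m-1$ to $2^{m-1}-1$, so the bound as stated has slack; I suspect the authors keep the weaker $2^m-1$ because it is what the general Lemma~\ref{lemma_non_cancel_abstract} (with $n$ in place of $m+1$) naturally produces when iterated, and they want a uniform statement. \textbf{The main thing to be careful about} is the base case bookkeeping (when $m-1 = 0$, $2^{m-1}-1 = 0$ and one divides by zero), which means the induction should bottom out at $m=1$ with a direct argument rather than $m=0$; alternatively one states the hypothesis so that for $m=1$ the target subset is $S'=\emptyset$ and $|\sum_i a_i| \ge |F|$ trivially, then handles the $1/(2^0-1)$ artifact by noting $S'=\emptyset$ always works at that level. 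I would write it with the $m=1$ base case explicit to avoid the degenerate division.
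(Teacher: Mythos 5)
Your inductive step contains a genuine gap. The inductive hypothesis at level $m-1$ is a statement about $m-1$ summands and $m$ columns: the parameter in Lemma~\ref{lemma_non_cancel_abstract_plusone} counts the number of \emph{summands}, and the lemma always has one more column than summands. After you absorb $x_{i,m+1}$ into the coefficients $a_i' = a_i x_{i,m+1}$, you are left with $m$ summands and only $m$ columns, so what you invoke is not the inductive hypothesis (nor any instance of Lemma~\ref{lemma_non_cancel_abstract}, which requires $n>m$); it is the unproven claim that a square $m\times m$ configuration always has a column subset of size at most $m-1$ retaining a $\frac{1}{2^{m-1}-1}$ fraction. That claim is false, and so is the strengthened conclusion you derive from it (that one may always take $m+1\in S$, with the improved constant $\frac{1}{2^{m-1}-1}$). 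For $m=2$, take $a=(1,-1)$, $x_{1,1}=x_{1,2}=x_{1,3}=1$, $x_{2,1}=x_{2,2}=0.9$, $x_{2,3}=1$: the full sum is $0.19$, while the subsets containing column $3$ give $0$, $0.1$, $0.1$, all below $\frac{1}{2^{m-1}-1}\cdot 0.19=0.19$ (the lemma itself is saved only by $S=\{1,2\}$, which omits column $3$). The paper's tightness example, Lemma~\ref{lemma_noncanc_imposs}, makes the same point in exact form: with as many columns as summands, every proper column subset can cancel identically while the full product is nonzero, so no reduction below the number of summands is possible in the square case.

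The missing idea, which is how the paper's proof proceeds, is to reduce the number of \emph{summands} before invoking the inductive hypothesis. Writing
\[
\left|\sum_{i=1}^m a_i\, x_{i,m+1}\prod_{j=1}^{m} x_{i,j}\right|
\le \left|\sum_{i=1}^m a_i\,(x_{i,m+1}-x_{m,m+1})\prod_{j=1}^m x_{i,j}\right|
+ |x_{m,m+1}|\cdot\left|\sum_{i=1}^m a_i \prod_{j=1}^m x_{i,j}\right|,
\]
the first term has a vanishing $m$-th summand (since $x_{m,m+1}-x_{m,m+1}=0$), so it is an $(m-1)$-summand, $m$-column expression to which the inductive hypothesis genuinely applies; combining this with the triangle inequality and the assumption (toward contradiction) that all subsets $S\subseteq[m]$ give values below $\frac{1}{2^m-1}$ of the full sum yields the stated bound with constant $\frac{1}{2^m-1}$, and the final subset has the form $S^*\cup\{m+1\}$ with $|S^*|\le m-1$. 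The fact that your argument appeared to improve the constant and to force $m+1\in S$ was the signal that the reduction could not be a correct application of the hypothesis.
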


\begin{proof}
We prove the claim by induction on $m$.

\textbf{Base case:} For $m=1$, we have
\[|a x_{1,1}| = \frac{|a x_{1,1} x_{1,2}|}{|x_{1,2}|} \geq |a x_{1,1} x_{1,2}|\]
\[|a x_{1,2}| = \frac{|a x_{1,1} x_{1,2}|}{|x_{1,1}|} \geq |a x_{1,1} x_{1,2}|\]
Therefore, the claim holds, with either $S=\{1\}$ or $S=\{2\}$. Note that if any of $a$, $x_{1,1}$, or $x_{1,2}$ is zero, then $a x_{1,1} x_{1,2} = 0$ and the claim holds trivially.

\textbf{Induction step:} Assume the claim holds for $m-1$.

Suppose $|\sum_{i=1}^m a_i \prod_{j \in S} x_{i,j}| < \frac{1}{2^m - 1} \cdot \left| \sum_{i=1}^m a_i \prod_{j=1}^{m+1} x_{i,j} \right|$ for any $S \subseteq [m]$; otherwise the induction step follows. By the triangle inequality, we have
\begin{align*}
\left|\sum_{i=1}^m a_i \prod_{j=1}^{m+1} x_{i,j}\right|
&= \left|\sum_{i=1}^m a_i x_{i,m+1} \prod_{j=1}^{m} x_{i,j}\right|\\
&\leq \left|\sum_{i=1}^m a_i (x_{i,m+1} - x_{m,m+1}) \prod_{j=1}^m x_{i,j}\right| + |x_{m,m+1}| \cdot \left|\sum_{i=1}^m a_i \prod_{j=1}^m x_{i,j}\right|.
\end{align*}
For the first term on the right-hand side, we clearly have $x_{i,m+1} - x_{m,m+1} = 0$ at $i=m$. Therefore, the term is of the form $\sum_{i=1}^{m-1} b_i \sum_{j=1}^m y_{i,j}$, so we can apply the inductive claim for $m-1$. Therefore, there exists a subset $S^* \subseteq [m]$ with $|S^*| \leq m-1$ such that
\[\left| \sum_{i=1}^m a_i (x_{i,m+1}-x_{m,m+1}) \prod_{j \in S^*} x_{i,j} \right| \geq \frac{1}{2^{m-1} - 1} \cdot \left| \sum_{i=1}^m a_i (x_{i,m+1} - x_{m,m+1}) \prod_{j=1}^m x_{i,j} \right|.\]
Overall, we get then the inequality:
\begin{align*}
&\left|\sum_{i=1}^m a_i \prod_{j=1}^{m+1} x_{i,j}\right|\\
&\quad \leq (2^{m-1}-1) \cdot \left| \sum_{i=1}^m a_i (x_{i,m+1}-x_{m,m+1}) \prod_{j \in S^*} x_{i,j} \right| + |x_{m,m+1}| \cdot \left|\sum_{i=1}^m a_i \prod_{j=1}^m x_{i,j}\right|\\
&\quad \leq (2^{m-1}-1) \cdot \left| \sum_{i=1}^m a_i x_{i,m+1} \prod_{j \in S^*} x_{i,j} \right| + (2^{m-1}-1) \cdot |x_{m,m+1}| \cdot \left| \sum_{i=1}^m a_i \prod_{j \in S^*} x_{i,j} \right|\\
&\quad\quad + |x_{m,m+1}| \cdot \left|\sum_{i=1}^m a_i \prod_{j=1}^m x_{i,j}\right|\\
&\quad \leq (2^{m-1}-1) \cdot \left| \sum_{i=1}^m a_i x_{i,m+1} \prod_{j \in S^*} x_{i,j} \right| + \left(\frac{2^{m-1}-1}{2^m-1} + \frac{1}{2^m-1} \right) \cdot \left| \sum_{i=1}^m a_i \prod_{j=1}^{m+1} x_{i,j} \right|
\end{align*}
where in the last inequality we used that $|x_{m,m+1}| \leq 1$ and our supposition that for all $S \subseteq [m]$, $|\sum_{i=1}^m a_i \prod_{j \in S} x_{i,j}| < \frac{1}{2^m - 1} \cdot \left| \sum_{i=1}^m a_i \prod_{j=1}^{m+1} x_{i,j} \right|$. Then, reordering:
\begin{align*}
\left| \sum_{i=1}^m a_i x_{i,m+1} \prod_{j \in S^*} x_{i,j} \right|
&\geq \frac{1 - \frac{2^{m-1}-1}{2^m-1} - \frac{1}{2^{m}-1}}{2^{m-1}-1} \left|\sum_{i=1}^m a_i \prod_{j=1}^{m+1} x_{i,j}\right| = \frac{1}{2^m-1} \left|\sum_{i=1}^m a_i \prod_{j=1}^{m+1} x_{i,j}\right|.
\end{align*}
Then, in this case, $S^* \cup \{m+1\}$ is the desired subset. Note that we selected $S^*$ such that $|S^*| \leq m-1$, so $|S^* \cup \{m+1\}| \leq m$.
\end{proof}

\begin{proof}[Proof of Lemma \ref{lemma_non_cancel_abstract}]
Partition $[n]$ into $m+1$ subsets $Q_1 = [1, \lceil\frac{n}{m+1}\rceil], Q_2 = [\lceil\frac{n}{m+1}\rceil + 1, 2 \lceil\frac{n}{m+1}\rceil], ..., Q_{m+1} = [m \lceil\frac{n}{m+1}\rceil + 1, n]$. Then, apply Lemma \ref{lemma_non_cancel_abstract_plusone} to
\[\left| \sum_{i=1}^m a_i \prod_{j=1}^{m+1} \left(\prod_{k \in Q_j} x_{i,k}\right) \right|\]
where we know that $\prod_{k \in Q_j} x_{i,k} \in [-1,1]$, for all $j$. Then, there exists a subset $S \subseteq [m+1]$ with $|S| \leq m$ such that
\[\left| \sum_{i=1}^m a_i \prod_{j \in S} \left(\prod_{k \in Q_j} x_{i,k}\right) \right| \geq \frac{1}{2^m-1} \left| \sum_{i=1}^m a_i \prod_{j=1}^{m+1} \left(\prod_{k \in Q_j} x_{i,k}\right) \right|.\]
Let $S' = \bigcup_{j \in S} Q_j$. Then $S' \subseteq [n]$ with $|S'| \leq n - \lfloor \frac{n}{m+1} \rfloor \leq n - \frac{n}{m+1} + 1$, and 
\[\left| \sum_{i=1}^m a_i \prod_{j \in S'} x_{i,j} \right| \geq \frac{1}{2^m-1} \left| \sum_{i=1}^m a_i \prod_{j=1}^{n} x_{i,j} \right|.\]

Now, if $|S'| > m$, apply the same technique recursively to $S'$: partition it into $m+1$ equal subsets and apply Lemma \ref{lemma_non_cancel_abstract_plusone}. Continue until you obtain a subset of size at most $m$.

We now bound the number of iterations required. Let $n_t$ be the size of the set at timestep $t$ (at the beginning, $n_0 = n$). We have
\begin{align*}
n_{t}
&\leq n_{t-1} \left(1 - \frac{1}{m+1}\right) + 1\\
&\leq n_{t-2} \left(1 - \frac{1}{m+1}\right)^2 + \left(1 - \frac{1}{m+1}\right) + 1\\
&\leq ...\\
&\leq n \left(1 - \frac{1}{m+1}\right)^t + \sum_{q=0}^{t-1} \left(1 - \frac{1}{m+1}\right)^q\\
&\leq n \left(1 - \frac{1}{m+1}\right)^t + m+1.
\end{align*}
Let $T$ be the smallest timestep such that $n_T < m + 2$. An upper bound on $T$ is obtained as
\begin{align*}
n \left(1 - \frac{1}{m+1}\right)^T < 1
&\Longrightarrow n e^{-2T/(m+1)} < 1\\
&\Longrightarrow T > \frac{m+1}{2}\ln(n)
\end{align*}
where we used that $e^{-2x} \leq 1-x$ for $0 \leq x \leq 1/2$. Because $T$ is an integer, the correct upper bound is $\frac{m+1}{2}\ln(n) + 1$. Then, at this step, we are guaranteed that $n_T \leq m+1$. One more step may be required to go from size $m+1$ to size $m$. Therefore, an upper bound on the number of steps until $n_t \leq m$ is $\frac{m+1}{2}\ln(n)+2$. Then, the factor due to applications of Lemma \ref{lemma_non_cancel_abstract_plusone} is
\begin{align*}
\left(\frac{1}{2^m - 1}\right)^{\frac{m+1}{2}\ln(n)+2}
&\geq \left(\frac{1}{2^m}\right)^{\frac{m+1}{2}\ln(n)+2}
= \frac{1}{2^{m(m+1)/2 \ln(n)+2m}}\\
&= \frac{1}{4^m}\left(\frac{1}{n}\right)^{m(m+1)\log_2(e)/2}
\geq \frac{1}{4^m} \left(\frac{1}{n}\right)^{m(m+1)}.
\end{align*}
\end{proof}

\subsubsection{Tightness of non-cancellation result}
\label{subsec:impossibility}

Lemma \ref{lemma_noncanc_imposs} shows that, in the setting of Lemmas \ref{lemma_non_cancel_abstract_plusone} and \ref{lemma_non_cancel_abstract}, it is possible for all subsets of size strictly less than $m$ to be completely cancelled. Therefore, the guarantee on the existence of a subset of size at most $m$ that is non-cancelled is tight. 

We emphasize that, for the RBM setting, this result does not imply an impossibility of finding subsets of size less than $2^s$ with non-zero mutual information proxy. One reason for this is that, in the RBM setting, the terms of the sums that we are interested in have additional constraints which are not captured by the general setting of this section.

\begin{lemma}
\label{lemma_noncanc_imposs}
For any $c \in \mathbb{R}$, there exists some $x_{1,1}, ..., x_{m,m} \in [-1, 1]$ and some $a \in \mathbb{R}^m$ such that 
\[\left| \sum_{i=1}^m a_i \prod_{j=1}^{m} x_{i,j} \right| = c\]
and for any subset $S \subseteq [m]$ with $|S| \leq m-1$
\[\left| \sum_{i=1}^m a_i \prod_{j \in S} x_{i,j} \right| = 0.\]
\end{lemma}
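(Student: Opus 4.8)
The plan is to give an explicit example in which every row is \emph{constant}; this collapses the whole family of subproduct constraints into a single Vandermonde / divided-difference identity. Fix $m$ distinct nonzero reals $y_1,\dots,y_m\in[-1,1]$ (e.g.\ $y_i=i/(m+1)$), put $x_{i,j}:=y_i$ for all $i,j\in[m]$, and set
\[
a_i \;:=\; \frac{c}{\,y_i\prod_{i'\neq i}(y_i-y_{i'})\,} .
\]
(If $c=0$ take $a=0$; since the quantity on the left of the claimed identity is nonnegative we may assume $c\ge 0$.) For any $S\subseteq[m]$ we then have $\prod_{j\in S}x_{i,j}=y_i^{|S|}$, so
\[
\sum_{i=1}^m a_i\prod_{j\in S}x_{i,j} \;=\; \sum_{i=1}^m a_i\,y_i^{|S|} \;=\; c\sum_{i=1}^m \frac{y_i^{\,|S|-1}}{\prod_{i'\neq i}(y_i-y_{i'})},
\]
and the problem reduces to evaluating $T_k:=\sum_{i=1}^m y_i^{\,k}\big/\prod_{i'\neq i}(y_i-y_{i'})$ for $k=|S|-1$.

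Next I would invoke the standard Lagrange-interpolation fact that $T_k$ is exactly the coefficient of $t^{m-1}$ in the unique polynomial of degree at most $m-1$ interpolating $t\mapsto t^k$ at the $m$ distinct nodes $y_1,\dots,y_m$ — indeed that interpolant is $\sum_i y_i^{k}\prod_{i'\neq i}\frac{t-y_{i'}}{y_i-y_{i'}}$, and the $i$-th Lagrange basis polynomial has leading coefficient $1/\prod_{i'\neq i}(y_i-y_{i'})$. For $0\le k\le m-1$ the monomial $t^k$ already has degree at most $m-1$, hence is its own interpolant, so $T_k=\mathbbm{1}\{k=m-1\}$. Substituting back: for every $S$ with $1\le |S|\le m-1$ we have $k=|S|-1\in\{0,\dots,m-2\}$, whence $T_k=0$ and $\sum_i a_i\prod_{j\in S}x_{i,j}=0$; while for $S=[m]$ we have $k=m-1$, whence $T_{m-1}=1$ and $\sum_i a_i\prod_{j=1}^m x_{i,j}=c$, so its absolute value equals $c$. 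This is exactly the assertion.

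I expect no genuine difficulty in the execution; the only real step is spotting the construction. Collapsing each row to a single scalar $y_i$ is what turns ``every proper subproduct vanishes, the full product does not'' into the clean statement ``the $(m-1)$st divided difference of $t\mapsto t^{\,|S|-1}$ is $0$ for $|S|\le m-1$ and $1$ for $|S|=m$'', after which everything is bookkeeping. The same idea with distinct nonzero $y_i$ also yields, if desired, the examples underlying Lemma~\ref{lemma_non_cancel_abstract_plusone}: with $m+1$ constant columns and the same $a$, all subproducts of size $1,\dots,m-1$ are $0$ while the size-$m$ subproduct equals $c\neq 0$, showing the guarantee ``$|S|\le m$'' there cannot be improved.
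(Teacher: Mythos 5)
Your proposal is correct and takes essentially the same approach as the paper: both collapse each row to a single constant $y_i$ (the paper's $x_{i,j}=x_i$), so that the subproduct constraints become a Vandermonde/moment-curve system in the powers $y_i^{|S|}$, with the full set forced to $c$ and all smaller (nonempty) subsets forced to $0$. The only difference is in execution --- the paper gets $a$ abstractly by inverting the full-rank moment-curve matrix, while you exhibit $a_i = c/\bigl(y_i\prod_{i'\neq i}(y_i-y_{i'})\bigr)$ explicitly and verify the cancellations via the Lagrange leading-coefficient identity, a slightly more constructive version of the same argument.
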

\begin{proof}
Let $x_{1,1} = ... = x_{1,m} = x_1$, ..., $x_{m,1} = ... = x_{m,m} = x_m$. Then we want to select some $x_1, ..., x_m \in [-1, 1]$ and some $a \in \mathbb{R}^m$ such that
\[\left[\begin{array}{cccc}
x_1 & x_2 & \cdots & x_m\\
\vdots & \vdots & \ddots & \vdots\\
x_1^{m-1} & x_2^{m-1} & \cdots & x_m^{m-1}\\
x_1^m & x_2^m & \cdots & x_m^m
\end{array}\right] \left[\begin{array}{c}
a_1\\
\vdots\\
a_{m-1}\\
a_m
\end{array}\right] = \left[\begin{array}{c}
0\\
\vdots\\
0\\
c
\end{array}\right].\]
Select some arbitrary $x_1, ..., x_m \in [-1, 1]$ such that the matrix on the left-hand-side has full rank. Note that $(x, ..., x^{m-1}, x^m)$ for $x \in \mathbb{R}$ is a point on the moment curve, and it is known that any such $m$ distinct non-zero points are linearly independent. Therefore, any distinct non-zero $x_1, ..., x_m \in [-1, 1]$ will do. Then, by matrix inversion, there exists some $a \in \mathbb{R}^m$ such that the relation holds.
\end{proof}

\subsection{Proof of Lemma \ref{lemma_non_cancel_nu_indiv_cond}}

\begin{proof}[Proof of Lemma \ref{lemma_non_cancel_nu_indiv_cond}]
Apply Lemma \ref{lemma_non_cancel_abstract} to the form of $\nu_{u, I|S}(x_u, x_{I}|x_S)$ in Lemma \ref{lemma:expr_nu}, with
\[\sum_{q_{\sim U} \in \{-1,1\}^{m - |U|}} \bar{f}(q, x_u, x_S)\]
treated as a coefficient (i.e., $a$ in Lemma \ref{lemma_non_cancel_abstract}) and
\[\sigma(2x_i(J^{(i)}\cdot q + h_i))\]
treated as a variable in $[-1, 1]$ (i.e., $x$ in Lemma \ref{lemma_non_cancel_abstract}). Then there exists a subset $I' \subseteq I$ with $|I'| \leq 2^{|U|}$ such that 
\begin{align*}
&\left| \sum_{q_U \in \{-1,1\}^{|U|}} \left( \sum_{q_{\sim U} \in \{-1,1\}^{m-|U|}} \bar{f}(q, x_u, x_S)\right) \prod_{i \in I'} \sigma(2x_i(J^{(i)}\cdot q + h_i)) \right|\\
&\qquad \geq \frac{1}{4^{2^{|U|}}} \left(\frac{1}{|I|}\right)^{2^{|U|}(2^{|U|}+1)} \nu_{u, I|S}(x_u, x_{I}|x_S).
\end{align*}
Note that the latent variables connected to observed variables in $I'$ are a subset of $U$. Then, by Lemma \ref{lemma:expr_nu}, the expression on the left-hand side is equal to $\nu_{u,I'|S}(x_u,x_{I'}|x_S)$ where $x_{I'}$ agrees with $x_I$. Finally, note that $|U| \leq s$.
\end{proof}

\subsection{Proof of Lemma \ref{lemma_non_cancel_nu_cond_full}}

\begin{proof}[Proof of Lemma \ref{lemma_non_cancel_nu_cond_full}]
We have that 
\[\nu_{u,I|S} = \sum_{x_u, x_I, x_S} \frac{\mathbb{P}(X_S=x_S)}{2^{|I|+1}} \nu_{u,I|S}(x_u,x_I|x_S).\]
Hence, $\nu_{u,I|S}$ is a sum of $2^{|S|+|I|+1}$ terms $\nu_{u,I|S}(x_u,x_I|x_S)$. Lemma \ref{lemma_non_cancel_nu_indiv_cond} applies to each term $\nu_{u,I|S}(x_u,x_I|x_S)$ individually. However, the subset $I'$ with $|I'| \leq 2^s$ that is guaranteed to exist by Lemma \ref{lemma_non_cancel_nu_indiv_cond} may be a function of the specific assignment $x_u$, $x_I$, and $x_S$. Let $I^*(x_u,x_I|x_S)$ be the subset $I'$ with $|I'| \leq 2^s$ that is guaranteed to exist by Lemma \ref{lemma_non_cancel_nu_indiv_cond} for assignment $x_u$, $x_I$, and $x_S$.

The number of non-empty subsets $I' \subseteq I$ with $|I'| \leq 2^s$ is at most $|I|^{2^s}$. Then, by the pigeonhole principle, there exists some $I' \subseteq I$ with $|I'| \leq 2^s$ which captures at least $\frac{1}{|I|^{2^{s}}}$ of the total mass of $\nu_{u,I|S}$:
\[\sum_{\substack{x_u,x_I,x_S\\I^*(x_u,x_I|x_S)=I'}} \frac{\mathbb{P}(X_S=x_S)}{2^{|I|+1}} \nu_{u,I|S}(x_u,x_I|x_S) \geq \frac{1}{|I|^{2^s}} \nu_{u,I|S}.\]
Applying Lemma \ref{lemma_non_cancel_nu_indiv_cond} to each of the terms $\nu_{u,I|S}(x_u,x_I|x_S)$ that we sum over on the left-hand side, we get 
\[\sum_{\substack{x_u,x_I,x_S\\I^*(x_u,x_I|x_S)=I'}} \frac{\mathbb{P}(X_S=x_S)}{2^{|I|+1}} \nu_{u,I'|S}(x_u,x_{I'}|x_S) \geq \frac{1}{(4|I|)^{2^s}} \left( \frac{1}{|I|} \right)^{2^s(2^s+1)} \nu_{u,I|S}.\]
Note that we also have
\begin{align*}
\nu_{u,I'|S}
&= \sum_{x_u, x_{I'},x_S} \frac{\mathbb{P}(X_S=x_S)}{2^{|I'|+1}} \nu_{u, I'|S}(x_u,x_{I'}|x_S)\\
&= 2^{|I|-|I'|} \sum_{x_u, x_{I'},x_S} \frac{\mathbb{P}(X_S=x_S)}{2^{|I|+1}} \nu_{u, I'|S}(x_u,x_{I'}|x_S)\\
&\geq \sum_{\substack{x_u,x_I,x_S\\I^*(x_u,x_I|x_S)=I'}} \frac{\mathbb{P}(X_S=x_S)}{2^{|I|+1}} \nu_{u,I'|S}(x_u,x_{I'},x_S).
\end{align*}
The inequality step above holds because, for each assignment $x_{I'}$, there are $2^{|I|-|I'|}$ assignments $x_I$ that are in accord with it. Hence, each term $\nu_{u,I'|S}(x_u,x_{I'},x_S)$ can appear at most $2^{|I|-|I'|}$ times in the sum on the last line. Therefore,
\[\nu_{u,I'|S} \geq \sum_{\substack{x_u,x_I,x_S\\q(x_u,x_I|x_S)=I'}} \frac{\mathbb{P}(X_S=x_S)}{2^{|I|+1}} \nu_{u,I'|S}(x_u,x_{I'},x_S) \geq \frac{1}{(4|I|)^{2^s}} \left( \frac{1}{|I|} \right)^{2^s(2^s+1)} \nu_{u,I|S}.\]
\end{proof}

\section{Proof of Theorem \ref{thm:algorithm}}

Most of the results in this section are restatements of results in \cite{hamilton2017information}, with small modifications. Hence, most of the proofs in this section reuse the language of the proofs in \cite{hamilton2017information} verbatim.

Let $A$ be the event that for all $u$, $I$, and $S$ with $|I| \leq 2^s$ and $|S| \leq L$ simultaneously, $|\nu_{u,I|S}-\hat{\nu}_{u,I|S}| < \tau'/2$. Then Lemma \ref{lemma_5_3} gives a result on the number of samples required for event $A$ to hold.

\begin{lemma}[Corollary of Lemma 5.3 in \cite{hamilton2017information}]
\label{lemma_5_3}
If the number of samples is larger than
\[\frac{60 \cdot 2^{2L}}{(\tau')^2 (e^{-2\gamma})^{2L}} \left( \log(1/\omega) + \log(L+2^s+1) + (L+2^s+1)\log(2n) + \log 2 \right),\]
then $\mathbb{P}(A) \geq 1-\omega$.
\end{lemma}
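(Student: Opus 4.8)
The plan is to follow the proof of Lemma 5.3 in \cite{hamilton2017information} almost verbatim, changing only the combinatorial bookkeeping so that $I$ ranges over subsets of size at most $2^s$ instead of at most $d-1$. Concretely, I would first reduce event $A$ to a uniform concentration statement for \emph{raw} empirical frequencies. Each of $\nu_{u,I|S}$ and $\hat{\nu}_{u,I|S}$ is a fixed function --- affine in one batch of arguments and bilinear in another, with a few divisions --- of the numbers $\mathbb{P}(X_u=R,X_I=G,X_S=x_S)$, $\mathbb{P}(X_I=G,X_S=x_S)$, $\mathbb{P}(X_u=R,X_S=x_S)$, $\mathbb{P}(X_S=x_S)$ (resp. their hatted versions), where the conditional probabilities that appear are quotients of these. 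So it suffices to argue that, with probability at least $1-\omega$, for every $u$, every $I$ with $|I|\le 2^s$, every $S$ with $|S|\le L$, and every assignment $(R,G,x_S)$, the empirical frequency of the corresponding event is within an additive error $\eta$ of its true value, and then to push $\eta$ through the map that assembles $\nu$.

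Second, I would control the divisions using the width $\gamma$ of the induced MRF. Since $\sum_{S\ni i}|\hat{f}(S)|\le\gamma$ for every observed variable $i$, the conditional law of a single observed variable given all the others puts mass at least $\sigma(-2\gamma)\ge\tfrac12 e^{-2\gamma}$ on each value, so by the chain rule $\mathbb{P}(X_S=x_S)\ge(\tfrac12 e^{-2\gamma})^{|S|}\ge(\tfrac12 e^{-2\gamma})^{L}$ whenever $|S|\le L$. This is exactly what forces the $2^{2L}/(e^{-2\gamma})^{2L}$ in the sample bound: one factor of $(2e^{2\gamma})^{L}$ is spent to keep $\hat{\mathbb{P}}(X_S=x_S)$ within a constant factor of $\mathbb{P}(X_S=x_S)$ (so that the empirical conditional probabilities are well defined and the quotient map is Lipschitz on the relevant region), and a second factor arises because the error of a conditional-probability estimate scales like $\eta/\mathbb{P}(X_S=x_S)$, while the subsequent average $\hat{\mathbb{E}}_{X_S}$ over the $2^{|S|}$ assignments of $X_S$ contributes the remaining $2^{L}$.

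Third, I would apply a Hoeffding bound: for a single fixed event $E$, $\mathbb{P}(|\hat{\mathbb{P}}(E)-\mathbb{P}(E)|\ge\eta)\le 2e^{-2M\eta^2}$, and then union bound over all relevant tuples $(u,I,S,R,G,x_S)$. Their number is at most an absolute constant times $n^{1+2^s+L}\cdot 2^{1+2^s+L}=(2n)^{1+2^s+L}$ (the index sets contribute $n^{1+2^s+L}$, the sign assignments $2^{1+2^s+L}$), so taking logarithms produces precisely the $\log(1/\omega)+\log(L+2^s+1)+(L+2^s+1)\log(2n)+\log 2$ appearing in the statement. Choosing $\eta$ proportional to $\tau'(e^{-2\gamma})^{L}/2^{L}$ (with an appropriate absolute constant) and the leading constant $60$ large enough then simultaneously makes the propagated error on $\nu$ strictly below $\tau'/2$ --- i.e. forces event $A$ --- and makes the total failure probability at most $\omega$, for every $M$ exceeding the stated quantity.

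The main obstacle is the error propagation of the first step: $\nu$ is built from conditional probabilities (quotients), from products of two such conditional probabilities, and from an average of these against the estimated law of $X_S$, so verifying that a uniform additive error $\eta$ on the raw frequencies blows up to an error on $\nu$ by at most a factor of order $2^{L}(e^{-2\gamma})^{-L}$ --- and no worse --- is the delicate part, and it is exactly where the exponents $2L$ in the bound get pinned down. Since this is carried out in full in \cite{hamilton2017information} and the only structural change here is the range of $|I|$, the cleanest write-up is to quote their Lemma 5.3 and observe that substituting the cardinality bound $d-1\mapsto 2^s$ everywhere in their counting yields the inequality as stated.
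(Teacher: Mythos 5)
Your proposal is correct and matches the paper's treatment: the paper gives no independent proof, stating the lemma as a direct corollary of Lemma 5.3 in \cite{hamilton2017information} with the set-size parameter $d-1$ replaced by $2^s$ in the union-bound counting, which is exactly the reduction you arrive at. Your sketch of the internals (Hoeffding plus union bound, with the $2^{2L}/(e^{-2\gamma})^{2L}$ factor coming from the lower bound $\mathbb{P}(X_S=x_S)\geq(\tfrac12 e^{-2\gamma})^{L}$ on the conditioning events) is consistent with the cited argument and goes beyond what the paper writes out.
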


Now, Lemmas \ref{lemma_5_4}-\ref{lemma_5_6} provide the ingredients necessary to prove correctness, assuming that event $A$ holds.

\begin{lemma}[Analogue of Lemma 5.4 in \cite{hamilton2017information}]
\label{lemma_5_4}
Assume that the event $A$ holds. Then every time variables are added to $S$ in Step 2 of the algorithm, the mutual information $I(X_u;X_S)$ increases by at least $(\tau')^2/8$.
\end{lemma}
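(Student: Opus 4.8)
The plan is to mirror the argument of Lemma 5.4 in \cite{hamilton2017information}, combining the guarantee of event $A$ with the standard relation between $\nu$ and mutual information, together with the chain rule for mutual information. First I would fix an iteration of Step 2 at which a set $I$ (with $|I| \leq 2^s$) is added to the current set $S$; by the condition checked in Step 2, this means $\hat{\nu}_{u,I|S} > \tau'$. Since event $A$ holds and $|I| \leq 2^s$, $|S| \leq L$, we have $|\nu_{u,I|S} - \hat{\nu}_{u,I|S}| < \tau'/2$, hence $\nu_{u,I|S} > \tau'/2$.

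Next I would invoke the inequality $\sqrt{\tfrac12 I(X_u;X_I|X_S)} \geq \nu_{u,I|S}$ stated in Section \ref{sec:prelim}, which gives
\[
I(X_u;X_I|X_S) \geq 2\,\nu_{u,I|S}^2 > 2\left(\frac{\tau'}{2}\right)^2 = \frac{(\tau')^2}{2}.
\]
Wait — the claimed increase is $(\tau')^2/8$, not $(\tau')^2/2$, so I should be slightly more careful about exactly which conditional mutual information the Step-2 test controls and how it relates to $I(X_u;X_I|X_S)$; in any case the bound $\nu_{u,I|S} > \tau'/2$ yields at least $(\tau')^2/2 \geq (\tau')^2/8$, so the stated bound certainly follows. (If the intended reading is that the test is on $\hat{\nu}$ with a weaker guarantee, the constant still works out: $\nu_{u,I|S} > \tau'/2$ already suffices.)

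Finally, I would apply the chain rule for mutual information. Writing $S' = S \cup I$ for the updated set, we have
\[
I(X_u;X_{S'}) = I(X_u;X_S) + I(X_u;X_I \mid X_S) \geq I(X_u;X_S) + \frac{(\tau')^2}{8},
\]
using that $S$ and $I$ are disjoint (which holds because Step 2 selects $I \subseteq [n]\setminus\{u\}\setminus S$) and that the conditional mutual information term is at least $(\tau')^2/8$ by the previous paragraph. This establishes the claimed per-iteration increase. The main obstacle, such as it is, is bookkeeping: making sure the disjointness hypotheses needed for both the $\nu$/mutual-information bound and the chain rule are in force at the moment Step 2 fires, and tracking the exact form of the Step-2 condition (whether it is phrased in terms of $\hat{\nu}_{u,I|S}$ or of a per-configuration quantity) so that event $A$ can be applied with the correct parameters $|I| \leq 2^s$ and $|S| \leq L$. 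No genuinely new idea beyond \cite{hamilton2017information} is required here; the only change from their Lemma 5.4 is that $I$ now ranges over sets of size up to $2^s$ rather than up to $d-1$, which event $A$ is designed to accommodate.
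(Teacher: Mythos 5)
Your proposal is correct and follows essentially the same route as the paper: use event $A$ to convert the Step-2 test $\hat{\nu}_{u,I|S} > \tau'$ into $\nu_{u,I|S} > \tau'/2$, lower bound $I(X_u;X_I|X_S)$ via the $\nu$-versus-mutual-information inequality, and conclude by the chain rule. The only difference is a constant: the paper follows \cite{hamilton2017information} in using $\sqrt{\tfrac12 I(X_u;X_I|X_S)} \geq \tfrac12 \nu_{u,I|S}$, which yields exactly $(\tau')^2/8$, whereas you use the sharper inequality from Section \ref{sec:prelim} and obtain $(\tau')^2/2$, which of course implies the stated bound.
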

\begin{proof}
Following the proof of Lemma 5.4 in \cite{hamilton2017information}, we have that when event $A$ holds,
\[\sqrt{\frac{1}{2} \cdot I(X_u;X_I|X_S)} \geq \frac{1}{2} \nu_{u,I|S} \geq \frac{1}{2} (\hat{\nu}_{u,I|S}-\tau'/2).\]
The algorithm only adds variables to $S$ if $\hat{\nu}_{u,I|S} > \tau'$, so
\[I(X_u;X_I|X_S) \geq \frac{1}{2}(\hat{\nu}_{u,I|S} - \tau'/2)^2 \geq  \frac{1}{2} (\tau' - \tau'/2)^2 = (\tau')^2/8.\]
\end{proof}

\begin{lemma}[Analogue of Lemma 5.5 in \cite{hamilton2017information}]
\label{lemma_5_5}
Assume that the event $A$ holds. Then at the end of Step 2 $S$ contains all of the neighbors of $u$.
\end{lemma}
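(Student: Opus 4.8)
The plan is to argue by contradiction, combining the monotone growth of $I(X_u; X_S)$ from Lemma \ref{lemma_5_4} with the key structural result of Theorem \ref{thm:main_nu_bound}. First I would suppose that at the end of Step 2 the set $S$ does \emph{not} contain the entire MRF neighborhood of $u$. Step 2 terminates for one of two reasons: either $|S| > L$, or there is no set $I$ of size at most $2^s$ disjoint from $S \cup \{u\}$ with $\hat{\nu}_{u,I|S} > \tau'$. I would rule out the first case by a counting argument: each addition in Step 2 increases $I(X_u; X_S)$ by at least $(\tau')^2/8$ by Lemma \ref{lemma_5_4}, and mutual information between a single bit $X_u$ and anything is at most $\log 2 < 1$; since each addition adds at most $2^s$ elements to $S$, after $|S|$ reaches $L = 8/(\tau')^2$ we would have added at least $L/2^s$ times (actually the simpler bound: at least $L/2^s \geq 1$ additions are impossible once $I(X_u;X_S)$ would exceed $1$; more carefully, the number of additions is at most $8/(\tau')^2 \cdot (\text{something} \le 1)$, hence $|S|$ never exceeds $L$ before Step 2 halts for the second reason). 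So Step 2 must halt because no good $I$ exists.

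Now I would invoke Theorem \ref{thm:main_nu_bound}: since $S$ does not contain the whole MRF neighborhood of $u$, there exists a subset $I$ of the MRF neighborhood of $u$ with $|I| \le 2^s$ and $\nu_{u,I|S} \ge 2\tau'$. (Note $I$ is disjoint from $S$, since $I$ consists of neighbors not already in $S$ — if necessary, restrict $I$ to $I \setminus S$; but the cleanest route is that Theorem \ref{thm:main_nu_bound} already produces $I$ disjoint from $S$, as in \cite{hamilton2017information}.) Under event $A$ we have $|\nu_{u,I|S} - \hat{\nu}_{u,I|S}| < \tau'/2$, so $\hat{\nu}_{u,I|S} > 2\tau' - \tau'/2 = 3\tau'/2 > \tau'$. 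This contradicts the halting condition of Step 2, which required that no such $I$ exist. Hence at the end of Step 2, $S$ contains all neighbors of $u$.

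The main obstacle, and the only place requiring care, is the bookkeeping in the first case — showing $|S|$ cannot overshoot $L$ so that the second halting reason is the operative one. This needs the bound $I(X_u; X_S) \le H(X_u) \le \log 2$, the increment bound $(\tau')^2/8$ per addition from Lemma \ref{lemma_5_4}, and the observation that each addition enlarges $S$ by at most $2^s$; putting these together shows the number of additions is at most $(8 \log 2)/(\tau')^2 < 8/(\tau')^2 = L$, hence $|S| \le L \cdot 2^s$ throughout — and in fact the stopping test $|S| \le L$ in Step 2 together with this increment bound guarantees Step 2 always stops because of the absence of a high-$\hat\nu$ set rather than the size cap, provided $L$ is chosen as above. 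Everything else is a direct application of Theorem \ref{thm:main_nu_bound} and the definition of event $A$, mirroring the argument of Lemma 5.5 in \cite{hamilton2017information}.
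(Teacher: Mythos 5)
Your second case is handled correctly and exactly as in the paper: if Step~2 halted because no set $I$ of size at most $2^s$ with $\hat{\nu}_{u,I|S} > \tau'$ exists, but $S$ misses a neighbor of $u$, then Theorem~\ref{thm:main_nu_bound} produces $I \subseteq [n]\setminus(\{u\}\cup S)$ with $|I|\leq 2^s$ and $\nu_{u,I|S}\geq 2\tau'$, and event $A$ gives $\hat{\nu}_{u,I|S} > \tau'$, a contradiction. The gap is in your treatment of the other termination branch, $|S| > L$. From Lemma~\ref{lemma_5_4} and $I(X_u;X_S)\leq H(X_u)\leq 1$ you correctly bound the number of additions by $8/(\tau')^2 = L$, but you then conclude that ``Step 2 always stops because of the absence of a high-$\hat{\nu}$ set rather than the size cap.'' That does not follow: each addition can insert up to $2^s$ new vertices, so at most $L$ additions only gives $|S|\leq 2^s L$, which is entirely compatible with the loop exiting because $|S| > L$ --- your own intermediate statement ``$|S|\leq L\cdot 2^s$ throughout'' concedes this and does not imply $|S|\leq L$. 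Since exiting via the size cap with neighbors still missing is precisely the scenario the lemma must exclude, your argument as written does not establish the conclusion.

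The paper closes that branch differently: it does not claim the cap is never reached, but supposes the loop exited with $|S| > L$ and derives a contradiction from Lemma~\ref{lemma_5_4}, namely $I(X_u;X_S) > L(\tau')^2/8 = 1 \geq H(X_u)$, so under event $A$ this exit is impossible and only the no-good-set branch remains, which is then handled as you did. (Note that even this step hinges on relating ``$|S| > L$'' to the number of additions --- each addition adds at least one and at most $2^s$ vertices --- and this per-addition versus per-vertex bookkeeping is exactly the point your write-up blurs; the argument should be phrased in terms of the number of additions, with the loop cap chosen accordingly.) To repair your proposal, replace the claim that the size cap is never triggered by a contradiction argument for that branch, or do the counting carefully enough to show that triggering the cap is inconsistent with the mutual-information budget.
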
 
\begin{proof}
Following the proof of Lemma 5.5 in \cite{hamilton2017information}, we have that Step 2 ended either because $|S| > L$ or because there was no set of variables $I \subseteq [n] \setminus (\{u\} \cup S)$ with $\hat{\nu}_{u,I|S} > \tau'$.

If $|S| > L$, we have by Lemma \ref{lemma_5_4} that $I(X_u; X_S) > L \cdot (\tau')^2/8 = 1$. However, because $X_u$ is a binary variable, we also have $1 \geq H(X_u) \geq I(X_u;X_S)$, so we obtain a contradiction.

Suppose then that $|S| \leq L$, but that there was no set of variables $I \subset [n] \setminus (\{u\} \cup S)$ with $|I| \leq 2^s$ and $\hat{\nu}_{u,I|S} > \tau'$. If $S$ does not contain all of the neighbors of $u$, then we know by Theorem \ref{thm:main_nu_bound} that there exists a set of variables $I \subseteq [n] \setminus (\{u\} \cup S)$ with $|I| \leq 2^s$  with $\nu_{u,I|S} \geq 2\tau'$. Because event $A$ holds, we know that $\hat{\nu}_{u,I|S} \geq \nu_{u,I|S} - \tau'/2 > \tau'$. This contradicts our supposition that there was no such set of variables.

Therefore, $S$ must contain all of the neighbors of $u$.
\end{proof}

\begin{lemma}[Analogue of Lemma 5.6 in \cite{hamilton2017information}]
\label{lemma_5_6}
Assume that the event $A$ holds. If at the start of Step 3 $S$ contains all of the neighbors of $u$, then at the end of Step 3 the remaining set of variables are exactly the neighbors of $u$.
\end{lemma}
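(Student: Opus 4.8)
The plan is to establish two complementary facts about Step 3, writing $N(u)$ for the MRF neighborhood of $u$: (i) no element of $N(u)$ is ever removed, and (ii) every element of $S$ outside $N(u)$ is removed. I would process the variables of $S$ one at a time in the order used by Step 3, maintaining the invariant $N(u)\subseteq S$; this holds at the start by hypothesis, is preserved when a neighbor is examined by (i), and is trivially preserved when a non-neighbor is examined and removed. At termination, (i) and (ii) together with the invariant give $S=N(u)$. Throughout, $S$ only shrinks during Step 3, so (as already noted in the proof of Lemma~\ref{lemma_5_5}) every conditioning set that arises has size at most $L$, and hence event $A$ controls each tested quantity $\hat\nu_{u,i|S\setminus\{i\}}$.

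To handle non-neighbors, I would take a non-neighbor $i\in S$ that is examined while $N(u)\subseteq S$ and set $S':=S\setminus\{i\}$. Since $i\notin N(u)$, we have $N(u)\subseteq S'$, so by the Markov property of the MRF of the observed variables $X_u$ is conditionally independent of $X_i$ given $X_{S'}$; thus $\mathbb{P}(X_u=r,X_i=g\mid X_{S'})=\mathbb{P}(X_u=r\mid X_{S'})\,\mathbb{P}(X_i=g\mid X_{S'})$ for all $r,g$, and therefore $\nu_{u,i|S'}=0$. Under event $A$ this gives $\hat\nu_{u,i|S'}\le\nu_{u,i|S'}+\tau'/2=\tau'/2<\tau'$, so Step 3 removes $i$.

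For neighbors, I would take a neighbor $i\in S$ examined while $N(u)\subseteq S$ and again set $S':=S\setminus\{i\}$. Then $S'$ fails to contain all of $N(u)$ (it omits exactly $i$), so Theorem~\ref{thm:main_nu_bound} supplies a set $I\subseteq N(u)$ with $|I|\le 2^s$ and $\nu_{u,I|S'}\ge 2\tau'$. Since $I\subseteq N(u)\subseteq S'\cup\{i\}$ while $\nu_{u,I|S'}>0$ forces $I\not\subseteq S'$, we must have $i\in I$ and $I\setminus\{i\}\subseteq S'$. Because conditioning on $X_{S'}$ already fixes the coordinates $X_{I\setminus\{i\}}$, unfolding the definition of $\nu$ (the configurations of $G$ that disagree with those fixed values contribute nothing) yields $\nu_{u,I|S'}=2^{-(|I|-1)}\,\nu_{u,i|S'}$, hence $\nu_{u,i|S'}\ge\nu_{u,I|S'}\ge 2\tau'$. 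Under event $A$, $\hat\nu_{u,i|S'}\ge\nu_{u,i|S'}-\tau'/2\ge 3\tau'/2>\tau'$, so Step 3 does not remove $i$. Combining this with (ii) and the invariant, after Step 3 every non-neighbor has been removed and every neighbor retained, i.e.\ $S=N(u)$.

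The step I expect to be the main obstacle is the neighbor case: Theorem~\ref{thm:main_nu_bound} only guarantees a low-order set $I$ of size up to $2^s$, not the singleton $\{i\}$ that Step 3 actually tests. The key observation that resolves this is that once $S'\supseteq N(u)\setminus\{i\}$, any such $I$ is forced to contain $i$ with its remaining coordinates pinned down by the conditioning, so $\nu_{u,I|S'}$ collapses to a positive multiple of $\nu_{u,i|S'}$; once that is granted, the rest is a routine application of event $A$ and conditional independence. (Alternatively one could invoke the single-coordinate non-degeneracy estimate of \cite{hamilton2017information} directly, but routing through Theorem~\ref{thm:main_nu_bound} keeps the argument self-contained.)
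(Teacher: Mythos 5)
Your proof is correct and follows essentially the same route as the paper: non-neighbors are pruned because conditioning on a set containing all of $N(u)$ gives $\nu_{u,i|S\setminus\{i\}}=0$, hence $\hat\nu_{u,i|S\setminus\{i\}}<\tau'$ under $A$, while neighbors survive because applying Theorem \ref{thm:main_nu_bound} with conditioning set $S\setminus\{i\}$ yields $\nu_{u,i|S\setminus\{i\}}\geq 2\tau'$ and hence $\hat\nu_{u,i|S\setminus\{i\}}>\tau'$. Your collapsing identity $\nu_{u,I|S'}=2^{-(|I|-1)}\nu_{u,i|S'}$ for $I\setminus\{i\}\subseteq S'$ only spells out a step the paper leaves implicit (the guaranteed set, consisting of neighbors of $u$ outside the conditioning set, is just $\{i\}$), so the two arguments are essentially identical.
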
 
\begin{proof}
Following the proof of Lemma 5.6 in \cite{hamilton2017information}, we have that if event $A$ holds, then
\[\hat{\nu}_{u,i|S\setminus\{i\}} < \nu_{u,i|S\setminus\{i\}} + \tau'/2 \leq \sqrt{\frac{1}{2}I(X_u;X_i|X_S)} + \tau'/2 = \tau'/2\]
for all variables $i$ that are not neighbors of $u$. Then all such variables are pruned. Furthermore, by Theorem \ref{thm:main_nu_bound},
\[\hat{\nu}_{u,i|S\setminus\{i\}} \geq \nu_{u,i|S\setminus\{i\}} - \tau'/2 \geq 2\tau' - \tau'/2 > \tau'\]
for all variables $i$ that are neighbors of $u$, and thus no neighbor is pruned.
\end{proof}

\begin{proof}[Proof of Theorem \ref{thm:algorithm} (Analogue of Theorem 5.7 in \cite{hamilton2017information})]
Event $A$ occurs with probability $1-\omega$ for our choice of $M$. By Lemmas \ref{lemma_5_5} and \ref{lemma_5_6}, the algorithm returns the correct set of neighbors of $u$ for every observed variable $u$. 

To analyze the running time, observe that when running the algorithm at an observed variable $u$, the bottleneck is Step 2, in which there are at most $L$ steps and in which the algorithm must loop over all subsets of vertices in $[n] \setminus \{u\} \setminus S$ of size $2^s$, of which there are $\sum_{l=1}^{2^s} \binom{n}{l} = O(n^{2^s})$ many. Running the algorithm at all observed variables thus takes $O(MLn^{2^s+1})$ time.
\end{proof}

\section{Structure Learning Algorithm of Section \ref{sec:alpha}}

The steps of the structure learning algorithm are:
\begin{enumerate}
  \item Fix parameters $s$, $\tau'(\zeta \cdot \eta)$, $\tau'(\zeta)$, $L$. Fix observed variable $u$. Set $S:=\emptyset$.
  \item While $|S|\leq L$ and there exists a set of observed variables $I \subseteq [n] \setminus \{u\} \setminus S$ of size at most $2^{s}$ such that $\hat{\nu}_{u,I|S} > \tau'(\zeta \cdot \eta)$, set $S:=S\cup I$.
  \item For each $i \in S$, if $\hat{\nu}_{u,I|S\setminus \{i\}} < \tau'(\zeta)$ for all sets of observed variables $I \subseteq [n] \setminus \{u\} \setminus (S \setminus \{i\})$ of size at most $2^s$, mark $i$ for removal from $S$.
  \item Remove from $S$ all variables marked for removal.
  \item Return set $S$ as an estimate of the neighborhood of $u$.
\end{enumerate}
In the algorithm above, we use
\[L = 8/(\tau'(\zeta \cdot \eta))^2, \quad \eta = \frac{1}{\sqrt{3} \cdot 2^{D/2+2^{s}} \cdot D^{2^{s-1}(2^s+2)}},\]
\[\tau'(x) = \frac{1}{(4d)^{2^{s}}} \left(\frac{1}{d}\right)^{2^{s}(2^{s}+1)} \tau(x), \text{ and } \tau(x) = \frac{1}{2} \frac{4x^2 (e^{-2\gamma})^{d+D-1} }{d^{4d} 2^{d+1} \binom{D}{d-1} \gamma e^{2\gamma}}.\]

The main difference in the analysis of this algorithm compared to that of the algorithm in Section \ref{sec:algorithm} is that, at the end of Step 2, $S$ is no longer guaranteed to contain all the neighbors of $u$. Then, a smaller threshold is used in Step 2 compared to Step 3 in order to ensure that $S$ contains enough neighbors of $u$ such that the mutual information proxy with any non-neighbor is small.

\section{Proof of Theorem \ref{thm:recov_alpha}}

See Appendix C for a detailed description of the structure learning algorithm in Section \ref{sec:alpha}.

The correctness of the algorithm is based on the results in Theorem \ref{thm_recov_alpha_nu} and Lemma \ref{lemma_prune_alpha_nu}, which are analogues of Theorem \ref{thm:main_nu_bound} and Lemma \ref{lemma_5_6}. We state these, and then we prove Theorem \ref{thm:recov_alpha} based on them. Then, Section \ref{proof_alpha_1} proves Theorem \ref{thm_recov_alpha_nu} and Section \ref{proof_alpha_2} proves Lemma \ref{lemma_prune_alpha_nu}.

\begin{theorem}[Analogue of Theorem \ref{thm:main_nu_bound}]
\label{thm_recov_alpha_nu}
Fix an observed variable $u$ and a subset of observed variables $S$, such that the two are disjoint. Suppose there exists a neighbor $i$ of $u$ not contained in $S$ such that the MRF of the observed variables contains a Fourier coefficient associated with both $i$ and $u$ that has absolute value at least $\zeta$. Then there exists some subset $I$ of the MRF neighborhood of $u$ with $|I| \leq 2^s$ such that
\[\nu_{u,I|S} \geq \frac{1}{(4d)^{2^{s}}} \left(\frac{1}{d}\right)^{2^{s}(2^{s}+1)} \frac{4\zeta^2 (e^{-2\gamma})^{d+D-1} }{d^{4d} 2^{d+1} \binom{D}{d-1} \gamma e^{2\gamma}} = 2\tau'(\zeta).\]
\end{theorem}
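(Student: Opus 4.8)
The plan is to follow the two-step structure of the proof of Theorem~\ref{thm:main_nu_bound}: first produce a subset $I$ of the MRF neighborhood of $u$ with $|I| \le d-1$ on which the mutual information proxy is bounded below, and then compress $I$ to size at most $2^s$ by invoking Theorem~\ref{thm:nu_bound_alone}. The only place where anything changes relative to Theorem~\ref{thm:main_nu_bound} is the first step: instead of quoting Theorem~4.6 of \cite{hamilton2017information} as a black box with the global minimum nonzero Fourier coefficient $\alpha$, I would re-run its argument while tracking only the single Fourier coefficient of absolute value at least $\zeta$ supplied by the hypothesis, so that the bound comes out as $2\tau(\zeta)$ rather than $2\tau(\alpha)$.

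In more detail, the hypothesis gives a missing neighbor $i \notin S$ and a set $T \subseteq [n]$ with $u, i \in T$, $|T| \le d$, and $|\hat f(T)| \ge \zeta$; since $\hat f(T) \ne 0$, the set $T$ is a clique of the MRF, so every element of $T \setminus \{u\}$ is a neighbor of $u$. This set $T$ plays the role of the ``witnessing'' interaction in the proof of Theorem~4.6 of \cite{hamilton2017information}. The one point that needs care is that $T$ need not be a \emph{maximal} set with nonzero Fourier coefficient, so before running the combinatorial part of that proof one first conditions on the at most $D$ variables $v \notin T$ that can extend $T$ to a larger clique with nonzero Fourier coefficient (all such $v$ are MRF-neighbors of $u$), choosing their values by a second-moment argument so that the degree-$|T|$ Fourier coefficient of the resulting conditional MRF on the variables of $T$ has absolute value at least $\zeta$; this is possible because, averaging over uniform signs for those variables, the second moment of that coefficient equals $\sum_{J \supseteq T} \hat f(J)^2 \ge \hat f(T)^2 \ge \zeta^2$. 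The remainder of the argument of \cite{hamilton2017information} --- extracting from a nonzero top-degree coefficient of an MRF on at most $d$ variables a subset $I' \subseteq T \setminus \{u\}$ with $|I'| \le d-1$ on which the proxy is large, followed by averaging back over the conditioning, which costs exactly the $(e^{-2\gamma})^{d+D-1}$, $2^{d+1}$, $\binom{D}{d-1}$, $d^{4d}$ and $\gamma e^{2\gamma}$ factors that appear in $\tau(\cdot)$ --- then yields a subset $I$ of neighbors of $u$ with $|I| \le d-1$ and
\[\nu_{u,I|S} \ge \frac{4\zeta^2 (e^{-2\gamma})^{d+D-1}}{d^{4d} 2^{d+1} \binom{D}{d-1} \gamma e^{2\gamma}} = 2\tau(\zeta).\]

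Applying Theorem~\ref{thm:nu_bound_alone} to this $I$ (which satisfies its hypotheses, being a subset of the MRF neighborhood of $u$ of size at most $d-1$) produces a subset $I' \subseteq I$ with $|I'| \le 2^s$ and
\[\nu_{u, I'|S} \ge \frac{1}{(4d)^{2^s}} \left(\frac{1}{d}\right)^{2^s(2^s+1)} 2\tau(\zeta) = 2\tau'(\zeta),\]
which is exactly the claimed bound, since $I'$ is a subset of the MRF neighborhood of $u$ with $|I'| \le 2^s$.

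The main obstacle I anticipate is the first step: confirming that the proof of Theorem~4.6 in \cite{hamilton2017information} can indeed be localized to a single edge's Fourier coefficient of magnitude $\zeta$ rather than the global minimum $\alpha$. Concretely, one must check that the second-moment / averaging step over the extension variables does not lose the $\zeta$ lower bound to cancellation from higher-order interactions $\hat f(J)$ with $J \supsetneq T$, that only $O(d+D)$ variables ever get conditioned on (so that the $\gamma$-dependent factors match those in $\tau$), and that the conditioning can be taken consistent with the conditioning on $X_S$ already present in $\nu_{u,I|S}$. Once that is in place, the compression step via Theorem~\ref{thm:nu_bound_alone} is immediate and identical to the corresponding step in the proof of Theorem~\ref{thm:main_nu_bound}.
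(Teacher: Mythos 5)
Your overall skeleton is the same as the paper's: first establish a $\zeta$-localized analogue of Theorem 4.6 of \cite{hamilton2017information} giving a set $I$ of neighbors with $|I|\le d-1$ and $\nu_{u,I|S}\ge 2\tau(\zeta)$, then compress via Theorem \ref{thm:nu_bound_alone}; the compression step in your write-up is identical to the paper's and is fine. The gap is in the first step, which is the entire content of this theorem beyond Theorem \ref{thm:main_nu_bound}, and which you leave as ``re-run the argument of \cite{hamilton2017information}'' together with an acknowledged list of unchecked points. The paper resolves this step differently and completely: it observes that the proof of Theorem 4.6 uses the nonvanishing assumption only in Lemmas 3.3 and 4.5 of \cite{hamilton2017information}, and that both go through when just one, not necessarily maximal, hyperedge containing $u$ is $\zeta$-nonvanishing; non-maximality is absorbed by a small strengthening of their tensor non-cancellation lemma (Lemma \ref{lemma:clique_noncanc_2}), which costs only the $s^s\le d^d$ factor already present in $\tau$ and, crucially, requires no additional conditioning (see Lemmas \ref{re_lemma_3_3} and \ref{re_lemma_4_5}, Theorem \ref{thm:mod_4_6}).

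Your substitute mechanism --- condition on the at most $D$ extension variables of $T$ and choose their values by a second-moment argument so that the induced top coefficient on $T$ stays at least $\zeta$ --- has a sound core identity, but its integration is exactly where it breaks. Those extension variables need not lie in $S$, while $\nu_{u,I|S}$ conditions only on $X_S$. For extension variables inside $S$ you can only lower bound the probability of the good assignment by a $\delta$-power, as in Lemma \ref{re_lemma_4_5}; for extension variables outside $S$, the fixing must be routed through the variance step in the analogue of Lemma 3.3, where the ``for some fixed choice of $Y_{\sim J}$'' is obtained from the law of total variance and cannot simultaneously be forced to agree with your second-moment choice unless you restrict to that assignment and pay an extra probability factor of order $\delta^{D}$. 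So, as written, your argument either leaves the witnessing coefficient exposed to cancellation when $Y_{\sim J}$ is fixed adversarially (the very problem the paper's Lemma \ref{lemma:clique_noncanc_2} is built to handle), or it yields a bound weaker than the claimed $2\tau'(\zeta)$ by additional $(e^{-2\gamma})^{O(D)}$ factors; the assertion that the constants ``come out exactly'' as in $\tau(\cdot)$ is not substantiated. This is the missing idea you would need to supply, and the paper's modified non-cancellation lemma is one clean way to supply it.
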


Let $A_{\zeta, \eta}$ be the event that for all $u$, $I$, and $S$ with $|I| \leq 2^s$ and $|S| \leq L$ simultaneously, $|\nu_{u,I|S}-\hat{\nu}_{u,I|S}| < \tau'(\zeta \cdot \eta)/2$.

\begin{lemma}[Analogue of Lemma \ref{lemma_5_6}]
\label{lemma_prune_alpha_nu}
Assume that the event $A_{\zeta, \eta}$ holds. If at the start of Step 3 $S$ contains all of the neighbors of $u$ which are connected to $u$ through a Fourier coefficient of absolute value at least $\zeta \cdot \eta$, then at the end of Step 4 the remaining set of variables is a subset of the neighbors of $u$, such that all neighbors which are connected to $u$ through a Fourier coefficient of absolute value at least $\zeta$ are included in the subset.
\end{lemma}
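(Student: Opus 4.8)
The plan is to split the statement into two claims — first that every variable surviving Step 4 is a genuine neighbor of $u$ (soundness), and second that every neighbor connected to $u$ through a Fourier coefficient of absolute value at least $\zeta$ survives (completeness) — and to prove each by contradiction using the concentration guarantee of the event $A_{\zeta,\eta}$, exactly as in the proof of Lemma \ref{lemma_5_6}. For \textbf{soundness}, suppose $i \in S$ at the start of Step 3 is \emph{not} a neighbor of $u$. Then for every $I \subseteq [n]\setminus\{u\}\setminus (S\setminus\{i\})$ we have $X_u \perp X_i \mid X_{[n]\setminus\{u,i\}} \supseteq X_{(S\setminus\{i\}) \cup I}$ when $i \notin I$; and when $i \in I$ the mutual information $I(X_u; X_I \mid X_{S\setminus\{i\}})$ need not vanish, so I must argue that including the non-neighbor $i$ in $I$ cannot by itself make $\hat\nu$ large — here I would lean on the hypothesis that $S$ already contains all neighbors of $u$ connected through a coefficient of magnitude at least $\zeta\cdot\eta$, so conditioning on $S\setminus\{i\}$ together with $i$ leaves only ``weak'' dependence between $X_u$ and the rest, which Theorem \ref{thm_recov_alpha_nu} (applied contrapositively, with threshold $\zeta\cdot\eta$ rather than $\zeta$) bounds. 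Concretely, since no Fourier coefficient of magnitude $\ge \zeta\cdot\eta$ joins $u$ to a variable outside $S$, Theorem \ref{thm_recov_alpha_nu} gives $\nu_{u,I\mid S\setminus\{i\}} < 2\tau'(\zeta\cdot\eta)$ for all admissible $I$, hence under $A_{\zeta,\eta}$, $\hat\nu_{u,I\mid S\setminus\{i\}} < \nu_{u,I\mid S\setminus\{i\}} + \tau'(\zeta\cdot\eta)/2$. This needs to be pushed below $\tau'(\zeta)$; since $\zeta > \zeta\cdot\eta$ and $\tau'$ is increasing, $\tau'(\zeta) > \tau'(\zeta\cdot\eta)$, and I will need the slack between $2\tau'(\zeta\cdot\eta) + \tau'(\zeta\cdot\eta)/2$ and $\tau'(\zeta)$ — this is precisely why $\eta$ was chosen as small as it is, so that $\tau'(\zeta)$ dominates $3\tau'(\zeta\cdot\eta)$ (note $\tau'(x)$ scales like $x^2$, so $\tau'(\zeta\cdot\eta)/\tau'(\zeta) = \eta^2$, which is tiny). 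Thus the marking condition in Step 3 is met and $i$ is removed.

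For \textbf{completeness}, suppose $i$ is a neighbor of $u$ with a joining Fourier coefficient of absolute value at least $\zeta$, and suppose toward a contradiction that $i$ is marked for removal, i.e., $\hat\nu_{u,I\mid S\setminus\{i\}} < \tau'(\zeta)$ for every admissible $I$ of size at most $2^s$. I would apply Theorem \ref{thm_recov_alpha_nu} with the set $S\setminus\{i\}$ in place of $S$: since $i \notin S\setminus\{i\}$ and the coefficient joining $i$ and $u$ has magnitude at least $\zeta$, there exists some $I$ of size at most $2^s$ with $\nu_{u,I\mid S\setminus\{i\}} \ge 2\tau'(\zeta)$. But then under $A_{\zeta,\eta}$ we get $\hat\nu_{u,I\mid S\setminus\{i\}} \ge \nu_{u,I\mid S\setminus\{i\}} - \tau'(\zeta\cdot\eta)/2 \ge 2\tau'(\zeta) - \tau'(\zeta\cdot\eta)/2 > \tau'(\zeta)$, contradicting the marking condition. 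Hence $i$ is not removed and survives into the output. This direction is the clean one; note that the concentration window in $A_{\zeta,\eta}$ is $\tau'(\zeta\cdot\eta)/2$, which is far smaller than $\tau'(\zeta)$, so there is ample room.

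Combining the two parts: after Step 4 the surviving set is contained in the true neighborhood of $u$ (soundness) and contains every neighbor joined to $u$ by a Fourier coefficient of magnitude at least $\zeta$ (completeness), which is the claim. \textbf{The main obstacle} I anticipate is the soundness direction, specifically handling subsets $I$ that contain the candidate non-neighbor $i$ itself: one must verify that conditioning on $S\setminus\{i\}$ and then looking at $X_I$ with $i\in I$ does not create spurious large $\hat\nu$, which is exactly where the hypothesis ``$S$ contains all neighbors connected through a coefficient $\ge \zeta\cdot\eta$'' and the gap between thresholds $\tau'(\zeta\cdot\eta)$ and $\tau'(\zeta)$ must be used carefully — I would want to double-check that the case $i \in I$ is correctly covered by an application of Theorem \ref{thm_recov_alpha_nu} in which the ``$u$–to–outside-$S$'' coefficients that remain are all below $\zeta\cdot\eta$, possibly after re-deriving the relevant bound directly rather than black-boxing the theorem. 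The remaining arithmetic (chaining the $\pm\tau'(\zeta\cdot\eta)/2$ concentration errors and checking $3\tau'(\zeta\cdot\eta) < \tau'(\zeta)$, i.e. $3\eta^2 < 1$) is routine given the explicit value of $\eta$.
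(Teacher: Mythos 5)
Your completeness direction is correct and is exactly the paper's argument: apply Theorem \ref{thm_recov_alpha_nu} with conditioning set $S\setminus\{i\}$, and use the $\tau'(\zeta\cdot\eta)/2$ concentration window to keep $\hat\nu$ above $\tau'(\zeta)$.

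The soundness direction, however, has a genuine gap, and it is precisely at the point you flagged as the ``main obstacle.'' You invoke Theorem \ref{thm_recov_alpha_nu} ``contrapositively'' to conclude that, since no coefficient of magnitude at least $\zeta\cdot\eta$ joins $u$ to a variable outside $S$, one has $\nu_{u,I\mid S\setminus\{i\}} < 2\tau'(\zeta\cdot\eta)$ for all admissible $I$. The theorem does not give this: it is a one-directional existence statement (a large-coefficient neighbor outside the conditioning set forces \emph{some} $I$ with large $\nu$), and its contrapositive only says that if \emph{all} such $\nu$ are small then no large-coefficient neighbor remains outside $S$ --- it provides no upper bound on $\nu_{u,I\mid S\setminus\{i\}}$ in your situation. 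In fact the danger is real: neighbors of $u$ outside $S$ whose coefficients are below $\zeta\cdot\eta$ can still mediate nontrivial dependence between $X_u$ and sets $I$ containing the non-neighbor $i$, and nothing in your argument controls it. The paper closes this gap with two ingredients absent from your proposal: (i) Lemma \ref{lemma:intermediate_nu}, a data-processing-type inequality for $\nu$, which, using that $X_u$ is conditionally independent of $X_I$ given $(X_{I^*}, X_{S\setminus\{i\}})$ where $I^*$ is the set of neighbors of $u$ not in $S$, yields $\nu_{u,I\mid S\setminus\{i\}} \le 2^{D-1}\,\nu_{u,I^*\mid S\setminus\{i\}}$; and (ii) Lemma \ref{lemma_non_cancel_nu_cond_full}, which replaces the possibly large set $I^*$ by some $I^\dagger\subseteq I^*$ with $|I^\dagger|\le 2^s$ at the cost of a factor $(4D)^{2^s}D^{2^s(2^s+1)}$, so that the termination of Step 2 (which searches only over sets of size at most $2^s$) forces $\hat\nu_{u,I^\dagger\mid\cdot} \le \tau'(\zeta\cdot\eta)$. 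Chaining these gives $\nu_{u,I\mid S\setminus\{i\}} \le 3\cdot 2^{D-2}(4D)^{2^s}D^{2^s(2^s+1)}\,\tau'(\zeta\cdot\eta)$, and the definition of $\eta$ is chosen exactly to make this at most $\tau'(\zeta)/4$. Consequently your closing arithmetic is also off: the condition is not ``$3\eta^2<1$'' but $3\cdot 2^{D-2}(4D)^{2^s}D^{2^s(2^s+1)}\eta^2 \le 1/4$, which is why $\eta$ must be as small as $1/(\sqrt{3}\cdot 2^{D/2+2^s}\cdot D^{2^{s-1}(2^s+2)})$. So the missing piece is a substantive new argument (the conditional-independence bound through the unrecovered neighbors combined with the structural non-cancellation lemma), not a routine re-derivation.
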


\begin{proof}[Proof of Theorem \ref{thm:recov_alpha}]
Event $A_{\zeta, \eta}$ occurs with probability $1-\omega$ for our choice of $M$. Then, based on the result of Theorem \ref{thm_recov_alpha_nu}, we have that Lemmas \ref{lemma_5_3}, \ref{lemma_5_4}, and \ref{lemma_5_5} hold exactly as before, with $\tau'(\zeta \cdot \eta)$ instead of $\tau'$, and with the guarantee that at the end of Step 2 $S$ contains all of the neighbors of $u$ wihch are connected to $u$ through a Fourier coefficient of absolute value at least $\zeta \cdot \eta$. Finally, Lemma \ref{lemma_prune_alpha_nu} guarantees that the pruning step results in the desired set of neighbors for every observed variable $u$.

The analysis of the running time is identical to that in Theorem \ref{thm:algorithm}.
\end{proof}

\subsection{Proof of Theorem \ref{thm_recov_alpha_nu}}
\label{proof_alpha_1}

We will argue that Theorem 4.6 in \cite{hamilton2017information} holds in the following modified form, which only requires the existence of one Fourier coefficient that has absolute value at least $\alpha$:
\begin{theorem}[Modification of Theorem 4.6 in \cite{hamilton2017information}]
\label{thm:mod_4_6}
Fix a vertex $u$ and a subset of the vertices $S$ which does not contain the entire neighborhood of $u$, and assume that there exists an $\alpha$-nonvanishing hyperedge containing $u$ and which is not contained in $S \cup \{u\}$. Then taking $I$ uniformly at random from the subsets of the neighbors of $u$ not contained in $S$ of size $s = \min(r-1, |\Gamma(u) \setminus S|)$,
\[\mathbb{E}_I\left[ \sqrt{\frac{1}{2}I(X_u;X_I|X_S)} \right] \geq \mathbb{E}_I[\nu_{u,I|S}] \geq C'(\gamma, K, \alpha)\]
where explicitly
\[C'(\gamma, K, \alpha) := \frac{4\alpha^2\delta^{r+d-1}}{r^{4r}K^{r+1}\binom{D}{r-1}\gamma e^{2\gamma}}.\]
\end{theorem}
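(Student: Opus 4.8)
The plan is to re-run the proof of Theorem~4.6 in \cite{hamilton2017information} essentially line by line, tracking the single place where the global minimum nonzero Fourier coefficient is used and observing that the hypothesis of the modified statement supplies exactly what that step needs. In \cite{hamilton2017information}, one first uses ``$S$ does not contain all of $\Gamma(u)$'' to extract a hyperedge $J$ with $u \in J$, $\hat f(J) \neq 0$, and $J \not\subseteq S \cup \{u\}$; from that point on $\alpha$ enters only through the inequality $|\hat f(J)| \geq \alpha$. I would replace this extraction step by taking $J$ to be the $\alpha$-nonvanishing hyperedge handed to us by hypothesis, which already contains $u$, is not contained in $S \cup \{u\}$, and satisfies $|\hat f(J)| \geq \alpha$; everything downstream is then transcribed unchanged.

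Concretely, the downstream argument has three pieces, and I would verify that none of them touches $\alpha$ except through $|\hat f(J)|$. (i) A Fourier-isolation step: fixing a configuration of the variables outside $J$, one identifies free coordinates inside $A := (J \setminus \{u\}) \setminus S$ --- nonempty because $J \not\subseteq S \cup \{u\}$, and of size at most $r-1$ because $|J| \leq r$ --- such that the coefficient of $\chi_A$ in the effective field acting on $X_u$ has magnitude at least a constant multiple of $|\hat f(J)|$; the width $\gamma$ enters here through lower bounds of order $e^{-2\gamma}$ on conditional marginals, and the $r^{4r}$ and $K^{r+1}$ factors come from the averaging used to dominate the remaining potentials incident to $u$. (ii) This detectable influence is converted into a lower bound on $\nu_{u,A|S}$. (iii) A uniformly random $I$ of size $\min(r-1, |\Gamma(u) \setminus S|)$ contains $A$ with probability at least $1/\binom{D}{r-1}$, and $\nu_{u,I|S} \geq 2^{-|I \setminus A|} \nu_{u,A|S} \geq 2^{-(r-1)} \nu_{u,A|S}$ with the $2^{r-1}$ absorbed into the $K^{r+1}$ factor, so averaging over $I$ gives $\mathbb{E}_I[\nu_{u,I|S}] \geq C'(\gamma, K, \alpha)$; the lone factor $|\hat f(J)|^2 \geq \alpha^2$ is the only source of the $\alpha^2$ in $C'$.

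The step I expect to be the main obstacle is (i). One must confirm that the Fourier-isolation argument of \cite{hamilton2017information} genuinely isolates the contribution of the single hyperedge $J$ using only $|\hat f(J)| \geq \alpha$ together with the width $\gamma$ to control all the other potentials incident to $u$, with no hidden appeal to a global lower bound on potentials. A closely related point is maximality: \cite{hamilton2017information} states the analogous result for a maximal hyperedge, whereas here $J$ need not be maximal, so one must check that the contributions of any hyperedges $J' \supsetneq J$ to the $\chi_A$-coefficient are annihilated by averaging over the coordinates outside $J$ --- each such contribution involves a coordinate of $J' \setminus J \subseteq [n] \setminus J$ --- so that a non-maximal $\alpha$-nonvanishing hyperedge is still usable. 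Once these checks go through, the rest is bookkeeping: $C'(\gamma, K, \alpha)$ emerges exactly as in \cite{hamilton2017information}, now with $\alpha$ read as the local value $|\hat f(J)|$ rather than a global minimum, which is precisely the form needed to obtain Theorem~\ref{thm_recov_alpha_nu} by combining it with Theorem~\ref{thm:nu_bound_alone}.
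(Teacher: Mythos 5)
Your plan coincides with the paper's proof: the paper likewise traces the $\alpha$-assumption through the proof of Theorem 4.6 in \cite{hamilton2017information}, finds that it enters only through the single nonvanishing hyperedge (concretely in Lemmas 3.3 and 4.5 of that paper), and reproves those two lemmas with the hypothesized, possibly non-maximal, $\alpha$-nonvanishing hyperedge, resolving the maximality issue exactly by your centering observation (via a modified version of their Lemma 2.7: the remaining potentials are centered tensors, so they cannot cancel the large entry, at the cost of an $r^r$ factor). Your (i)--(iii) reconstruction of where the other factors arise is only approximate---the second place requiring the check is the conditioning step (their Lemma 4.5), where hyperedges merge after fixing $X_S$ and the $\delta^{d}$ factor appears---but this does not change the approach or its correctness.
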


Then, this allows us to prove Theorem \ref{thm_recov_alpha_nu} with a proof nearly identical to that of Theorem \ref{thm:main_nu_bound}.

\begin{proof}[Proof of Theorem \ref{thm_recov_alpha_nu}]
Using Theorem \ref{thm:mod_4_6}, we get that there exists some subset $I$ of neighbors of $u$ with $|I| \leq d-1$ such that 
\[\nu_{u,I|S} \geq \frac{4\zeta^2 (e^{-2\gamma})^{d+D-1} }{d^{4d} 2^{d+1} \binom{D}{d-1} \gamma e^{2\gamma}} = 2\tau(\zeta).\]
Then, using Theorem \ref{thm:nu_bound_alone}, we have that there exists some subset $I' \subseteq I$ with $|I'| \leq 2^s$ such that
\[\nu_{u,I'|S} \geq \frac{1}{(4d)^{2^{s}}} \left(\frac{1}{d}\right)^{2^{s}(2^{s}+1)} \frac{4\zeta^2 (e^{-2\gamma})^{d+D-1} }{d^{4d} 2^{d+1} \binom{D}{d-1} \gamma e^{2\gamma}} = 2\tau'(\zeta).\]
\end{proof}

What remains is to show that Theorem \ref{thm:mod_4_6} is true. 
Theorem \ref{thm:mod_4_6} differs from Theorem 4.6 in \cite{hamilton2017information} only in that it requires at least one hyperedge containing $u$ and not contained in $S \cup \{u\}$ to be $\alpha$-nonvanishing, instead of requiring all maximal hyperedges to be $\alpha$-nonvanishing. The proof of Theorem 4.6 in \cite{hamilton2017information} uses the fact that all maximal hyperedges are $\alpha$-nonvanishing in exactly two places: Lemma 3.3 and Lemma 4.5. In both of these lemmas, it can be easily shown that the same result holds even if only one, not necessarily maximal, hyperedge is $\alpha$-nonvanishing. In fact, the original proofs of these lemmas do not make use of the assumption that all maximal hyperedges are $\alpha$-nonvanishing: they only use that there exists a maximal hyperedge that is $\alpha$-nonvanishing. 

We now reprove Lemma 3.3 and Lemma 4.5 in \cite{hamilton2017information} under the new assumption. These proofs contain only small modifications compared to the original proofs. Hence, most of the content of these proofs is restated, verbatim, from \cite{hamilton2017information}.

Lemma \ref{lemma:clique_noncanc_2} is a trivial modification of Lemma 2.7 in \cite{hamilton2017information}, to allow the tensor which is lower bounded in absolute value by a constant $\kappa$ to be non-maximal. Then, Lemma \ref{re_lemma_3_3} is the equivalent of Lemma 3.3 in \cite{hamilton2017information} and Lemma \ref{re_lemma_4_5} is the equivalent of Lemma 4.5 in \cite{hamilton2017information}, under the assumption that there exists at least one hyperedge containing $u$ that is $\alpha$-nonvanishing. 

\begin{lemma}[Modification of Lemma 2.7 in \cite{hamilton2017information}]
\label{lemma:clique_noncanc_2}
Let $T^{1...s}$ be a centered tensor of dimensions $d_1 \times ... \times d_s$ and suppose there exists at least one entry of $T^{1...s}$ which is lower bounded in absolute value by a constant $\kappa$. For any $1 \leq l \leq r$, and $i_1 < ... < i_l$ such that $\{i_1, ..., i_l\} \neq [s]$, let $T^{i_1....i_l}$ be an arbitrary centered tensor of dimensions $d_{i_1} \times .... \times d_{i_l}$. Let 
\[T(a_1, ..., a_r) = \sum_{l=1}^r \sum_{i_1 < ... < i_l} T^{i_1...i_l}(a_{i_1}, ..., a_{i_l}).\]
Then there exists an entry of $T$ of absolute value lower bounded by $\kappa/s^s$.
\end{lemma}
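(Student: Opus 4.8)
The plan is to follow the proof of Lemma 2.7 in \cite{hamilton2017information}, which applies a discrete-derivative (finite-difference) operator that isolates the distinguished tensor, and then to add one extra averaging step to deal with the summands that are not present in the original (maximal) setting. Two elementary facts about centered tensors drive everything: averaging a centered tensor uniformly over any coordinate on which it depends gives $0$; and a tensor is left unchanged by any operation applied to a coordinate on which it does not depend.

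First I would fix an entry $(a_1^*, \dots, a_s^*)$ of $T^{1\dots s}$ with $|T^{1\dots s}(a_1^*,\dots,a_s^*)| \geq \kappa$, and for each $j \in [s]$ define the operator $D_j$ sending a function $F$ of the coordinates to $F|_{a_j = a_j^*} - \mathbb{E}_{a_j}[F]$, the expectation being uniform over the $d_j$ values of $a_j$. On a summand $T^{i_1\dots i_l}$, the operator $D_j$ restricts $a_j$ to $a_j^*$ if $j \in \{i_1,\dots,i_l\}$ (the subtracted average is $0$ by centeredness) and annihilates $T^{i_1\dots i_l}$ otherwise. I would then apply $D_1 \circ \cdots \circ D_s$ to $T$ and afterwards average uniformly over the untouched coordinates $a_{s+1},\dots,a_r$. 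A summand survives $D_1 \circ \cdots \circ D_s$ exactly when $[s] \subseteq \{i_1,\dots,i_l\}$; for the distinguished summand this leaves the scalar $T^{1\dots s}(a_1^*,\dots,a_s^*)$, while any other surviving summand has $\{i_1,\dots,i_l\} \supsetneq [s]$ and, after the restriction, still depends centeredly on some coordinate in $\{s+1,\dots,r\}$, so it is killed by the subsequent averaging. Hence the whole operation applied to $T$ equals $T^{1\dots s}(a_1^*,\dots,a_s^*)$, which has absolute value at least $\kappa$.

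On the other hand, this operation is a signed average of point-evaluations of $T$: each $D_j$ contributes total weight $2$ (weight $1$ for the restriction, weight $1$ for the uniform average) and the final averaging over $a_{s+1},\dots,a_r$ is a convex combination, so the coefficients have total absolute value at most $2^s$. The triangle inequality then yields an entry of $T$ of absolute value at least $\kappa/2^s$, and since $2^s \leq s^s$ for $s \geq 2$ this is at least $\kappa/s^s$ (the cases $s \leq 1$ being immediate; following the bookkeeping of \cite{hamilton2017information} one in fact recovers the factor $s^s$ directly). Almost all of this is inherited verbatim from the proof in \cite{hamilton2017information}, where $s = r$; the only genuinely new point — and the only place where allowing $T^{1\dots s}$ to be non-maximal matters — is the treatment of the summands $T^{i_1\dots i_l}$ with $\{i_1,\dots,i_l\} \supsetneq [s]$, which do not occur in the original statement. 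Handling these uses only that restricting a centered tensor on a subset of its coordinates leaves it centered on the remaining coordinates, so such summands vanish under averaging over the coordinates in $\{s+1,\dots,r\}$; this is the step I would state most carefully, but it is routine rather than a genuine obstacle.
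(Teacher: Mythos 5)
Your argument is correct, but it follows a different route from the paper's. The paper's own proof is a two-line contrapositive: if every entry of $T$ were smaller than $\kappa/s^s$ in absolute value, then Lemma 2.6 of \cite{hamilton2017information} (which bounds the entries of each centered component tensor by $s^s$ times the maximum entry of the sum) would force every entry of $T^{1\dots s}$ below $\kappa$, contradicting the hypothesis; the observation behind the paper's ``modification'' is simply that this citation applies verbatim even when the distinguished tensor is non-maximal. You instead re-derive the statement from scratch via the restriction-minus-average operators $D_j$ followed by a uniform average over $a_{s+1},\dots,a_r$, and the argument is sound: each $D_j$ annihilates every summand whose support misses $j$, restriction preserves centeredness in the remaining coordinates, and the final averaging kills the surviving summands whose support strictly contains $[s]$, so the composite functional evaluates exactly to $T^{1\dots s}(a_1^*,\dots,a_s^*)$ while being a signed combination of entries of $T$ of total weight at most $2^s$. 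This self-contained approach even yields the slightly sharper constant $\kappa/2^s$ for $s\ge 2$ (hence $\kappa/s^s$), at the cost of treating $s=1$ separately, which, as you note, is immediate: restricting $a_1=a_1^*$ and averaging over the remaining coordinates already isolates $T^1(a_1^*)$ with total weight $1$. In short, the paper's proof is shorter because it black-boxes Lemma 2.6 of \cite{hamilton2017information}, while yours effectively re-proves the relevant piece of that lemma in the non-maximal setting; both are valid.
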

\begin{proof}
Suppose all entries of $T$ are less than $\kappa/s^s$ in absolute value. Then by Lemma 2.6 in \cite{hamilton2017information}, all the entries of $T^{1...s}$ are less than $\kappa$ in absolute value. This is a contradiction, so there exists an entry of $T$ of absolute value lower bounded by $\kappa/s^s$.
\end{proof}

\begin{lemma}[The statement is the same as that of Lemma 3.3 in \cite{hamilton2017information}]
\label{re_lemma_3_3}
\begin{align*}
&\mathbb{E}_{Y,Z}\left[ \sum_R \sum_{B \neq R} \left( \mathcal{E}_{u,R}^Y - \mathcal{E}_{u,B}^Y - \mathcal{E}_{u,R}^Z + \mathcal{E}_{u,B}^Z \right) \left( \exp(\mathcal{E}_{u,R}^Y + \mathcal{E}_{u,B}^Z) - \exp(\mathcal{E}_{u,B}^Y + \mathcal{E}_{u,R}^Z) \right)\right]\\
&\quad\geq \frac{4\alpha^2 \delta^{r-1}}{r^{2r} e^{2\gamma}}.
\end{align*}
\end{lemma}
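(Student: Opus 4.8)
The plan is to follow the proof of Lemma~3.3 in \cite{hamilton2017information} essentially verbatim, modifying only the single step where that proof invokes its Lemma~2.7. First I would record the structural fact that makes the left-hand side tractable: for fixed $Y,Z$ and fixed $R\neq B$, the factor $\exp(\mathcal{E}_{u,R}^{Y}+\mathcal{E}_{u,B}^{Z})-\exp(\mathcal{E}_{u,B}^{Y}+\mathcal{E}_{u,R}^{Z})$ has the same sign as the difference of its exponents, namely $\mathcal{E}_{u,R}^{Y}-\mathcal{E}_{u,B}^{Y}-\mathcal{E}_{u,R}^{Z}+\mathcal{E}_{u,B}^{Z}$, because $t\mapsto e^{t}$ is increasing. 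Hence each summand of the expectation is non-negative, so to prove the bound it suffices to exhibit one pair $(R_{0},B_{0})$ and one positive-probability event for $(Y,Z)$ on which a single summand is at least $\frac{4\alpha^{2}\delta^{r-1}}{r^{2r}e^{2\gamma}}$.

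The substantive step is producing such a pair, i.e.\ showing that the effective field $\mathcal{E}_{u,\cdot}$, viewed (with the outside variables at a fixed reference assignment) as a function of the configuration of the relevant block of neighbors of $u$, takes two values differing by at least $\approx\alpha/r^{r}$. Decompose this function into centered tensors indexed by which coordinates of the block they involve. The hyperedge that is assumed $\alpha$-nonvanishing, say $I^{*}$ with $u\in I^{*}$ and $I^{*}\not\subseteq S\cup\{u\}$, contributes a centered tensor of order $|I^{*}\setminus\{u\}|\le r-1$ with an entry of absolute value at least $\alpha$; crucially $I^{*}$ need not be maximal, so this tensor need not be the top-order one. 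This is precisely the configuration handled by Lemma~\ref{lemma:clique_noncanc_2}, whose conclusion gives an entry of the whole function bounded below in absolute value by $\alpha/s^{s}$, where $s$ is the size of the block, at most $r-1$, hence a pair $R_{0}\neq B_{0}$ with $|\mathcal{E}_{u,R_{0}}-\mathcal{E}_{u,B_{0}}|\ge\alpha/r^{r}$. This is the \emph{only} place in the proof of Lemma~3.3 of \cite{hamilton2017information} where the hypothesis that all maximal hyperedges at $u$ are $\alpha$-nonvanishing was used (through their Lemma~2.7); after substituting Lemma~\ref{lemma:clique_noncanc_2} the weaker hypothesis of a single, possibly non-maximal, $\alpha$-nonvanishing hyperedge suffices, and nothing downstream changes.

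To finish I would argue exactly as in \cite{hamilton2017information}: take the reference assignment of the outside variables that witnesses the gap above, note that it is realized by $Y$ (and the analogous configuration by $Z$) with probability at least $\delta^{r-1}$ since every variable takes any value with conditional probability at least $\delta=\sigma(-2\gamma)$, and on that event use $|\mathcal{E}_{u,\cdot}|\le\gamma$ together with $e^{t_{1}}-e^{t_{2}}\ge(t_{1}-t_{2})e^{-2\gamma}$ for $t_{1}\ge t_{2}$ in the relevant range to convert the double difference of magnitude $\ge\alpha/r^{r}$ into an exponential difference of magnitude $\ge(\alpha/r^{r})e^{-2\gamma}$; multiplying and collecting constants yields $\frac{4\alpha^{2}\delta^{r-1}}{r^{2r}e^{2\gamma}}$. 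The point needing the most care --- the main obstacle --- is auditing the proof of Lemma~3.3 of \cite{hamilton2017information}, and the tensor lemmas it relies on, to confirm that maximality of the nonvanishing hyperedge is used nowhere except through their Lemma~2.7, so that swapping in Lemma~\ref{lemma:clique_noncanc_2} really is the whole of the modification.
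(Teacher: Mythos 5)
You correctly identify the point of the modification: the only place where the paper departs from Lemma~3.3 of \cite{hamilton2017information} is the substitution of Lemma~\ref{lemma:clique_noncanc_2} for their Lemma~2.7, so that a single, possibly non-maximal, $\alpha$-nonvanishing hyperedge $J\ni u$ suffices. The paper then keeps the original variance-based skeleton: it bounds $(a-b)(e^{a}-e^{b})\ge (a-b)^{2}e^{-2\gamma}$ pointwise, uses Claim~3.4 of \cite{hamilton2017information} to rewrite $\sum_{R}\sum_{B\neq R}\operatorname{Var}[a-b]$ as $4k_{u}\sum_{R}\operatorname{Var}[\mathcal{E}_{u,R}^{Y}]$, conditions on $Y_{\sim J}$ (which is free, by the law of total variance), applies the modified non-cancellation lemma to get two block configurations $G,G'$ with $|T(R,G)-T(R,G')|\ge\alpha/r^{r}$, and converts this two-point gap into a conditional-variance lower bound via Claim~3.5, where only the block probabilities $\mathbb{P}(Y_{J\setminus u}=G),\mathbb{P}(Y_{J\setminus u}=G')\ge\delta^{r-1}$ enter.

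Your replacement of that skeleton by a pointwise, single-event argument has a genuine gap in the probability accounting. The gap of size $\alpha/r^{r}$ produced by the tensor lemma is witnessed at a \emph{fixed} assignment of the outside variables $Y_{\sim J}$; to make a single summand large you must actually realize witnessing configurations for \emph{both} independent copies $Y$ and $Z$ (otherwise the random contributions of hyperedges at $u$ reaching outside $J$, and the unconstrained copy $Z$, can drive the nonnegative summand arbitrarily close to zero, so no pointwise lower bound holds on the event you describe). Pinning both copies on all coordinates that $\mathcal{E}_{u,\cdot}$ depends on costs roughly $\delta^{2N}$ with $N$ up to the degree $D$, and even in the most favorable reading (pinning only the two blocks) it costs $\delta^{2(r-1)}$, not the claimed $\delta^{r-1}$; the step ``multiplying and collecting constants yields $\frac{4\alpha^{2}\delta^{r-1}}{r^{2r}e^{2\gamma}}$'' is therefore unsubstantiated. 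The conditioning-on-$Y_{\sim J}$ and single-copy-variance reductions that you discard are precisely what lets the paper pay only $\delta^{r-1}$ and remain independent of $D$, so your route, as sketched, proves at best a weaker bound rather than the stated one.
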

\begin{proof}[Proof under relaxed $\alpha$ assumption.]

Following the original proof of Lemma 3.3, set $a = \mathcal{E}_{u,R}^Y + \mathcal{E}_{u,B}^Z$ and $b=\mathcal{E}_{u,B}^Y + \mathcal{E}_{u,R}^Z$, and let $D' = K^3 \exp(2\gamma) \geq D$. Then we have
\begin{align*}
\mathbb{E}_{Y,Z}\left[ \sum_R \sum_{B \neq R} (a-b)(e^a-e^b) \right]
&= \mathbb{E}\left[ \sum_R \sum_{B \neq R} (a-b)\int_b^a e^x dx \right]\\
&\geq \mathbb{E}\left[ \sum_R \sum_{B \neq R} (a-b)^2 e^{-2\gamma} \right] \geq \frac{1}{e^{2\gamma}} \sum_R \sum_{B \neq R} \operatorname{Var}[a-b].
\end{align*}

By Claim 3.4 in \cite{hamilton2017information}, we have 
\[\sum_R \sum_{R \neq B} \operatorname{Var}[a-b] = 4k_u \sum_R \operatorname{Var}[\mathcal{E}_{u,R}^Y].\]

Select a hyperedge $J = \{u, j_1, ..., j_s\}$ containing $u$ with $|J| \leq r$, such that $\theta^{uJ}$ is $\alpha$-nonvanishing. Then we get, for some fixed choice $Y_{\sim J}$,
\[\sum_R \operatorname{Var}[\mathcal{E}_{u,R}^Y] \geq \sum_R \operatorname{Var}[\mathcal{E}_{u,R}^Y | Y_{\sim J}] = \sum_R \operatorname{Var}[T(R, Y_{j_1},...,Y_{j_s})|Y_{\sim J}]\]
where the tensor $T$ is defined by treating $Y_{\sim J}$ as fixed as follows:
\[T(R, Y_{j_1}, ..., Y_{j_s}) = \sum_{l=2}^r \sum_{i_2 < ... < i_l} \theta^{ui_2...i_l}(R,Y_{i_2},...,Y_{i_l}).\]
From Lemma \ref{lemma:clique_noncanc_2}, it follows that $T$ is $\alpha/r^r$-nonvanishing. Then there is a choice of $R$ and $G$ such that $|T(R,G)| \geq \alpha/r^r$. Because $T$ is centered there must be a $G'$ so that $T(R,G')$ has the opposite sign, so $|T(R,G)-T(R,G')| \geq \alpha/r^r$. Then we have
\[\operatorname{Var}[T(R,Y_{j_1},...,Y_{j_s})|Y_{\sim J}] \geq \frac{\alpha^2 \delta^{r-1}}{2r^{2r}}\]
which follows from the fact that $\mathbb{P}(Y_{J \setminus u}=G)$ and $\mathbb{P}(Y_{J \setminus u}=G')$ are both lower bounded by $\delta^{r-1}$, and by then applying Claim 3.5 in \cite{hamilton2017information}. Overall, then, 
\[\mathbb{E}_{Y,Z}\left[ \sum_R \sum_{B \neq R} \left( a-b \right) \left( e^a-e^b \right)\right] \geq \frac{4\alpha^2 \delta^{r-1}}{r^{2r} e^{2\gamma}}.\]

\end{proof}

\begin{lemma}[The statement is the same as that of Lemma 4.5 in \cite{hamilton2017information}]
\label{re_lemma_4_5}
Let $E$ be the event that conditioned on $X_S=x_S$ where $S$ does not contain all the neighbors of $u$, node $u$ is contained in at least one $\alpha/r^r$-nonvanishing hyperedge. Then $\mathbb{P}(E) \geq \delta^d$.
\end{lemma}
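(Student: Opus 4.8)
By the hypothesis (as in the modified Theorem~\ref{thm:mod_4_6}), fix an $\alpha$-nonvanishing hyperedge $J$ with $u \in J$, $|J| \le r$, and $J \not\subseteq S \cup \{u\}$. Write $R := [n] \setminus S$, $J_1 := J \setminus S$, and $J_0 := J \cap S$; since $u \notin S$ and $J$ is not contained in $S \cup \{u\}$, we have $u \in J_1$ and $|J_1| \ge 2$, hence $|J_0| \le r - 2$. For a configuration $x_S$, the conditional distribution of $X_R$ given $X_S = x_S$ is an MRF whose polynomial $f_{x_S}$ has Fourier coefficients $\widehat{f_{x_S}}(A) = \sum_{T \subseteq S} \hat f(A \cup T)\,\chi_T(x_S)$ for $A \subseteq R$. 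The plan is to exhibit a small set $W \subseteq S$ and an assignment $x_W^\star$ such that, whenever $X_W = x_W^\star$, the conditional MRF has an $\alpha/r^r$-nonvanishing hyperedge through $u$; then $\mathbb{P}(E) \ge \mathbb{P}(X_W = x_W^\star) \ge \delta^{|W|}$, using the per-variable lower bound $\mathbb{P}(X_i = b \mid X_{\sim i}) \ge \delta$ and the chain rule.

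\textbf{Locating a robust hyperedge.} Focus on $A = J_1$. Viewed as a function of $x_S$, $\widehat{f_{x_S}}(J_1) = \sum_{T \subseteq S} \hat f(J_1 \cup T)\chi_T(x_S)$ is a multilinear polynomial that depends on $x_S$ only through $x_W$, where $W := \bigcup\{T \subseteq S : \hat f(J_1 \cup T) \ne 0\}$, and whose Fourier coefficient on the monomial $\chi_{J_0}$ equals $\hat f(J)$, with $|\hat f(J)| \ge \alpha$. I would then apply Lemma~\ref{lemma:clique_noncanc_2} to this polynomial, regarded as a (centered, after discarding the $T=\emptyset$ term, which only helps) tensor in the coordinates $x_W$: since one entry is lower bounded by $\alpha$ in absolute value, there is an assignment $x_W^\star$ with $|\widehat{f_{x_S}}(J_1)| \ge \alpha / |W|^{|W|} \ge \alpha / r^r$ for every $x_S$ extending $x_W^\star$ (alternatively, Parseval gives $\max_{x_W}|\widehat{f_{x_S}}(J_1)| \ge |\hat f(J)| \ge \alpha$ directly). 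For any such $x_S$, the hyperedge $J_1 \ni u$ is $\alpha/r^r$-nonvanishing in the conditional MRF, so $E$ holds, giving $\mathbb{P}(E) \ge \delta^{|W|}$.

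\textbf{Bounding $|W|$ via the RBM structure.} It remains to show $|W| \le d$. Here one uses that in the induced MRF, $f(x) = \sum_{j=1}^m \rho(J_j \cdot x + g_j) + h^T x$, and each $\rho(J_j \cdot x + g_j)$ depends only on $x_{N(Y_j)}$, so its Fourier expansion is supported on subsets of $N(Y_j)$. Consequently, for every $T$ appearing in the definition of $W$, the hyperedge $J_1 \cup T$ lies inside $N(Y_j)$ for some latent variable $Y_j$ with $J_1 \subseteq N(Y_j)$; thus $W$ is contained in the union of the neighborhoods of the latent variables whose neighborhood contains all of $J_1$. I would argue that this union has size at most $d$ — in the cleanest case, because all the relevant hyperedges $J_1 \cup T$ lie within the neighborhood of a single latent variable connected to every vertex of $J_1$, which has at most $d$ neighbors. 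Plugging $|W| \le d$ into the bound above yields $\mathbb{P}(E) \ge \delta^d$.

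\textbf{Main obstacle.} The delicate step is the third one: handling the cancellations that conditioning can introduce. A priori the coefficient of $\chi_{J_1}$ in the conditional MRF aggregates the contributions of \emph{several} latent variables $Y_j$ with $J_1 \subseteq N(Y_j)$, and one must make sure both that these do not conspire to vanish for almost every conditioning (handled by the Parseval/Lemma~\ref{lemma:clique_noncanc_2} argument applied to the aggregated polynomial) and that the set $W$ of variables on which this polynomial depends is genuinely small, i.e.\ of size at most $d$. This is exactly where the RBM structure — rather than just a generic bounded-degree MRF of order $r$ — is essential, and where I expect the argument to need the most care; the rest follows the template of Lemma~4.5 in \cite{hamilton2017information} verbatim.
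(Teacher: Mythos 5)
Your overall route is the paper's: condition on $X_S=x_S$, observe that the surviving hyperedge $J_1=J\setminus S\ni u$ picks up the aggregated contribution $\sum_{T\subseteq S}\hat f(J_1\cup T)\chi_T(x_S)$ of all original hyperedges with the same part outside $S$, use a non-cancellation argument (the paper uses Lemma \ref{lemma:clique_noncanc_2}; your Parseval shortcut $\max_{x_W}|P(x_W)|\ge|\hat f(J)|\ge\alpha$ is fine and even avoids the $r^r$ loss — note your other bound $\alpha/|W|^{|W|}\ge\alpha/r^r$ would need $|W|\le r$, which you never establish), and then lower bound the probability of the good assignment by $\delta^{|W|}$. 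Up to working in binary Fourier language rather than the general-alphabet centered-tensor language of \cite{hamilton2017information} (a cosmetic difference here), this matches the paper's proof.

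The genuine gap is exactly the step you flag: the bound $|W|\le d$. Your proposed justification via the RBM does not work and is not needed. It does not work because several latent variables can have $J_1$ inside their neighborhoods, and the union of their neighborhoods can be much larger than $d$; your ``cleanest case'' of a single latent variable is not the general case. It is not needed because the lemma is stated in the setting and notation of \cite{hamilton2017information} (it feeds into the modified Theorem 4.6 for general MRFs), where $d$ denotes the maximum size of a variable's MRF neighborhood — what the main text of this paper calls $D$ — not the RBM's maximum latent degree, and no RBM structure is available or required. With that reading, the missing step is one line, and it is precisely what the paper does: since $u\in J_1$, every $T\subseteq S$ with $\hat f(J_1\cup T)\neq 0$ spans a hyperedge containing $u$, so $W\subseteq N(u)\cap S$ and hence $|W|\le d$, giving $\mathbb{P}(E)\ge\delta^{|W|}\ge\delta^{d}$ (the paper's ``Since $J_1$ is a subset of the neighbors of $u$ we have $|J_1|\le d$''). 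So the proof is salvageable, but as written the key counting step is misattributed to RBM structure and the argument offered for it fails in general.
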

\begin{proof}[Proof under relaxed $\alpha$ assumption.]

Following the original proof of Lemma 4.5, when we fix $X_S=x_S$ we obtain a new MRF where the underlying hypergraph is 
\[\mathcal{H}' = ([n] \setminus S, H'), \quad H' = \{h \setminus S | h \in H\}.\]
Let $\phi(h)$ be the image of a hyperedge $h$ in $\mathcal{H}$ in the new hypergraph $\mathcal{H}'$.

Let $h^*$ be a hyperedge in $\mathcal{H}$ that contains $u$ and is $\alpha$-nonvanishing. Let $f_1, ..., f_p$ be the preimages of $\phi(h^*)$ so that without loss of generality $f_1$ is $\alpha$-nonvanishing. Let $J = \cup_{i=1}^p f_i \setminus \{u\}$. Finally let $J_1 = J \cap S = \{i_1, i_2, ..., i_s\}$ and let $J_2 = J \setminus S = \{i_1', i_2', ..., i'_{s'}\}$. We define
\[T(R, a_1, ..., a_s, a'_1, ..., a'_{s'}) = \sum_{i=1}^p \theta^{f_i}\]
which is the clique potential we get on hyperedge $\phi(h^*)$ when we fix each index in $J_1 \subseteq S$ to their corresponding value. 

Because $\theta^{f_1}$ is $\alpha$-nonvanishing, it follows from Lemma \ref{lemma:clique_noncanc_2} that $T$ is $\alpha/r^r$-nonvanishing. Thus there is some setting $a_1^*, ..., a_s^*$ such that the tensor
\[T'(R, a_1', ..., a'_{s'}) = T(R, a_1^*, ..., a_s^*, a'_1, ..., a'_{s'})\]
has at least one entry with absolute value at least $\alpha/r^r$. What remains is to lower bound the probability of this setting. Since $J_1$ is a subset of the neighbors of $u$ we have $|J_1| \leq d$. Thus the probability that $(X_{i_1}, ..., X_{i_s}) = (a_1^*, ..., a_s^*)$ is bounded below by $\delta^s \geq \delta^d$, which completes the proof.
\end{proof}

\subsection{Proof of Lemma \ref{lemma_prune_alpha_nu}}
\label{proof_alpha_2}

The proof of Lemma \ref{lemma_5_6} does not generalize to the setting of Lemma \ref{lemma_prune_alpha_nu} because at the end of Step 2 $S$ is no longer guaranteed to contain the entire neighborhood of $u$.

Instead, the proof of Lemma \ref{lemma_prune_alpha_nu} is based on the following observation: any $\nu_{u,I|S}$, where $I$ is a set of non-neighbors of $u$, is upper bounded within some factor of $\nu_{u,n^*(u) \setminus S|S}$, where $n^*(u)$ is the set of neighbors of $u$. Intuitively, this follows because any information between $u$ and $I$ must pass through the neighbors of $u$. Then, by guaranteeing that $\nu_{u,n^*(u) \setminus S|S}$ is small, we can also guarantee that $\nu_{u,I|S}$ is small. This allows us to guarantee that all non-neighbors of $u$ are pruned.

Lemma \ref{lemma:intermediate_nu} makes formal a version of the upper bound on the mutual information proxy mentioned above. Then, we prove Lemma \ref{lemma_prune_alpha_nu}.

\begin{lemma}
\label{lemma:intermediate_nu}
Let $X \in \mathcal{X}, Y \in \mathcal{Y}, Z \in \mathcal{Z}, S \in \mathcal{S}$ be discrete random variables. Suppose $X$ is conditionally independent of $Z$, given $(Y, S)$. Then 
\[\nu_{X,Z|S} \leq \frac{|\mathcal{Y}|}{|\mathcal{Z}|} \nu_{X,Y|S}.\]
\end{lemma}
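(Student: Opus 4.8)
The plan is to reduce the claim to the pointwise quantities $\nu_{X,Z|S}(x,z|s):=\left|\mathbb{P}(X=x,Z=z|S=s)-\mathbb{P}(X=x|S=s)\mathbb{P}(Z=z|S=s)\right|$ (and analogously $\nu_{X,Y|S}(x,y|s)$), for which, as in the displayed identity used in the proof of Lemma~\ref{lemma_non_cancel_nu_cond_full}, we have $\nu_{X,Z|S}=\frac{1}{|\mathcal{X}||\mathcal{Z}|}\sum_{x,z,s}\mathbb{P}(S=s)\,\nu_{X,Z|S}(x,z|s)$ and $\nu_{X,Y|S}=\frac{1}{|\mathcal{X}||\mathcal{Y}|}\sum_{x,y,s}\mathbb{P}(S=s)\,\nu_{X,Y|S}(x,y|s)$. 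So it suffices to show, for each fixed $x$ and $s$, the pointwise bound $\sum_{z\in\mathcal{Z}}\nu_{X,Z|S}(x,z|s)\le\sum_{y\in\mathcal{Y}}\nu_{X,Y|S}(x,y|s)$; summing this over $x$ and $s$ with weights $\mathbb{P}(S=s)/(|\mathcal{X}||\mathcal{Z}|)$ and comparing normalizing constants then gives exactly $\nu_{X,Z|S}\le\frac{|\mathcal{Y}|}{|\mathcal{Z}|}\nu_{X,Y|S}$.

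To prove the pointwise bound, fix $x$ and $s$ and abbreviate $p_y:=\mathbb{P}(Y=y|S=s)$, $a_y:=\mathbb{P}(X=x|Y=y,S=s)$, $b_y^z:=\mathbb{P}(Z=z|Y=y,S=s)$, and $\bar a:=\mathbb{P}(X=x|S=s)=\sum_y p_y a_y$. Using the conditional independence $X\perp Z\mid(Y,S)$, I would write $\mathbb{P}(X=x,Z=z|S=s)=\sum_y p_y a_y b_y^z$ and $\mathbb{P}(Z=z|S=s)=\sum_y p_y b_y^z$, so that, after cancelling the cross term $\sum_y p_y(a_y-\bar a)\mathbb{P}(Z=z|S=s)=0$, one gets $\mathbb{P}(X=x,Z=z|S=s)-\mathbb{P}(X=x|S=s)\mathbb{P}(Z=z|S=s)=\sum_y p_y(a_y-\bar a)b_y^z$. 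The triangle inequality then gives $\nu_{X,Z|S}(x,z|s)\le\sum_y p_y|a_y-\bar a|\,b_y^z$, and summing over all $z\in\mathcal{Z}$ and using $b_y^z\ge 0$ together with $\sum_{z}b_y^z=1$ collapses the $Z$-factor: $\sum_{z}\nu_{X,Z|S}(x,z|s)\le\sum_y p_y|a_y-\bar a|=\sum_y\nu_{X,Y|S}(x,y|s)$, which is precisely the pointwise bound claimed above.

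Assembling the pieces, $\nu_{X,Z|S}=\frac{1}{|\mathcal{X}||\mathcal{Z}|}\sum_{x,s}\mathbb{P}(S=s)\sum_{z}\nu_{X,Z|S}(x,z|s)\le\frac{1}{|\mathcal{X}||\mathcal{Z}|}\sum_{x,y,s}\mathbb{P}(S=s)\nu_{X,Y|S}(x,y|s)=\frac{|\mathcal{Y}|}{|\mathcal{Z}|}\nu_{X,Y|S}$, as desired. The only mildly delicate step is the second paragraph: correctly using conditional independence to express the joint as a mixture over $y$ of products of an $X$-factor and a $Z$-factor, so that after the triangle inequality the $Z$-factor is a probability distribution in $z$ and summing it out contributes a factor $1$ (rather than $|\mathcal{Z}|$); everything else is bookkeeping with the normalizations $1/(|\mathcal{X}||\mathcal{Z}|)$ versus $1/(|\mathcal{X}||\mathcal{Y}|)$.
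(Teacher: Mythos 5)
Your proposal is correct and follows essentially the same route as the paper's proof: decompose the difference $\mathbb{P}(X=x,Z=z|S)-\mathbb{P}(X=x|S)\mathbb{P}(Z=z|S)$ as a sum over $y$, use conditional independence to factor out $\mathbb{P}(Z=z|Y=y,S)$, apply the triangle inequality, and collapse the sum over $z$ to $1$, with the factor $|\mathcal{Y}|/|\mathcal{Z}|$ coming from the differing uniform normalizations. The only cosmetic difference is that you apply the conditional-independence factorization before the triangle inequality rather than after, which yields the identical terms.
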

\begin{proof}
\begin{align*}
\nu_{X,Z|S}
&= \mathbb{E}_{S} \sum_{x \in \mathcal{X}} \frac{1}{|\mathcal{X}|} \sum_{z \in \mathcal{Z}} \frac{1}{|\mathcal{Z}|} \left| \mathbb{P}(X=x,Z=z|S) - \mathbb{P}(X=x|S)\mathbb{P}(Z=z|S) \right|\\
&= \mathbb{E}_{S} \sum_{x \in \mathcal{X}} \frac{1}{|\mathcal{X}|} \sum_{z \in \mathcal{Z}} \frac{1}{|\mathcal{Z}|} \\
&\qquad \cdot \left| \sum_{y \in \mathcal{Y}} \left(\mathbb{P}(X=x,Y=y, Z=z|S) - \mathbb{P}(X=x|S)\mathbb{P}(Y=y,Z=z|S) \right) \right|\\
&\leq \mathbb{E}_{S} \sum_{x \in \mathcal{X}} \frac{1}{|\mathcal{X}|} \sum_{z \in \mathcal{Z}} \frac{1}{|\mathcal{Z}|}\\
&\qquad \cdot \sum_{y \in \mathcal{Y}} \left|\mathbb{P}(X=x,Y=y, Z=z|S) - \mathbb{P}(X=x|S)\mathbb{P}(Y=y,Z=z|S) \right|\\
&\stackrel{(*)}{=} \mathbb{E}_{S} \sum_{x \in \mathcal{X}} \frac{1}{|\mathcal{X}|} \sum_{z \in \mathcal{Z}} \frac{1}{|\mathcal{Z}|} \\
&\qquad \cdot \sum_{y \in \mathcal{Y}} \mathbb{P}(Z=z|Y=y,S) \left|\mathbb{P}(X=x,Y=y|S) - \mathbb{P}(X=x|S)\mathbb{P}(Y=y|S) \right|\\
&= \frac{|\mathcal{Y}|}{|\mathcal{Z}|} \mathbb{E}_{S}\sum_{x \in \mathcal{X}} \frac{1}{|\mathcal{X}|} \sum_{y \in \mathcal{Y}} \frac{1}{|\mathcal{Y}|} \left|\mathbb{P}(X=x,Y=y|S) - \mathbb{P}(X=x|S)\mathbb{P}(Y=y|S) \right|\\
&= \frac{|\mathcal{Y}|}{|\mathcal{Z}|} \nu_{X,Y|S}
\end{align*}
where in (*) we used that $\mathbb{P}(Z=z|X=x,Y=y,S) = \mathbb{P}(Z=z|Y=y,S)$, because $Z$ is conditionally independent of $X$, given $(Y,S)$.
\end{proof}

\begin{proof}[Proof of Lemma \ref{lemma_prune_alpha_nu}]
Consider any $i \in S$ such that $i$ is not a neighbor of $u$, and let $I$ with $|I| \leq 2^s$ be any subset of $[n] \setminus \{u\} \setminus (S \setminus \{i\})$. Let $I^*$ be the set of neighbors of $u$ not included in $S$. Note that $u$ is conditionally independent of $I$, given $(I^*, S \setminus \{i\})$. Then, by Lemma \ref{lemma:intermediate_nu},
\[\nu_{u,I|S\setminus \{i\}} \leq \frac{2^{|I^*|}}{2^{|I|}} \nu_{u,I^*|S\setminus \{i\}} \leq 2^{D-1} \nu_{u,I^*|S\setminus \{i\}}.\]
By Lemma \ref{lemma_non_cancel_nu_cond_full}, there exists a subset $I^\dagger \subseteq I^*$ with $|I^\dagger| \leq 2^s$ such that
\[\nu_{u,I^\dagger|S \setminus \{i\}} \geq \frac{1}{(4|I^*|)^{2^s}} \left(\frac{1}{|I^*|}\right)^{2^s(2^s+1)}\nu_{u,I^*|S\setminus \{i\}} \geq \frac{1}{(4D)^{2^s}} \left(\frac{1}{D}\right)^{2^s(2^s+1)}\nu_{u,I^*|S\setminus \{i\}}.\]
Then, putting together the two results above,
\[\nu_{u,I|S\setminus \{i\}} \leq 2^{D-1}(4D)^{2^s}D^{2^s(2^s+1)} \nu_{u, I^\dagger|S \setminus \{i\}}.\]
Note that
\[\nu_{u, I^\dagger|S \setminus \{i\}} \leq \hat{\nu}_{u,I^\dagger|S \setminus \{i\}} + \tau'(\zeta \cdot \eta) / 2 \stackrel{(*)}{\leq} \tau'(\zeta \cdot \eta) + \tau'(\zeta \cdot \eta) / 2 = 3 \tau'(\zeta \cdot \eta) / 2\]
where in (*) we used that, if $\hat{\nu}_{u,I^\dagger|S \setminus \{i\}}$ were larger than $\tau'(\zeta \cdot \eta)$, the algorithm would have added $I^\dagger$ to $S$. Then 
\begin{align*}
\nu_{u,I|S\setminus \{i\}}
&\leq 3 \cdot 2^{D-2}(4D)^{2^s}D^{2^s(2^s+1)} \tau'(\zeta \cdot \eta)\\
&=\eta^2 \cdot 3 \cdot 2^{D-2}(4D)^{2^s}D^{2^s(2^s+1)} \tau'(\zeta)\\
&= \tau'(\zeta)/4
\end{align*}
where we used that $\tau'(\zeta \cdot \eta) = \eta^2 \tau'(\zeta)$ and then we replaced $\eta$ by its definition. Putting it all together,
\[\hat{\nu}_{u,I|S\setminus \{i\}} \leq \nu_{u,I|S\setminus \{i\}} + \tau'(\zeta \cdot \eta)/2 \leq \tau'(\zeta)/ 4 + \eta^2 \tau'(\zeta) / 2 < \tau'(\zeta)\]
where we used that $\eta \leq 1$. Therefore, all variables $i \in S$ which are not neighbors of $u$ are pruned.

Consider now variables $i$ which are connected to $u$ through a Fourier coefficient of absolute value at least $\zeta$. We know that all variables connected through a Fourier coefficient at least $\zeta \cdot \eta$ are in $S$, so all variables $i$ must also be in $S$, because $\eta \leq 1$. Then, by Theorem \ref{thm_recov_alpha_nu}, there exists a subset $I$ of $[n] \setminus \{u\} \setminus (S \setminus \{i\})$ with $|I| \leq 2^s$, such that
\[\hat{\nu}_{u,I|S\setminus\{i\}} \geq \nu_{u,I|S\setminus\{i\}} - \tau'(\zeta \cdot \eta)/2 \geq \nu_{u,I|S\setminus\{i\}} - \tau'(\zeta)/2 \stackrel{(\dagger)}{\geq} 2\tau'(\zeta) - \tau'(\zeta)/2 > \tau'(\zeta)\]
where in ($\dagger$) we used the guarantee of Theorem \ref{thm_recov_alpha_nu}, knowing that there exists a variable in $[n] \setminus \{u\} \setminus (S \setminus \{i\})$ connected to $u$ through a Fourier coefficient of absolute value at least $\zeta$: specifically, variable $i$. Therefore, no variables $i \in S$ which are connected to $u$ through a Fourier coefficient of absolute value at least $\zeta$ are pruned.
\end{proof}

\section{Proof of Theorem \ref{thm:alpha_distance}}

Let $\psi$ be the maximum over observed variables of the number of non-zero potentials that include that variable:
\[\psi := \max_{u \in [n]} \sum_{\substack{S \subseteq [n]\\u \in S}} \mathbbm{1}\{\hat{f}(S) \neq 0\}.\]
Theorem \ref{thm:alpha_distance_psi}, stated below, is a stronger version of Theorem \ref{thm:alpha_distance}, in which the upper bound on $\zeta$ depends on $\psi$ instead of $D^d$. This section proves Theorem \ref{thm:alpha_distance_psi}. Note that $\psi \leq \sum_{k=0}^{d-1} \binom{D}{k} \leq D^{d-1}+1 < D^d$, so Theorem \ref{thm:alpha_distance_psi} immediately implies Theorem \ref{thm:alpha_distance}.

\begin{theorem}
\label{thm:alpha_distance_psi}
Fix $\delta > 0$ and $\epsilon > 0$. Suppose that we are given neighborhoods $n(u)$ for every observed variable $u$ satisfying the guarantees of Theorem \ref{thm:recov_alpha}. Suppose that we are given $M$ samples from the RBM, and that we have
\[M = \Omega\left(\gamma^2\ln(8 \cdot n \cdot 2^{D} / \delta)/\epsilon^2\right), \quad \zeta \leq \frac{\sqrt{\epsilon}}{\psi \sqrt{1+e^{2\gamma}}}.\]
Let $z_u$ and $\hat{w}_u$ be the features and the estimate of the weights when the regression algorithm is run at observed variable $u$. Then, with probability at least $1-\delta$, for all variables $u$,
\[\mathbb{E}\left[\left(\mathbb{P}(X_u=1|X_{\setminus u}) - \sigma\left(\hat{w}_u \cdot z_u\right)\right)^2\right] \leq \epsilon.\]
\end{theorem}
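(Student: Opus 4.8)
\textbf{Setup.} The plan is to analyze the constrained logistic regression at a fixed observed variable $u$, then union-bound over the $n$ variables. Let $p^*(x) = \mathbb{P}(X_u = 1 \mid X_{\setminus u} = x) = \sigma(w^* \cdot z^*)$, where $w^*$ is the \emph{true} weight vector (twice the Fourier coefficients $\hat f(S \cup \{u\})$) supported on all subsets $S$ of the true MRF neighborhood $n^*(u)$, and $z^*$ are the corresponding monomials. The recovered neighborhood $n(u) \subseteq n^*(u)$ from Theorem \ref{thm:recov_alpha} may be a strict subset, so the regression is run with feature vector $z_u$ indexed by subsets of $n(u)$. Define $w^\dagger$ as the \emph{restriction} of $w^*$ to the subsets of $n(u)$; this is the best the regression could hope to match. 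I want to bound (i) the approximation error $p^*$ versus $\sigma(w^\dagger \cdot z_u)$, caused by dropping small coefficients, and (ii) the estimation error $\sigma(w^\dagger \cdot z_u)$ versus $\sigma(\hat w_u \cdot z_u)$, caused by finite samples.

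\textbf{Approximation error.} By the guarantee of Theorem \ref{thm:recov_alpha}, every Fourier coefficient $\hat f(S \cup \{u\})$ with a neighbor outside $n(u)$ has $|\hat f(S \cup \{u\})| < \zeta$, and there are at most $\psi$ nonzero potentials containing $u$. Hence $\|w^* - w^\dagger\|_1 \le 2\psi\zeta$ (the factor $2$ from $w = 2\hat f$). Since $\sigma$ is $1/4$-Lipschitz and $|\sigma'(t)| \le \tfrac14$, and more usefully $\sigma$ composed with the logistic loss is smooth, I would use that $|\sigma(w^* \cdot z^*) - \sigma(w^\dagger \cdot z_u)| \le \tfrac14 |w^* \cdot z^* - w^\dagger \cdot z_u| \le \tfrac14 \|w^* - w^\dagger\|_1 \le \tfrac12 \psi \zeta$ pointwise (monomials are $\pm 1$). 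So the squared approximation error is at most $\tfrac14 \psi^2 \zeta^2$; with the hypothesis $\zeta \le \sqrt\epsilon/(\psi\sqrt{1 + e^{2\gamma}})$ this is $O(\epsilon)$. (The $\sqrt{1+e^{2\gamma}}$ slack is there to absorb the estimation-error constants below.)

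\textbf{Estimation error via logistic-loss excess risk.} This is the main technical step. Let $L(w) = \mathbb{E}[\ell(y(w\cdot z_u))]$ be the population logistic risk and $\hat L$ its empirical version; $\hat w_u$ minimizes $\hat L$ over $\|w\|_1 \le 2\gamma$, and $w^\dagger$ is feasible since $\|w^\dagger\|_1 \le \|w^*\|_1 \le 2\gamma$. The population minimizer of $L$ over \emph{all} $w$ is the true conditional-probability parameter; I would invoke a standard fact (as in \cite{wu2019sparse}, and as used in logistic regression generalization bounds) that the excess logistic risk $L(\hat w_u) - L(w^\star_{\text{pop}})$ upper-bounds the squared prediction error $\mathbb{E}[(p^* - \sigma(\hat w_u \cdot z_u))^2]$ up to a constant depending on the range of the logits, which is controlled by $\|w\|_1 \le 2\gamma$ giving $\sigma(w \cdot z) \in [\sigma(-2\gamma), \sigma(2\gamma)]$ and hence a curvature lower bound $\sigma''$-type constant of order $e^{-2\gamma}$ — but here the clean route is: $\mathbb{E}[(p^* - \sigma(\hat w \cdot z))^2] \le 2\,\mathbb{E}[(p^* - \sigma(w^\dagger\cdot z))^2] + 2\,\mathbb{E}[(\sigma(w^\dagger\cdot z) - \sigma(\hat w\cdot z))^2]$, bound the first term by the approximation analysis, and bound the second by the excess risk $L(\hat w) - L(w^\dagger)$ times a constant (using strong convexity of $\ell$ on the bounded logit range, or directly the inequality $(\sigma(a)-\sigma(b))^2 \le C(\gamma)(\ell(a') - \ell(b'))$-type relation). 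Finally, $L(\hat w) - L(w^\dagger) \le [\hat L(\hat w) - \hat L(w^\dagger)] + 2\sup_{\|w\|_1 \le 2\gamma}|L(w) - \hat L(w)| \le 0 + 2\sup(\cdots)$, and the uniform deviation over the $\ell_1$-ball is controlled by a Rademacher/Hoeffding argument: $\ell$ is $1$-Lipschitz, $\|z_u\|_\infty \le 1$, $\|w\|_1 \le 2\gamma$, and there are $2^{|n(u)|} \le 2^D$ coordinates, giving (after a union bound over the $2^D$ monomials and over $n$ variables) $\sup |L - \hat L| = O(\gamma\sqrt{\ln(n 2^D/\delta)/M})$. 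Setting $M = \Omega(\gamma^2 \ln(8 n 2^D/\delta)/\epsilon^2)$ makes this $O(\epsilon)$.

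\textbf{Main obstacle.} The delicate part is the translation from logistic excess risk to squared prediction error: naively the logistic loss is only flat, not strongly convex, but on the logit range $[-2\gamma, 2\gamma]$ enforced by the $\ell_1$ constraint it \emph{is} strongly convex with parameter $\sim e^{-2\gamma}$, and one must check that both $\hat w_u \cdot z_u$ and $w^\dagger \cdot z_u$ lie in this range (they do, by feasibility). I would isolate this as the one place where the $e^{-2\gamma}$-type factor enters; it is precisely why the hypothesis on $\zeta$ carries the $\sqrt{1 + e^{2\gamma}}$ and why $\gamma$ appears in $M$. Combining the three displayed bounds and union-bounding over $u \in [n]$ with the $\delta/(8 n \cdots)$ budget yields the claim. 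The reduction $\psi \le D^d$ then gives Theorem \ref{thm:alpha_distance} as stated.
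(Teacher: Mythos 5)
Your setup (feasibility of the truncated weight vector, the $\psi\zeta$ bound on the dropped coefficients, the Rademacher/Hoeffding uniform deviation over the $\ell_1$-ball as in \cite{wu2019sparse}, and the union bound over $u$) matches the paper, but the step you yourself flag as the ``main obstacle'' is where the argument breaks. You propose to bound $\mathbb{E}\bigl[(\sigma(w^\dagger\cdot z)-\sigma(\hat w\cdot z))^2\bigr]$ by a constant times the excess risk $\mathcal{L}(\hat w)-\mathcal{L}(w^\dagger)$ using strong convexity of the logistic loss on the logit range $[-2\gamma,2\gamma]$. This inequality is not valid as stated: $w^\dagger$ (the truncation of the true weights) is \emph{not} the population minimizer of the misspecified restricted model, so $\mathcal{L}(\hat w)-\mathcal{L}(w^\dagger)$ can even be negative while the left side is positive. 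Repairing it via a second-order expansion around $w^\dagger$ forces you to control the first-order term $\nabla\mathcal{L}(w^\dagger)\cdot(\hat w-w^\dagger)$, which is only of order $\gamma\psi\zeta\sim\gamma\sqrt{\epsilon}/\sqrt{1+e^{2\gamma}}$, i.e.\ $O(\sqrt{\epsilon})$, not $O(\epsilon)$; and even then the curvature constant is of order $e^{-2\gamma}$, so the estimation error picks up a factor $e^{2\gamma}$. With the hypotheses as stated --- $M=\Omega(\gamma^2\ln(8n2^D/\delta)/\epsilon^2)$ and $\zeta\leq\sqrt{\epsilon}/(\psi\sqrt{1+e^{2\gamma}})$ --- your route therefore yields a bound of order $e^{2\gamma}\epsilon$ plus lower-order terms, not $\epsilon$; your claim that the $\sqrt{1+e^{2\gamma}}$ in $\zeta$ and the $\gamma^2$ in $M$ are ``precisely'' what absorbs the strong-convexity constant does not check out, since they would have to carry additional exponential-in-$\gamma$ factors.

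The paper avoids curvature arguments altogether. The key device is the exact identity (Lemma \ref{lemma:diff_kl})
\[\mathcal{L}(\hat w)-\mathcal{L}(\bar w)=\mathbb{E}_z\left[ D_{KL}\left(\tfrac{\mathbb{E}[y|z]+1}{2},\sigma(\hat w\cdot z)\right)-D_{KL}\left(\tfrac{\mathbb{E}[y|z]+1}{2},\sigma(\bar w\cdot z)\right)\right],\]
valid for any pair of weight vectors because the entropy terms cancel. One then never compares $\sigma(\hat w\cdot z)$ to $\sigma(\bar w\cdot z)$ in prediction space: the squared prediction error against the \emph{true} conditional probability is bounded by $\tfrac12$ of the corresponding KL via Pinsker (constant $\tfrac12$, no $\gamma$ dependence), that KL equals the excess risk plus the KL at $\bar w$, the excess risk is at most $\epsilon$ by your same uniform-deviation argument, and the KL at $\bar w$ is at most $\epsilon$ by the \emph{inverse} Pinsker inequality (Lemma \ref{lemma:kl_upper_bound}), which is the sole place the factor $1+e^{2\gamma}$ enters and exactly what the definition of $\zeta$ absorbs (Lemma \ref{lemma:zeta_upper_bound_kl}). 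If you want to keep your two-term decomposition, you must replace your strong-convexity step with this risk-equals-KL-difference identity; otherwise the theorem as stated does not follow from your argument.
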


Define the empirical risk and the risk, respectively:
\[\hat{\mathcal{L}}(w) = \frac{1}{M} \sum_{i=1}^M l(y^{(i)} (w \cdot z^{(i)})), \quad \mathcal{L}(w) = \mathbb{E}[l(y (w \cdot z))].\]

Following is an outline of the proof of Theorem \ref{thm:alpha_distance_psi}. Lemma \ref{lemma:zeta_upper_bound_kl} bounds the KL divergence between the true predictor and the predictor that uses $\bar{w}$, where $\bar{w} \in \mathbb{R}^{2^{|n(s)|}}$ is the vector of true weights for every subset of $n(u)$, multiplied by two. Unfortunately, the estimate $\hat{w}$ that optimizes the empirical risk will typically not recover the true weights, because $n(u)$ is not the true set of neighbors of $u$. Lemma \ref{lemma:diff_kl} decomposes the KL divergence between the true predictor and the predictor that uses $\hat{w}$ in terms of $\mathcal{L}(\hat{w}) - \mathcal{L}(\bar{w})$ and the KL divergence that we bounded in Lemma \ref{lemma:zeta_upper_bound_kl}. The term $\mathcal{L}(\hat{w}) - \mathcal{L}(\bar{w})$ can be shown to be small through concentration arguments, which are partially given in Lemma \ref{lemma:alpha_w_closeness}. Thus, we obtain a bound on the KL divergence between the true predictor and the predictor that uses $\hat{w}$. Finally, using Lemma \ref{lemma:pinsker}, we bound the mean-squared error of interest in terms of this KL divergence.

We now give the lemmas mentioned above and complete formally the proof of Theorem \ref{thm:alpha_distance}.

\begin{lemma}
\label{lemma:alpha_w_closeness}
With probability at least $1-\rho$ over the samples, we have for all $w \in \mathbb{R}^{2^{|n(u)|}}$ such that $||w||_1 \leq 2\gamma$,
\[\mathcal{L}(w) \leq \hat{\mathcal{L}}(w) + 4\gamma \sqrt{\frac{2\ln(2 \cdot 2^{D})}{M}} + 2\gamma \sqrt{\frac{2\ln(2/\rho)}{M}}.\]
\end{lemma}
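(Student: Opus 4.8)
The plan is a standard uniform-convergence argument: bound the supremum of $\mathcal{L}(w) - \hat{\mathcal{L}}(w)$ over the $\ell_1$-ball $\{\|w\|_1 \le 2\gamma\}$ by symmetrization and Rademacher complexity, and then add a bounded-differences tail term. Write $\mathcal{F} = \{(z,y) \mapsto w\cdot z : \|w\|_1 \le 2\gamma\}$ and let $g := \sup_{\|w\|_1 \le 2\gamma}\big(\mathcal{L}(w) - \hat{\mathcal{L}}(w)\big)$; the claim is exactly $g \le 4\gamma\sqrt{2\ln(2\cdot 2^D)/M} + 2\gamma\sqrt{2\ln(2/\rho)/M}$ with probability at least $1-\rho$.

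First I would bound $\mathbb{E}[g]$. By the usual symmetrization inequality, $\mathbb{E}[g] \le 2\,\mathbb{E}\big[\hat{\mathfrak{R}}_M(\ell\circ\mathcal{F})\big]$, where $\ell(t) = \ln(1+e^{-t})$ and $\hat{\mathfrak{R}}_M$ is the empirical Rademacher complexity. Since $\ell'(t) = -\sigma(-t) \in (-1,0)$, the loss $\ell$ is $1$-Lipschitz, so the Ledoux--Talagrand contraction lemma gives $\hat{\mathfrak{R}}_M(\ell\circ\mathcal{F}) \le \hat{\mathfrak{R}}_M(\mathcal{F})$; here the label $y^{(i)} \in \{\pm1\}$ is folded into the Rademacher sign, which leaves its distribution unchanged. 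For the linear class, the dual-norm identity for the $\ell_1$-ball gives $\hat{\mathfrak{R}}_M(\mathcal{F}) = \frac{2\gamma}{M}\,\mathbb{E}_\epsilon\big\|\sum_{i=1}^M \epsilon_i z^{(i)}\big\|_\infty$, and each coordinate of $z^{(i)}$ is $\chi_S(X^{(i)}) \in \{\pm1\}$ with at most $2^{|n(u)|} \le 2^D$ coordinates, so Massart's finite-class lemma bounds this by $\frac{2\gamma}{M}\sqrt{2M\ln(2\cdot 2^D)}$. Altogether $\mathbb{E}[g] \le 4\gamma\sqrt{2\ln(2\cdot 2^D)/M}$, which is the first error term.

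Next I would concentrate $g$ around its mean. Since $\|w\|_1 \le 2\gamma$ and $\|z\|_\infty \le 1$ force $|w\cdot z| \le 2\gamma$, the loss $\ell(y^{(i)}(w\cdot z^{(i)}))$ lies in a fixed interval of width $O(\gamma)$, so replacing one of the $M$ i.i.d.\ samples changes $g$ by at most $O(\gamma/M)$. McDiarmid's bounded-differences inequality then gives $g \le \mathbb{E}[g] + 2\gamma\sqrt{2\ln(2/\rho)/M}$ with probability at least $1-\rho$ (after bounding the per-sample increment conservatively, e.g.\ $\ell$ on $[-2\gamma,2\gamma]$ by $[0,\ln 2+2\gamma]$). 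Combining the two bounds proves the lemma.

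The main obstacle is not conceptual but constant bookkeeping: making the contraction step legitimate (the $1$-Lipschitz claim for $\ell$), counting the coordinates that enter Massart's lemma as $\le 2\cdot 2^D$ once the sign is absorbed, and choosing the bounded-differences constant loosely enough that McDiarmid produces the factor $2\gamma$ together with $\ln(2/\rho)$. None of this uses any property of the RBM beyond $|n(u)| \le D$ and the fact that the features $z$ are $\pm1$-valued.
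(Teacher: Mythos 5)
Your argument is correct and is essentially the paper's own approach: the paper's proof simply checks that $\|z\|_\infty \le 1$, $y \in \{-1,1\}$, the logistic loss is $1$-Lipschitz, and the hypothesis class is the $\ell_1$-ball of radius $2\gamma$, and then cites Lemma 7 of \cite{wu2019sparse}, whose content is exactly your symmetrization--contraction--Massart bound plus a bounded-differences tail. One bookkeeping note: in the McDiarmid step you should use that $\ell$ on $[-2\gamma,2\gamma]$ has range of width exactly $\ln(1+e^{2\gamma})-\ln(1+e^{-2\gamma})=2\gamma$ (the looser interval $[0,\ln 2+2\gamma]$ you propose would not recover the stated $2\gamma$ factor when $\gamma$ is small), which gives a deviation term $\gamma\sqrt{2\ln(1/\rho)/M}\le 2\gamma\sqrt{2\ln(2/\rho)/M}$ as required.
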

\begin{proof}
We have $||z||_\infty \leq 1$, $y \in \{-1, 1\}$, the loss function is $1$-Lipschitz, and our hypothesis set is $w \in \mathbb{R}^{2^{|n(S)|}}$ such that $||w||_1 \leq 2\gamma$. Then the result follows from Lemma 7 in \cite{wu2019sparse}.
\end{proof}

\begin{lemma}[Pinsker's inequality]
\label{lemma:pinsker}
Let $D_{KL}(a,b) = a\ln(a/b)+(1-a)\ln((1-a)/(1-b))$ denote the KL divergence between two Bernoulli distributions $(a,1-a)$, $(b, 1-b)$ with $a,b\in[0,1]$. Then
\[(a-b)^2 \leq \frac{1}{2} D_{KL}(a||b).\]
\end{lemma}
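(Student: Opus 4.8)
The plan is to fix $a \in [0,1]$ and reduce the claim to a one-variable monotonicity statement about
\[g(b) := \tfrac12 D_{KL}(a||b) - (a-b)^2, \qquad b \in [0,1],\]
namely that $g(b) \ge 0$ everywhere, with equality at $b = a$. I expect this to be a short calculus argument with no genuine obstacle; the only care needed is bookkeeping of the degenerate boundary values.

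First I would dispose of $b \in \{0,1\}$: with the standard convention $0\ln 0 = 0$, the definition of $D_{KL}(a||b)$ forces the right-hand side to be $+\infty$ unless $a = b$, in which case both sides vanish, so the inequality is immediate; hence I may assume $b \in (0,1)$. On $(0,1)$ the map $b \mapsto D_{KL}(a||b)$ is differentiable, and a direct computation gives
\[\frac{d}{db} D_{KL}(a||b) = -\frac{a}{b} + \frac{1-a}{1-b} = \frac{b-a}{b(1-b)},\]
so that
\[g'(b) = \frac{1}{2}\cdot\frac{b-a}{b(1-b)} - 2(b-a) = (b-a)\left(\frac{1}{2b(1-b)} - 2\right).\]

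The key observation is that $b(1-b) \le \tfrac14$ for every $b \in [0,1]$, hence $\frac{1}{2b(1-b)} - 2 \ge 0$ on $(0,1)$, so $g'(b)$ has the same sign as $b - a$. Therefore $g$ is nonincreasing on $(0,a]$ and nondecreasing on $[a,1)$, so its infimum over $(0,1)$ is attained at (or, if $a \in \{0,1\}$, approached as $b \to a$ along) $b = a$, where $g(a) = \tfrac12 D_{KL}(a||a) = 0$ by continuity. Combining this with the trivial boundary cases yields $g(b) \ge 0$ for all $b \in [0,1]$, which is exactly $(a-b)^2 \le \tfrac12 D_{KL}(a||b)$. (The sub-cases $a \in \{0,1\}$ need no separate handling: one term of $D_{KL}(a||b)$ then vanishes identically by the convention $0\ln 0 = 0$, and the derivative computation above goes through verbatim.)
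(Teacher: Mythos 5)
Your argument is correct: the derivative computation $\frac{d}{db}D_{KL}(a\|b) = \frac{b-a}{b(1-b)}$ is right, the bound $b(1-b)\le \tfrac14$ does give $g'(b)$ the sign of $b-a$, and the boundary and degenerate cases ($b\in\{0,1\}$, $a\in\{0,1\}$) are handled consistently with the convention $0\ln 0=0$. The paper itself offers no proof of this lemma --- it is invoked as the classical Pinsker inequality (just as Lemma~\ref{lemma:kl_upper_bound} is cited from the literature), where one would note that for Bernoulli distributions the total variation distance is $|a-b|$ and apply the general bound $\mathrm{TV}^2 \le \tfrac12 D_{KL}$. Your self-contained calculus verification of the binary case is the standard elementary derivation and is a perfectly valid substitute; it buys independence from the general inequality at the cost of a short monotonicity argument, whereas the paper's route is a one-line appeal to a known theorem.
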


\begin{lemma}[Inverse of Pinsker's inequality; see Lemma 4.1 in \cite{gotze2019higher}]
\label{lemma:kl_upper_bound}
Let $D_{KL}(a,b) = a\ln(a/b)+(1-a)\ln((1-a)/(1-b))$ denote the KL divergence between two Bernoulli distributions $(a,1-a)$, $(b, 1-b)$ with $a,b\in[0,1]$. Then
\[D_{KL}(a,b) \leq \frac{1}{\min(b,1-b)} (a-b)^2.\]
\end{lemma}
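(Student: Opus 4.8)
The plan is to view $D_{KL}(a,b)$ as a one–dimensional function of $a$ with $b$ held fixed and expand it to second order around $a=b$. Since $D_{KL}(a,b)=D_{KL}(1-a,1-b)$ and $\min(b,1-b)$ is unchanged under $b\mapsto 1-b$, I may assume $b\le 1/2$, so the goal becomes $D_{KL}(a,b)\le (a-b)^2/b$. Set $g(a)=D_{KL}(a,b)$. A direct computation gives $g(b)=0$, $g'(a)=\ln(a/b)-\ln\!\big((1-a)/(1-b)\big)$ so that $g'(b)=0$, and $g''(t)=1/t+1/(1-t)=1/\big(t(1-t)\big)$. Hence, by Taylor's theorem with the Lagrange form of the remainder, there is some $\xi$ strictly between $a$ and $b$ with
\[
D_{KL}(a,b)=\tfrac{1}{2}\,g''(\xi)\,(a-b)^2=\frac{(a-b)^2}{2\,\xi(1-\xi)},
\]
and it remains only to bound $\xi(1-\xi)$ from below by $\tfrac{1}{2}\min(b,1-b)$.

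A fully elementary alternative, which avoids the remainder term altogether, is to apply $\ln x\le x-1$ to each logarithm: $a\ln(a/b)\le a^2/b-a$ and $(1-a)\ln\!\big((1-a)/(1-b)\big)\le (1-a)^2/(1-b)-(1-a)$; summing and simplifying, the linear terms cancel and the right-hand side collapses to $(a-b)^2/\big(b(1-b)\big)$, i.e.\ $D_{KL}(a,b)\le\chi^2(a\|b)$. Since the larger of $b$ and $1-b$ is at least $1/2$, this already gives $D_{KL}(a,b)\le 2(a-b)^2/\min(b,1-b)$, within a factor two of the claimed bound; the second-order expansion above is what one uses to reach the exact constant.

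The main obstacle is precisely the lower bound $\xi(1-\xi)\ge\tfrac{1}{2}\min(b,1-b)$ — equivalently, controlling $g''=1/\big(t(1-t)\big)$, which is the whole source of difficulty since it blows up as $t\to0$ or $t\to1$. With $b\le 1/2$ the point $\xi$ lies between $a$ and $b$, and the bound goes through cleanly when $a$ itself lies between $b$ and $1-b$: then $\xi\in[b,1-b]$, so $\xi(1-\xi)\ge b(1-b)\ge b/2=\min(b,1-b)/2$. The delicate case is when $a$ is more extreme than $b$ (say $a<b\le 1/2$), where $\xi$ may be pushed toward an endpoint; here one exploits that the arguments occurring in the application of this lemma are bounded away from $0$ and $1$ — in Theorem~\ref{thm:alpha_distance} we have $a=\mathbb{P}(X_u=1\mid X_{\setminus u})$ and $b=\sigma(\hat w_u\cdot z_u)$ with $\|\hat w_u\|_1\le 2\gamma$, so $a,b$ (and hence $\xi$) lie in $[\sigma(-2\gamma),\sigma(2\gamma)]$ and $\xi(1-\xi)\ge\sigma(-2\gamma)\sigma(2\gamma)$ is bounded below by a fixed quantity, which is all the downstream bounds require. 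As the estimate is in any case standard, one may simply invoke Lemma~4.1 of \cite{gotze2019higher}.
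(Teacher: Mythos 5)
There is a genuine gap here, and it is not one you could have closed: the inequality as printed is false. Take $b=\tfrac12$ and $a=1$ (or $a=0$): then $D_{KL}(a,b)=\ln 2\approx 0.69$, while $(a-b)^2/\min(b,1-b)=\tfrac{1/4}{1/2}=\tfrac12$. More generally, for $a=0$ and any $b\in(0,\tfrac12]$ one has $D_{KL}(0,b)=-\ln(1-b)>b=(a-b)^2/\min(b,1-b)$. These are exactly the ``delicate'' configurations you isolate ($a$ more extreme than $b$ or than $1-b$), so no choice of argument can rescue the bound $\xi(1-\xi)\ge\tfrac12\min(b,1-b)$ there; your Taylor computation ($g''(t)=1/(t(1-t))$, Lagrange remainder) is fine, but the program it serves cannot succeed. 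Your two fallbacks do not repair this as a proof of the stated lemma: restricting $a,b$ to $[\sigma(-2\gamma),\sigma(2\gamma)]$ proves a different, application-specific statement, and ``invoke Lemma 4.1 of \cite{gotze2019higher}'' is what the paper itself does (it gives no proof, only the citation, so there is no internal argument to compare against); since the statement as printed is false, the cited result must in any case be a slightly different inequality (e.g.\ with $b(1-b)$ in place of $\min(b,1-b)$, or with an extra constant).

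The constructive part of your proposal is the right fix: your elementary step via $\ln x\le x-1$ correctly yields $D_{KL}(a,b)\le (a-b)^2/\bigl(b(1-b)\bigr)$, i.e.\ the $\chi^2$ upper bound, hence $D_{KL}(a,b)\le 2(a-b)^2/\min(b,1-b)$. This corrected version is all the paper needs: in the only place Lemma \ref{lemma:kl_upper_bound} is used, namely Lemma \ref{lemma:zeta_upper_bound_kl} with $b=\sigma(\bar w\cdot z)$ and $\lVert\bar w\rVert_1\le 2\gamma$, one gets $1/\bigl(b(1-b)\bigr)\le(1+e^{2\gamma})(1+e^{-2\gamma})\le 2(1+e^{2\gamma})$, so the final bound degrades only by a constant factor, which is absorbed into the admissible $\zeta$ (a $\sqrt{2}$ change) and does not affect Theorem \ref{thm:alpha_distance}. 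So the honest conclusion is: state and prove the $\chi^2$ (or factor-$2$) version, rather than the inequality as written.
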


\begin{lemma}
\label{lemma:diff_kl}
For any $w \in \mathbb{R}^{2^{|n(u)|}}$ with $||w||_1 \leq 2\gamma$, we have that
\[\mathcal{L}(\hat{w}) - \mathcal{L}(w) = \mathbb{E}_z \left[ D_{KL}\left( \frac{\mathbb{E}[y|z]+1}{2}, \sigma(\hat{w} \cdot z) \right) - D_{KL}\left( \frac{\mathbb{E}[y|z]+1}{2}, \sigma(w \cdot z) \right)\right].\]
\end{lemma}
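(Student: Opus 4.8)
\textbf{Proof proposal for Lemma \ref{lemma:diff_kl}.} The plan is to rewrite the conditional expectation of the logistic loss as a cross-entropy, and then split that cross-entropy into the (weight-independent) binary entropy of $y$ given $z$ plus the relevant KL divergence. First I would condition on $z$. Since $l(y(w\cdot z)) = -\ln\sigma(w\cdot z)$ when $y=1$ and $l(y(w\cdot z)) = -\ln(1-\sigma(w\cdot z))$ when $y=-1$, writing $p := \mathbb{P}(y=1\mid z) = \frac{\mathbb{E}[y\mid z]+1}{2}$ and $q := \sigma(w\cdot z)$ gives
\[
\mathbb{E}\big[l(y(w\cdot z))\,\big|\,z\big] = -p\ln q - (1-p)\ln(1-q),
\]
which is exactly the cross-entropy $H(p,q)$ between the Bernoulli$(p)$ and Bernoulli$(q)$ distributions.

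Next I would use the standard identity $H(p,q) = H(p) + D_{KL}(p,q)$, where $H(p) = -p\ln p - (1-p)\ln(1-p)$ is the binary entropy, which follows immediately by adding and subtracting $p\ln p + (1-p)\ln(1-p)$ inside the cross-entropy. Taking $\mathbb{E}_z$ then yields
\[
\mathcal{L}(w) = \mathbb{E}_z\!\left[H\!\left(\tfrac{\mathbb{E}[y\mid z]+1}{2}\right)\right] + \mathbb{E}_z\!\left[D_{KL}\!\left(\tfrac{\mathbb{E}[y\mid z]+1}{2},\,\sigma(w\cdot z)\right)\right],
\]
and the same expression holds with $w$ replaced by $\hat{w}$. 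Subtracting the two, the term $\mathbb{E}_z[H(\frac{\mathbb{E}[y\mid z]+1}{2})]$ does not depend on the weight vector and cancels, leaving precisely the claimed identity.

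This argument requires no estimation or concentration; it is purely an algebraic rearrangement valid for any fixed $w$ (including the data-dependent $\hat{w}$, since the decomposition holds pointwise in $z$ and in the weights). The only point that needs a line of care is confirming that $p\in(0,1)$ and $q\in(0,1)$ so that all the logarithms are finite: $q = \sigma(w\cdot z)\in(0,1)$ since the sigmoid never attains $0$ or $1$, and $p\in(0,1)$ because in an RBM every conditional probability of an observed variable lies in $[\sigma(-2\beta^*),\sigma(2\beta^*)]\subset(0,1)$, as recorded in Section~\ref{sec:prelim}. I do not anticipate a substantive obstacle here; the content is entirely in the bookkeeping of which quantities are weight-dependent, and the lemma is really the cross-entropy/entropy/KL decomposition in disguise.
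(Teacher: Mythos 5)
Your proposal is correct and follows essentially the same route as the paper's proof: both condition on $z$ (tower property) to replace $y$ by $\mathbb{E}[y\mid z]$ and then observe that the weight-independent part of the conditional log-loss cancels in the difference, leaving the difference of KL divergences. The paper carries this out by directly forming the log-ratios rather than naming the cross-entropy $=$ entropy $+$ KL decomposition, but that is the same algebraic rearrangement.
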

\begin{proof}
\begin{align*}
\mathcal{L}(\hat{w}) - \mathcal{L}(w)
&= \mathbb{E}_{z,y}\left[ -\frac{y+1}{2} \ln \sigma(\hat{w} \cdot z) - \frac{1-y}{2} \ln (1 - \sigma(\hat{w} \cdot z)) \right]\\
&\quad -\mathbb{E}_{z,y}\left[ -\frac{y+1}{2} \ln \sigma(w \cdot z) - \frac{1-y}{2} \ln (1 - \sigma(w \cdot z)) \right]\\
&= \mathbb{E}_{z}\left[ -\frac{\mathbb{E}[y|z]+1}{2} \ln \sigma(\hat{w} \cdot z) - \frac{1-\mathbb{E}[y|z]}{2} \ln (1 - \sigma(\hat{w} \cdot z)) \right]\\
&\quad -\mathbb{E}_{z}\left[ -\frac{\mathbb{E}[y|z]+1}{2} \ln \sigma(w \cdot z) - \frac{1-\mathbb{E}[y|z]}{2} \ln (1 - \sigma(w \cdot z)) \right]\\
&\quad = \mathbb{E}_{z}\left[ \frac{\mathbb{E}[y|z]+1}{2} \ln \frac{\sigma(w \cdot z)}{\sigma(\hat{w} \cdot z)} + \frac{1-\mathbb{E}[y|z]}{2} \ln \frac{1 - \sigma(w \cdot z)}{1 - \sigma(\hat{w} \cdot z)} \right]\\
&\quad = \mathbb{E}_z \left[ D_{KL}\left( \frac{\mathbb{E}[y|z]+1}{2}, \sigma(\hat{w} \cdot z) \right) - D_{KL}\left( \frac{\mathbb{E}[y|z]+1}{2}, \sigma(w \cdot z) \right)\right].
\end{align*}
\end{proof}

\begin{lemma}
\label{lemma:zeta_upper_bound_kl}
Let $\zeta \leq \frac{\sqrt{\epsilon}}{\psi \sqrt{1+e^{2\gamma}}}$ and $\bar{w} \in \mathbb{R}^{2^{|n(u)|}}$ with $\bar{w}_S = 2 \hat{f}(S)$ for all $S \subseteq n(u)$. Then, for all assignments $z \in \{-1, 1\}^{2^{|n(u)|}}$,
\[D_{KL}\left( \frac{\mathbb{E}[y|z]+1}{2}, \sigma(\bar{w} \cdot z) \right) \leq \epsilon.\]
\end{lemma}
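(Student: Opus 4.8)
The plan is to compare the true conditional log-odds of $X_u$ with the truncated log-odds $\bar w\cdot z$ (recall $z_S=\chi_S(X)$ and $\bar w_S=2\hat f(S\cup\{u\})$ for $S\subseteq n(u)$, so $\bar w\cdot z=2\sum_{S\subseteq n(u)}\hat f(S\cup\{u\})\chi_S(X)$), and to control the gap using the structure-recovery guarantee of Theorem~\ref{thm:recov_alpha}. Write $g(x):=2\sum_{S\subseteq N(u)}\hat f(S\cup\{u\})\chi_S(x)$ for the true log-odds over the actual MRF neighborhood $N(u)$, so that $\mathbb{P}(X_u=1\mid X_{\setminus u})=\sigma(g(X))$ and hence $\tfrac12(\mathbb{E}[y\mid z]+1)=\mathbb{E}[\sigma(g(X))\mid z]$. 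Since $n(u)\subseteq N(u)$ by Theorem~\ref{thm:recov_alpha}, I decompose $g(x)=\bar w\cdot z+h(x)$, where $h(x):=2\sum_{S\subseteq N(u),\,S\not\subseteq n(u)}\hat f(S\cup\{u\})\chi_S(x)$ collects exactly the potentials that involve a neighbor of $u$ missed by the recovery algorithm; note $\bar w\cdot z$ is a function of $z$ alone.

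The key estimate is $|h(x)|\le 2\psi\zeta$ for every $x$. Each nonzero term of $h$ corresponds to a set $T=S\cup\{u\}$ with $\hat f(T)\neq 0$, and there are at most $\psi$ such $T$ by definition of $\psi$. Moreover any such $T$ contains $u$ together with some $i\in S\setminus n(u)\subseteq N(u)\setminus n(u)$; since $i\notin n(u)$, the guarantee of Theorem~\ref{thm:recov_alpha} forces $|\hat f(T)|<\zeta$. Summing gives the bound.

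Now set $p:=\mathbb{E}[\sigma(g(X))\mid z]$ and $\hat p:=\sigma(\bar w\cdot z)$. Conditioning on $z$ fixes $\bar w\cdot z$, so by Jensen and the fact that $\sigma$ is $\tfrac14$-Lipschitz, $|p-\hat p|\le\mathbb{E}\big[\,|\sigma(\bar w\cdot z+h(X))-\sigma(\bar w\cdot z)|\mid z\big]\le\tfrac14\mathbb{E}[|h(X)|\mid z]\le\tfrac12\psi\zeta$. Also $\|\bar w\|_1=2\sum_{S\subseteq n(u)}|\hat f(S\cup\{u\})|\le 2\gamma$, hence $|\bar w\cdot z|\le 2\gamma$ and $\min(\hat p,1-\hat p)=\sigma(-|\bar w\cdot z|)\ge\sigma(-2\gamma)=1/(1+e^{2\gamma})$. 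Applying the inverse Pinsker inequality (Lemma~\ref{lemma:kl_upper_bound}),
\[D_{KL}\!\left(\tfrac12(\mathbb{E}[y\mid z]+1),\,\sigma(\bar w\cdot z)\right)\le\frac{(p-\hat p)^2}{\min(\hat p,1-\hat p)}\le (1+e^{2\gamma})\Big(\tfrac{\psi\zeta}{2}\Big)^2\le\epsilon,\]
where the last step substitutes $\zeta\le\sqrt\epsilon/(\psi\sqrt{1+e^{2\gamma}})$ (in fact this yields $\epsilon/4$, so the bound has room to spare).

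The only subtle point is the first step: one must keep in mind that $\mathbb{E}[y\mid z]$ is an average over the configurations of the neighbors of $u$ that are \emph{not} in $n(u)$, and the entire reason this average stays close to $\sigma(\bar w\cdot z)$ is that each missed neighbor carries Fourier mass below $\zeta$ — which is precisely what Theorem~\ref{thm:recov_alpha} delivers. Everything after that is a routine Lipschitz-plus-inverse-Pinsker computation, and the hypothesis on $\zeta$ is calibrated exactly to make the final inequality close.
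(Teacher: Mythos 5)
Your proof is correct and follows essentially the same route as the paper's: law of iterated expectations conditioning on $z$, the bound $\psi\zeta$ on the Fourier mass of the potentials missed by $n(u)$ (via the guarantee of Theorem \ref{thm:recov_alpha}), Lipschitzness of $\sigma$, and the inverse Pinsker bound with $\min(\sigma(\bar w\cdot z),1-\sigma(\bar w\cdot z))\ge 1/(1+e^{2\gamma})$. The only differences are cosmetic: you apply the triangle inequality before squaring rather than Jensen on the square, and you use the sharper $\tfrac14$-Lipschitz constant, which just improves the final bound to $\epsilon/4$.
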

\begin{proof}
By Lemma \ref{lemma:kl_upper_bound}, we have that
\[D_{KL}\left( \frac{\mathbb{E}[y|z]+1}{2}, \sigma(\bar{w} \cdot z) \right) \leq \frac{1}{\min(\sigma(\bar{w} \cdot z), 1-\sigma(\bar{w} \cdot z))} \left( \frac{\mathbb{E}[y|z]+1}{2} - \sigma(\bar{w} \cdot z) \right)^2.\]

Note that $\mathbb{E}[y|z^*] = 2 \sigma(w^* \cdot z^*) - 1$ for $w^*$ and $z^*$ corresponding to the true neighborhood of $u$, and that $||w^*||_1 \leq 2\gamma$. Note that $\bar{w}_S=w^*_S$ for all $S \subseteq n(u)$. Also note that $\min(\sigma(\bar{w} \cdot z), 1-\sigma(\bar{w} \cdot z)) \geq \sigma(-2\gamma) = \frac{1}{1+e^{2\gamma}}$. Then:

\begin{align*}
&D_{KL}\left( \frac{\mathbb{E}[y|z]+1}{2}, \sigma(\bar{w} \cdot z) \right)\\
&\quad \leq (1 + e^{2\gamma})\left( \frac{\mathbb{E}[y|z]+1}{2} - \sigma(\bar{w} \cdot z) \right)^2\\
&\quad\stackrel{(a)}{=} (1 + e^{2\gamma})\left( \mathbb{E}_{z^*|z}\left[\sigma(w^* \cdot z^*) - \sigma(\bar{w} \cdot z) \right]\right)^2\\
&\quad\stackrel{(b)}{\leq} (1 + e^{2\gamma}) \mathbb{E}_{z^* | z} \left( \sigma(w^* \cdot z^*) - \sigma(\bar{w} \cdot z) \right)^2\\
&\quad= (1+e^{2\gamma}) \mathbb{E}_{z^* | z} \left( \sigma\left(\sum_{S \subseteq n^*(u)} \hat{f}(S\cup\{u\}) \chi_S(x)\right) - \sigma\left(\sum_{S \subseteq n(u)}  \hat{f}(S\cup\{u\})\chi_S(x)\right) \right)^2\\
&\quad\stackrel{(c)}{\leq} (1+e^{2\gamma}) \mathbb{E}_{z^* | z} \left( \sum_{S \subseteq n^*(u)} \hat{f}(S\cup\{u\}) \chi_S(x) - \sum_{S \subseteq n(u)}  \hat{f}(S\cup\{u\}) \chi_S(x) \right)^2\\
&\quad= (1+e^{2\gamma}) \mathbb{E}_{z^* | z} \left( \sum_{\substack{S \subseteq n^*(u)\\S \not\subseteq n(u)}} \hat{f}(S\cup\{u\}) \chi_S(x) \right)^2\\
&\quad\stackrel{(d)}{\leq} (1+e^{2\gamma}) \psi^2 \zeta^2
\end{align*}
where in (a) we used the law of iterated expectations, in (b) we used Jensen's inequality, in (c) we used that $\sigma$ is $1$-Lipschitz, and in (d) we used that the Fourier coefficients that we sum over are all upper bounded in absolute value by $\zeta$ (otherwise the corresponding sets $S$ would need to be included in $n(u)$, by the assumption that $n(u)$ contains all the neighbors connected to $u$ through a Fourier coefficient of absolute value at least $\zeta$). Therefore, setting $\zeta \leq \frac{\sqrt{\epsilon}}{\psi \sqrt{1+e^{2\gamma}}}$ achieves error $\epsilon$.
\end{proof}

\begin{proof}[Proof of Theorem \ref{thm:alpha_distance_psi}]
Let $M \geq C \cdot \gamma^2\ln(8 \cdot n \cdot 2^{D} / \delta)/\epsilon^2$, for some global constant $C$. Then, by Lemma \ref{lemma:alpha_w_closeness}, with probability at least $1-\delta/(2n)$, for all $w \in \mathbb{R}^{2^{|n(u)|}}$ such that $||w||_1 \leq 2\gamma$,
\[\mathcal{L}(\hat{w}) \leq \hat{\mathcal{L}}(\hat{w}) + \epsilon / 2.\]
Note that $l(y(w\cdot z)) = \ln(1+e^{y(w \cdot z)})$ is bounded because $|y (w \cdot z)| \leq 2\gamma$, and $|\ln(1+e^{-2\gamma})-\ln(1+e^{2\gamma})| \leq 4\gamma$ because the function is $1$-Lipschitz. Then, by Hoeffding's inequality, $\mathbb{P}(\hat{\mathcal{L}}(w)-\mathcal{L}(w) \geq t) \leq e^{-2Mt^2/(4\gamma)^2}$. Then, for $M \geq C' \cdot \gamma^2 \ln(2n/\delta)/\epsilon^2$ for some global constant $C'$, with probability at least $1-\delta/(2n)$,
\[\hat{\mathcal{L}}(w) \leq \mathcal{L}(w) + \epsilon / 2.\]
Then the following holds with probability at least $1-\delta/n$ for any $w \in \mathbb{R}^{2^{|n(u)|}}$ with $||w||_1 \leq 2\gamma$:
\[\mathcal{L}(\hat{w}) \leq \hat{\mathcal{L}}(\hat{w}) + \epsilon / 2 \leq \hat{\mathcal{L}}(w) + \epsilon/2 \leq \mathcal{L}(w) + \epsilon.\]
Then we have
\begin{align*}
&\mathbb{E}\left[\left(\mathbb{P}(X_u=1|X_{[n] \setminus \{u\}}) - \sigma\left(\hat{w} \cdot z\right)\right)^2\right]\\
&\quad\stackrel{(a)}{\leq} \frac{1}{2} \mathbb{E}\left[ D_{KL}\left( \mathbb{P}(X_u=1|X_{[n] \setminus \{u\}}), \sigma\left(\hat{w} \cdot z\right) \right) \right]\\
&\quad\stackrel{(b)}{=} \frac{1}{2} (\mathcal{L}(\hat{w}) - \mathcal{L}(\bar{w})) + \frac{1}{2} \mathbb{E}\left[ D_{KL}\left( \mathbb{P}(X_u=1|X_{[n] \setminus \{u\}}), \sigma\left(\bar{w} \cdot z\right) \right) \right]\\
&\quad\stackrel{(c)}{\leq} \frac{1}{2} (\mathcal{L}(\hat{w}) - \mathcal{L}(\bar{w})) + \frac{1}{2} \epsilon\\
&\quad\stackrel{(d)}{\leq} \epsilon
\end{align*}
where in (a) we used Lemma \ref{lemma:pinsker}, in (b) we used Lemma \ref{lemma:diff_kl}, in (c) we used Lemma \ref{lemma:zeta_upper_bound_kl}, and in (d) we used that $\mathcal{L}(\hat{w}) - \mathcal{L}(\bar{w}) \leq \epsilon$.

By a union bound, this holds for all variables $u$ with probability at least $1-\delta$.
\end{proof}

\section{A weaker width suffices}
\label{sec:app_width}

The sample complexity of the algorithm in Section \ref{sec:algorithm} depends on $\gamma$, the width of the MRF of the observed variables. A priori, it is unclear how large $\gamma$ can be. Ideally, we would have an upper bound on $\gamma$ in terms of parameters that are natural to the RBM, such as the width of the RBM $\beta^*$.

In Section \ref{example:beta_exponential}, we give an example of an RBM for which $\beta^* = d \ln d$ and $\gamma$ is linear in $\beta^*$ and exponential in $d$. This example shows that a bound $\gamma \leq \beta^*$ between the widths of the MRF and of the RBM does not generally hold.

However, we show that a bound $\gamma^* \leq \beta^*$ holds for a modified width $\gamma^*$ of the MRF. Then, we show that $\gamma$ can be replaced with $\gamma^*$ everywhere in the analysis of our algorithm (and of that of \cite{hamilton2017information}), without any other change in its guarantees.

$\gamma^*$ is always less than or equal to $\gamma$, and, as we discussed, sometimes strictly less than $\gamma$. Hence, the former dependency on $\gamma$ was suboptimal. By replacing $\gamma$ with $\gamma^*$, we improve the sample complexity, and we make the dependency interpretable in terms of the width of the RBM.

\subsection{Main result}

Let $\gamma^*$ be the modified width of an MRF, defined as
\[\gamma^* := \max_{u \in [n]} \max_{I \subseteq [n] \setminus \{u\}} \max_{x \in \{-1,1\}^n} \left| \sum_{S \subseteq I} \hat{f}(S \cup \{u\}) \chi_{S \cup \{u\}}(x) \right|.\]
Whereas $\gamma$ is a sum of absolute values of Fourier coefficients, $\gamma^*$ requires the signs of the Fourier coefficients that it sums over to be consistent with some assignment $x \in \{-1,1\}^n$. Note that it is always the case that $\gamma^* \leq \gamma$.

Lemma \ref{lemma:bound_gamma} shows that $\gamma^* \leq \beta^*$. Then, in Section \ref{sec:interchangeabe} we argue that $\gamma$ and $\gamma^*$ are interchangeable for the guarantees of the algorithm in \cite{hamilton2017information}, and implicitly for the guarantees of the algorithm in Section \ref{sec:algorithm}. Finally, in Section \ref{example:beta_exponential} we give an example of an RBM for which $\gamma$ is linear in $\beta^*$ and exponential in $d$.

\begin{lemma}
\label{lemma:bound_gamma}
Consider an RBM with width $\beta^*$, and let $\gamma^*$ be the modified width of the MRF of the observed variables. Then $\gamma^* \leq \beta^*$.
\end{lemma}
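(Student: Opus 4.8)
The plan is to start from the marginal distribution formula for the observed variables and read off an explicit expression for each Fourier coefficient $\hat f(S\cup\{u\})$, then bound the inner sum defining $\gamma^*$ directly. Recall from the preliminaries that
\[
\mathbb{P}(X=x)\propto\exp\left(\sum_{j=1}^m \rho(J_j\cdot x+g_j)+h^Tx\right),
\]
so $f(x)=\sum_{j=1}^m\rho(J_j\cdot x+g_j)+h^Tx$. The key observation is that the quantity $\sum_{S\subseteq I}\hat f(S\cup\{u\})\chi_{S\cup\{u\}}(x)$ is exactly the ``part of $f$'' that involves the variable $x_u$ and only variables in $I$; more precisely, I would identify it with a discrete derivative of $f$ in the $u$-th coordinate, restricted to the coordinates in $I$. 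Concretely, for a multilinear $f$, $\sum_{S\ni u}\hat f(S)\chi_S(x)=\tfrac12\bigl(f(x)-f(x^{\oplus u})\bigr)$ where $x^{\oplus u}$ flips the $u$-th bit; and restricting further to $S\subseteq I\cup\{u\}$ corresponds to averaging this difference over the coordinates outside $I\cup\{u\}$ (using that $\mathbb{E}_{z}\chi_T(z)=\mathbbm{1}\{T=\emptyset\}$ for uniform $z$). So I would write
\[
\sum_{S\subseteq I}\hat f(S\cup\{u\})\chi_{S\cup\{u\}}(x)=\mathbb{E}_{z}\left[\tfrac12\bigl(f(y)-f(y^{\oplus u})\bigr)\right],
\]
where $y$ agrees with $x$ on $I\cup\{u\}$ and with $z$ (uniform) elsewhere.

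Next I would plug in $f(y)-f(y^{\oplus u})=2x_u\bigl(h_u+\sum_{j=1}^m\cdots\bigr)$ — wait, more carefully: flipping $x_u$ changes $h^Tx$ by $2h_ux_u$ and changes each $\rho(J_j\cdot x+g_j)$ by $\rho(J_j\cdot x+g_j)-\rho(J_j\cdot x - 2J_{u,j}x_u+g_j)$. Since $\rho(t)=\log(e^t+e^{-t})$ is $1$-Lipschitz (its derivative is $\tanh$, bounded by $1$ in absolute value), each such difference is bounded in absolute value by $2|J_{u,j}|$. Hence
\[
\left|\tfrac12\bigl(f(y)-f(y^{\oplus u})\bigr)\right|\le |h_u|+\sum_{j=1}^m|J_{u,j}|\le\beta^*,
\]
uniformly over all assignments. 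Taking the expectation over $z$ preserves this bound, and then maximizing over $I$, over $x$, and over $u$ gives $\gamma^*\le\beta^*$, which is the claim.

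The main thing to get right is the combinatorial identity equating the sign-consistent partial sum of Fourier coefficients with the averaged discrete derivative of $f$; once that is in place the analytic bound is just Lipschitzness of $\rho$ plus the triangle inequality. I would double-check the identity by expanding $f$ in the Fourier basis: $\tfrac12(f(y)-f(y^{\oplus u}))=\sum_{S\ni u}\hat f(S)\chi_S(y)$, and then $\mathbb{E}_z$ over coordinates outside $I\cup\{u\}$ kills every $\chi_S$ with $S\not\subseteq I\cup\{u\}$ and fixes $\chi_S(y)=\chi_S(x)$ for the surviving terms, leaving precisely $\sum_{S\subseteq I}\hat f(S\cup\{u\})\chi_{S\cup\{u\}}(x)$. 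So the only potential obstacle is bookkeeping with which coordinates are fixed versus averaged, and confirming that the bound $|\rho(t)-\rho(t')|\le|t-t'|$ gives exactly the $\sum_j|J_{u,j}|$ term with no stray constant — both of which are routine. An alternative, slightly more hands-on route that avoids the derivative identity: expand $\rho(J_j\cdot x+g_j)$ directly in the Fourier basis over the observed variables, note that only variables $i$ with $J_{i,j}\neq 0$ appear, and collect the coefficient of $\chi_{S\cup\{u\}}$; this also works but is messier, so I would present the discrete-derivative argument.
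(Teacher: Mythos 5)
Your proof is correct. Structurally it parallels the paper's argument: both proofs first establish the pointwise bound $\bigl|\sum_{S\ni u}\hat f(S)\chi_S(x)\bigr|\le\beta^*$ and then average out the coordinates outside $I\cup\{u\}$ (your $\mathbb{E}_z$ with Fourier orthogonality is exactly the paper's explicit marginalization $\tfrac12\bigl(\phi(\ldots,-1,\ldots)+\phi(\ldots,+1,\ldots)\bigr)$, applied recursively) to pass to the sign-consistent partial sum $\sum_{S\subseteq I}\hat f(S\cup\{u\})\chi_{S\cup\{u\}}(x)$. Where you differ is in how the base bound is obtained: the paper writes $\mathbb{P}(X_u=x_u\mid X_{\setminus u})=\sigma\bigl(2\sum_{S\ni u}\hat f(S)\chi_S(x)\bigr)$ and invokes the stated RBM fact that all such conditional probabilities lie in $[\sigma(-2\beta^*),\sigma(2\beta^*)]$, then inverts the sigmoid; you instead identify $\sum_{S\ni u}\hat f(S)\chi_S(x)$ with the discrete derivative $\tfrac12\bigl(f(y)-f(y^{\oplus u})\bigr)$ and bound it directly from the explicit form $f(x)=\sum_j\rho(J_j\cdot x+g_j)+h^Tx$ using that $\rho'=\tanh$ is bounded by $1$, giving $|h_u|+\sum_j|J_{u,j}|\le\beta^*$. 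Your route is self-contained (it in effect reproves the conditional-probability bound the paper cites) at the cost of a small amount of Fourier bookkeeping, which you handle correctly; one minor point worth stating explicitly is that the identity $\tfrac12\bigl(f(y)-f(y^{\oplus u})\bigr)=\sum_{S\ni u}\hat f(S)\chi_S(y)$ uses that the RBM formula for $f$ agrees on $\{-1,1\}^n$ with its multilinear extension, whose Fourier coefficients (for nonempty $S$) are exactly the $\hat f(S)$ appearing in $\gamma^*$ — this is true, and the additive normalization constant only affects $\hat f(\emptyset)$, which the derivative kills.
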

\begin{proof}
We have
\begin{align*}
\mathbb{P}(X_u=x_u|X_{[n] \setminus \{u\}} = x_{[n] \setminus \{u\}})
&= \frac{\exp\left( \sum_{\substack{S \subseteq [n]\\u \in S}} \hat{f}(S) \chi_{S}(x) \right)}{\exp\left(- \sum_{\substack{S \subseteq [n]\\u \in S}} \hat{f}(S) \chi_{S}(x) \right) + \exp\left( \sum_{\substack{S \subseteq [n]\\u \in S}} \hat{f}(S) \chi_{S}(x) \right)}\\
&= \sigma\left( 2\sum_{\substack{S \subseteq [n]\\u \in S}} \hat{f}(S) \chi_{S}(x) \right).
\end{align*}
On the other hand, we have
\[\sigma(-2\beta^*) \leq \mathbb{P}(X_u=x_u|X_{[n] \setminus \{u\}} = x_{[n] \setminus \{u\}}) \leq \sigma(2\beta^*).\]
Therefore, by the monotonicity of the sigmoid function, we have for all $x \in \{-1,1\}^{n}$,
\[-\beta^* \leq \sum_{\substack{S \subseteq [n]\\u \in S}} \hat{f}(S) \chi_{S}(x) \leq \beta^*,\]
or equivalently,
\[-\beta^* \leq \sum_{S \subseteq [n] \setminus \{u\}} \hat{f}(S \cup \{u\}) \chi_{S \cup \{u\}}(x) \leq \beta^*.\]
Denote $\phi(x_1, ..., x_n) = \sum_{S \subseteq [n] \setminus \{u\}} \hat{f}(S \cup \{u\}) \chi_{S \cup \{u\}}(x)$. Then the following marginalization result holds for any $i \neq u$:
\begin{align*}
&\sum_{\substack{S \subseteq [n] \setminus \{u, i\}}} \hat{f}(S\cup\{u\}) \chi_{S\cup\{u\}}(x)\\
&\quad= \frac{\phi(x_1, ..., x_{i-1}, -1, x_{i+1}, ..., x_n) + \phi(x_1, ..., x_{i-1}, 1, x_{i+1}, ..., x_n)}{2}.
\end{align*}
Because the lower bound $-\beta$ and upper bound $\beta$ apply to each $\phi(x_1, ..., x_n)$, we get that the same bounds apply to the marginalized value:
\[-\beta^* \leq \sum_{\substack{S \subseteq [n] \setminus \{u, i\}}} \hat{f}(S\cup\{i\}) \chi_{S\cup\{i\}}(x) \leq \beta^*.\]
This marginalization result extends trivially to marginalizing multiple variables. Then, by marginalizing all variables $x_i$ for $i \not\in I \cup \{u\}$ for some $I \subseteq [n] \setminus \{u\}$, we get the bounds
\[-\beta^* \leq \sum_{\substack{S \subseteq I}} \hat{f}(S \cup \{u\}) \chi_{S \cup \{u\}}(x) \leq \beta^*.\]
Taking the maximum over $u \in [n]$, $I \in [n] \setminus \{u\}$, and $x \in \{-1,1\}^n$, we get that $\gamma^* \leq \beta^*$.
\end{proof}

\subsection{The same guarantees hold with the weaker width}
\label{sec:interchangeabe}

For the algorithm in Section \ref{sec:algorithm}, the dependence on $\gamma$ comes only from the use of Theorem \ref{thm:main_nu_bound}, for which the dependence on $\gamma$ comes only from the use of Theorem 4.6 in \cite{hamilton2017information}. Hence, it is sufficient to show that Theorem 4.6 in \cite{hamilton2017information} admits the same guarantees when $\gamma$ is replaced with $\gamma^*$.

The modifications that need to be made to the proof of Theorem 4.6 in \cite{hamilton2017information} are trivial: it is sufficient to replace every occurence of the symbol $\gamma$ with the symbol $\gamma^*$. This is because the proof does not use any property of $\gamma$ that is not also a property of $\gamma^*$.

In the rest of this section, we briefly review the occurences of $\gamma$ in the proof of Theorem 4.6 in \cite{hamilton2017information} and argue that they can be replaced with $\gamma^*$. Toward this goal, the rest of this section will use the notation of \cite{hamilton2017information}. We direct the reader to that paper for more information.

We first define $\gamma^*$ in the setting of \cite{hamilton2017information}. We have 
\[\gamma^* := \max_{u \in [n]} \max_{I \in [n] \setminus \{u\}} \max_{X_1 \in [k_1], ..., X_n \in [k_n]} \left| \sum_{l=1}^r \sum_{i_2 < ... < i_l} \mathbbm{1}_{\{i_2...i_l\} \subseteq I}\theta^{ui_2...i_l}(X_u,X_{i_2},...,X_{i_l}) \right|\]
and
\[\delta^* := \frac{1}{K} \exp(-2\gamma^*).\]
With these definitions, for any variable $X_u$ and assignment $R$, we have for its neighborhood $X_U$ that
\[\mathbb{P}(X_u=R|X_U) \geq \frac{\exp(-\gamma^*)}{K \exp(\gamma^*)} = \frac{1}{K} \exp(-2\gamma^*) = \delta^*.\]
Similarly to \cite{hamilton2017information}, we also have that that if we pick any variable $X_i$ and consider the new MRF given by conditioning on a fixed assignment of $X_i$, then the value of $\gamma^*$ for the new MRF is non-increasing.

$\gamma$ and $\delta$ appear in the proof of Theorem 4.6 in \cite{hamilton2017information} as part of Lemma 3.1, Lemma 3.3, Lemma 4.1, and Lemma 4.5. We now aruge, for each of these lemmas, that $\gamma$ and $\delta$ can be replaced with $\gamma^*$ and $\delta^*$, respectively.

\textbf{Lemma 3.1 in \cite{hamilton2017information}}. $\gamma$ is used as part of the upper bound $|\Phi(R,I,X_i)| \leq \gamma \binom{D}{r-1}$, which is used to conclude that the total amount wagered is at most $\gamma K \binom{D}{r-1}$. The upper bound follows from the derivation
\begin{align*}
|\Phi(R, I, X_i)|
&= \left| \sum_{l=1}^s C_{u,l,s} \sum_{i_1 < i_2 < ... < i_l} \mathbbm{1}_{\{i_1...i_l\} \subseteq I} \theta^{ui_1...i_l}(R, X_{i_1}, ..., X_{i_l}) \right|\\
&\leq \binom{D}{r-1} \left| \sum_{l=1}^s \sum_{i_1 < i_2 < ... < i_l} \mathbbm{1}_{\{i_1...i_l\} \subseteq I} \theta^{ui_1...i_l}(R, X_{i_1}, ..., X_{i_l}) \right|\\
& \leq \gamma \binom{D}{r-1}.
\end{align*}
By the definition of $\gamma^*$, the second inequality holds exactly the same with $\gamma^*$, so we also get that $|\Phi(R,I,X_i)| \leq \gamma^* \binom{D}{r-1}$. Then, the total amount wagered is at most $\gamma^* K \binom{D}{r-1}$.

\textbf{Lemma 3.3 in \cite{hamilton2017information}}. This lemma gives a lower bound of $\frac{4\alpha^2 \delta^{r-1}}{r^{2r}e^{2\gamma}}$ on an expectation of interest. We want to replace $\delta$ with $\delta^*$ in the numerator and $\gamma$ with $\gamma^*$ in the denominator.

For the numerator, $\delta$ comes from the lower bounds $\mathbb{P}(Y_{J \setminus u}= G) \geq \delta^{r-1}$ and $\mathbb{P}(Y_{J \setminus u}=G') \geq \delta^{r-1}$. Note that $Y$ is identical in distribution to $X$, the vector of random variables of the MRF. Let $S \subseteq [n]$, $i\in S$, and let $n^*(i)$ denote the set of neighbors of variable $X_i$. Then the lower bounds mentioned above come from the following marginalization argument:
\begin{align*}
\mathbb{P}(X_S = x_S) 
&= \mathbb{P}(X_{i}=x_{i}|X_{S\setminus i} = x_{S\setminus i}) \mathbb{P}(X_{S\setminus i} = x_{S\setminus i})\\
&= \Bigg(\sum_{x_{n^*(i) \setminus S}} \mathbb{P}(X_{i}=x_{i}|X_{n^*(i) \cap S}=x_{n^*(i) \cap S}, X_{n^*(i) \setminus S} = x_{n^*(i) \setminus S}) \\
&\qquad \cdot \mathbb{P}(X_{n^*(i) \setminus S}=x_{n^*(i) \setminus S} | X_{S\setminus i} = x_{S\setminus i}) \Bigg) \mathbb{P}(X_{S\setminus i} = x_{S\setminus i})\\
&\geq \Bigg(\sum_{x_{n^*(i) \setminus S}} \delta \cdot \mathbb{P}(X_{n^*(i) \setminus S}=x_{n^*(i) \setminus S} | X_{S\setminus i} = x_{S\setminus i}) \Bigg) \mathbb{P}(X_{S\setminus i} = x_{S\setminus i})\\
&= \delta \cdot \mathbb{P}(X_{S\setminus i} = x_{S\setminus i}).
\end{align*}
By applying the bound recursively, we obtain $\mathbb{P}(X_S = x_S) \geq \delta^{|S|}$. Then, because $|J \setminus u| \leq r-1$, we get the desired lower bound of $\delta^{r-1}$. Note, however, that the inequality step in the derivation above also holds for $\delta^*$, as it only uses that $\mathbb{P}(X_u=R|X_U) \geq \delta^*$. Therefore, we can use $(\delta^*)^{r-1}$ in the numerator.

For the denominator, $e^{2\gamma}$ comes from the lower bounds $\mathcal{E}_{u,R}^Y + \mathcal{E}_{u,B}^Z \geq -2\gamma$ and $\mathcal{E}_{u,B}^Y + \mathcal{E}_{u,R}^Z \geq -2\gamma$. Recall that 
\[\mathcal{E}_{u,R}^X = \sum_{l=1}^r \sum_{i_2 < ... < i_l} \theta^{ui_2...i_l}(R, X_{i_2}, ..., X_{i_l}).\]
Then, by the definition of $\gamma^*$, these lower bounds also hold trivially with $\gamma^*$, so we can use $e^{2\gamma^*}$ in the denominator. 

\textbf{Lemma 4.1 in \cite{hamilton2017information}}. In this lemma, $\gamma$ appears in an upper bound of $\gamma K \binom{D}{r-1}$ on the total amount wagered. We showed in Lemma 3.1 that the total amount wagered is at most $\gamma^* K \binom{D}{r-1}$, so we can use $\gamma^*$ instead of $\gamma$.

\textbf{Lemma 4.5 in \cite{hamilton2017information}}. In this lemma, $\delta$ appears in the lower bound $\mathbb{P}(X_{i_1}=a_1^*, ..., X_{i_s}=a_s^*) \geq \delta^s$, which also holds with $\delta^*$ instead of $\delta$ by the argument that $\mathbb{P}(X_S=x_s) \geq (\delta^*)^{|S|}$ that we developed in our description of Lemma 3.3.

Therefore, it is possible to reaplce $\gamma$ with $\gamma^*$ and $\delta$ with $\delta^*$ everywhere in the proof of Theorem 4.6 in \cite{hamilton2017information}, and implicitly also in all the proofs of the the algorithm in Section \ref{sec:algorithm}.

\subsection{Example of RBM with large width}
\label{example:beta_exponential}

This section gives an example of an RBM with width linear in $\beta^*$ and exponential in $d$. The RBM consists of a single latent variable connected to $d$ observed variables. There are no external fields, and all the interactions have the same value. Note that, in this case, each interaction is equal to $\frac{\beta^*}{d}$, where $\beta^*$ is the width of the RBM.

For this RBM, the MRF induced by the observed variables has a probability mass function
\[\mathbb{P}(X=x) \propto \exp\left( \rho\left( \frac{\beta^*}{d} (x_1 + ... + x_d) \right) \right).\]
The analysis of $\gamma$ for this MRF is based on the fact that, for large arguments, the function $\rho$ is well approximated by the absolute value function, for which the Fourier coefficients can be explicitly calculated.

Lemma \ref{lemma_abs_val_coeff} gives a lower bound on the ``width'' corresponding to the Fourier coefficients of the absolute value function applied to $x_1 + ... + x_d$. Then, Lemma \ref{lemma_exp_gamma_counterexample} gives a lower bound on $\gamma$ for the RBM described above, in the case when $\beta^* \geq d \ln d$. This lower bound is linear in $\beta^*$ and exponential in $d$.

\begin{lemma}
\label{lemma_abs_val_coeff}
Let $g: \{-1,1\}^d \to \mathbb{R}$ with $g(x) = |x_1 + ... + x_d|$. Let $\hat{g}$ be the Fourrier coefficients of $g$. Then, for $d$ multiple of $4$ plus $1$, for all $u \in [d]$,
\[\sum_{\substack{S \subseteq [d] \\ u \in S}} |\hat{g}(S)| \geq \frac{2^{(d-1)/2}}{2\sqrt{d-1}}.\]
\end{lemma}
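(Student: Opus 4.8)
The plan is to compute the Fourier coefficients of $g(x) = |x_1 + \dots + x_d|$ explicitly, and then bound the sum of absolute values of those containing a fixed coordinate $u$ from below. By symmetry, $\hat g(S)$ depends only on $|S|$, so write $\hat g(S) = c_{|S|}$; the target sum equals $\sum_{k=1}^{d} \binom{d-1}{k-1} |c_k|$. A clean way to get at $c_k$ is to note that $g$ is a symmetric function of the $x_i$'s, so it is a function of $t := x_1 + \dots + x_d$ alone, and $c_k$ can be recovered by a single-coordinate-type computation: $c_k = \mathbb{E}[g(x)\chi_S(x)]$ for any fixed $S$ with $|S|=k$, which by symmetry equals $\mathbb{E}[\,|t|\, x_1 \cdots x_k\,]$.

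First I would set up the combinatorial identity for $c_k$. Conditioning on how many of $x_1, \dots, x_k$ are $+1$ and how many of the remaining $d-k$ are $+1$, one gets $c_k$ as a signed sum of $|t|$-values weighted by binomial probabilities; since $|t|$ is even and $d$ is odd ($d \equiv 1 \pmod 4$), $t$ is always odd and $|t| \ge 1$, so there are no cancellations from vanishing $|t|$. The standard trick here is to use the integral representation or the finite-difference identity: $\mathbb{E}[|t|\,x_1\cdots x_k]$ is (up to sign and normalization) a $k$-th finite difference of the function $j \mapsto \mathbb{E}[\,|t|\mid x_1 = \dots = x_k = +1,\ \#\{i>k: x_i=+1\} \text{ varies}]$, and finite differences of $|{\cdot}|$-type sums have known closed forms in terms of central binomial coefficients. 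Alternatively — and this is probably the slicker route — expand $|t| = \sum_{j} a_j \binom{\text{something}}{\cdot}$, or directly use the known result that the Fourier weight of $\operatorname{sgn}$ / absolute-value-type functions on the Boolean cube concentrates with coefficients of size $\Theta(2^{-d/2}/\sqrt{\text{level}})$ times binomial factors. Concretely, I expect to arrive at $|c_k| = \Theta\!\big(2^{-d} \binom{d-1}{(d-1)/2}\big)$ for $k$ odd near $1$, or more precisely a formula from which $|c_1| \gtrsim 2^{-d/2}/\sqrt{d}$ by Stirling, and then the single term $k=1$ (with $\binom{d-1}{0}=1$) already gives $\sum_{k} \binom{d-1}{k-1}|c_k| \ge |c_1| \gtrsim 2^{(d-1)/2}/(2\sqrt{d-1})$.

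So the key steps, in order, are: (1) reduce to computing $c_k = \mathbb{E}[|t|\,x_1\cdots x_k]$ by symmetry; (2) evaluate $c_1 = \mathbb{E}[|t|\,x_1]$ in closed form — this is $\mathbb{E}[|t|\mid x_1=1] - \mathbb{E}[|t|]$ up to a factor, which reduces to a difference of two explicitly-summable binomial expressions for $\mathbb{E}|t|$ over $d$ versus $d-1$ variables, and $\mathbb{E}|\text{Binomial-type sum}|$ has the classical closed form $\mathbb{E}|t| = \binom{d-1}{(d-1)/2} 2^{-(d-1)} \cdot d$ (the "mean absolute deviation of the symmetric binomial"); (3) apply Stirling's approximation $\binom{d-1}{(d-1)/2} \ge 2^{d-1}/\sqrt{2(d-1)}$ (the standard lower bound, valid since $d-1$ is even) to conclude $|c_1| \ge 2^{(d-1)/2}/(2\sqrt{d-1})$ after the arithmetic settles; (4) lower bound the full sum by dropping all terms except $k=1$.

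**The main obstacle** I anticipate is step (2): getting the exact closed form for $\mathbb{E}[|t|\,x_1]$ with the right constant so that it survives the Stirling bound as exactly $\frac{2^{(d-1)/2}}{2\sqrt{d-1}}$ rather than something off by a constant factor. The parity hypothesis $d \equiv 1 \pmod 4$ is presumably chosen precisely so that $(d-1)/2$ is even, making the central binomial coefficient clean and making $\operatorname{sgn}(t)$ behave nicely (no zero term), so I would lean on that to keep the computation free of floor/ceiling annoyances. If the single-term bound from $k=1$ turns out to be slightly too weak, the fallback is to also retain the $k=3$ term or to bound $\sum_k \binom{d-1}{k-1}|c_k|$ more cleverly via Parseval-type reasoning — since $\sum_k \binom{d}{k} c_k^2 = \mathbb{E}[t^2] = d$, and the weights are concentrated on odd low levels, so an averaging argument forces at least one low-level coefficient to be reasonably large — but I expect the direct $k=1$ evaluation to suffice.
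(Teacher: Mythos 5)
There is a genuine gap, and it is fatal to the strategy rather than a fixable detail. First, the specific anchor of your plan, $c_1=\hat g(\{1\})=\mathbb{E}\left[|t|\,x_1\right]$, is identically zero: $g(x)=|x_1+\dots+x_d|$ is an even function ($g(-x)=g(x)$), so all Fourier coefficients on odd-size sets vanish; the parity of $t$ has nothing to do with it. Second, and more fundamentally, the target bound $2^{(d-1)/2}/(2\sqrt{d-1})$ is \emph{exponentially large} in $d$, while every individual coefficient obeys $|\hat g(S)|\le \mathbb{E}\,|t|=O(\sqrt d)$. So no single term, nor any polynomially many terms, of $\sum_{S\ni u}|\hat g(S)|$ can ever reach the claimed bound; your interim estimate ``$|c_1|\gtrsim 2^{-d/2}/\sqrt d$'' is exponentially small and is internally inconsistent with the conclusion you draw from it. The Parseval fallback fails for the same reason: $\sum_S \hat g(S)^2=\mathbb{E}[t^2]=d$ only yields $\sum_S|\hat g(S)|\ge \sqrt d$, which is hopelessly weak, and Parseval gives no lower bound on the $\ell_1$ mass beyond the $\ell_2$ norm.

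The correct mechanism, which the paper uses, is that the sum gains its exponential size from \emph{exponentially many} small contributions: there are $2^{d-2}$ even-size subsets $S\ni u$, and each of them carries a coefficient of magnitude at least roughly $2^{-(d-1)/2}/\sqrt{d-1}$. The paper gets this by writing $|t|=\operatorname{Maj}_d(x)\cdot(x_1+\dots+x_d)$, importing the known closed-form Fourier coefficients of $\operatorname{Maj}_d$ (supported on odd levels), shifting them by each $x_i$ and summing to obtain an explicit formula for $\hat g(S)$ on even levels, and then running a ratio argument on consecutive even levels to show the minimum modulus occurs at $|S|=(d-1)/2+2$, where a central-binomial bound gives $|\hat g(S)|\ge \bigl(\sqrt{d-1}\,2^{(d-1)/2}\bigr)^{-1}$; multiplying by $2^{d-2}$ yields the lemma. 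If you want to salvage your computation-of-$c_k$ viewpoint, you would need a uniform lower bound on $|c_k|$ over all even $k>0$ (not just small $k$), which is essentially the paper's level-by-level analysis in disguise.
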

\begin{proof}
Note that, for $x \in \{-1, 1\}^d$, we have
\[|x_1 + ... + x_d| = \operatorname{Maj}_d(x_1, ..., x_d) \cdot (x_1 + ... + x_d)\]
where $\operatorname{Maj}_d(x_1, ..., x_d)$ is the majority function, equal to $1$ if more than half of the arguments are $1$ and equal to $-1$ otherwise. Because $d$ is odd, the definition is non-ambiguous. The Fourier coefficients of $\operatorname{Maj}_d(x_1, ..., x_d)$ are known to be (see Chapter 5.3 in \cite{o2014analysis}):
\[\hat{\operatorname{Maj}}_d(S) = \begin{cases}
(-1)^{(|S|-1)/2} \frac{1}{2^{d-1}} \binom{d-1}{(d-1)/2} \frac{\binom{(d-1)/2}{(|S|-1)/2}}{\binom{d-1}{|S|-1}} & \text{if } |S| \text{ odd}\\
0 & \text{if } |S| \text{ even}
\end{cases}\]
Let $h_i(x_1, ..., x_d) = \operatorname{Maj}_d(x_1, ..., x_d) \cdot x_i$. The Fourier coefficients $\hat{h}_i$ are obtained from the Fourier coefficients $\hat{\operatorname{Maj}}_d$ by observing the effect of the multiplication by $x_i$: for a set $S$ such that $i \in S$, we get $\hat{h}_i(S) = \hat{\operatorname{Maj}}_d(S \setminus \{i\})$, and for a set $S$ such that $i \not\in S$, we get $\hat{h}_i(S) = \hat{\operatorname{Maj}}_d(S \cup \{i\})$. That is:
\[\hat{h}_i(S) = \begin{cases}
(-1)^{(|S|-2)/2} \frac{1}{2^{d-1}} \binom{d-1}{(d-1)/2} \frac{\binom{(d-1)/2}{(|S|-2)/2}}{\binom{d-1}{|S|-2}} & \text{if } |S| \text{ even and } i \in S\\
(-1)^{(|S|)/2} \frac{1}{2^{d-1}} \binom{d-1}{(d-1)/2} \frac{\binom{(d-1)/2}{|S|/2}}{\binom{d-1}{|S|}} & \text{if } |S| \text{ even and } i \not\in S\\
0 & \text{if } |S| \text{ odd}
\end{cases}\]
Then $\hat{g}$ is simply obtained as $\hat{h}_1 + ... + \hat{h}_d$. This gives:
\[\hat{g}(S) = \begin{cases}
(-1)^{(|S|-2)/2} \frac{1}{2^{d-1}} \binom{d-1}{(d-1)/2} \left(|S| \cdot \frac{\binom{(d-1)/2}{(|S|-2)/2}}{\binom{d-1}{|S|-2}} - (d-|S|) \cdot \frac{\binom{(d-1)/2}{|S|/2}}{\binom{d-1}{|S|}} \right) & \text{if } |S| \text{ even}\\
0 & \text{if } |S| \text{ odd}
\end{cases}\]

We will now develop a lower bound for $\hat{g}(S)$ when $|S|$ is even with $|S| > 0$. Using the fact that $\binom{a}{b} = \binom{a}{b+1} \frac{b+1}{a-b}$, we have that when $|S|$ is even with $|S| > 0$,
\begin{align*}
\frac{\binom{(d-1)/2}{(|S|-2)/2}}{\binom{d-1}{|S|-2}}
&= \frac{\binom{(d-1)/2}{|S|/2}}{\binom{d-1}{|S|}} \cdot \frac{|S|/2}{(d-|S|+1)/2} \cdot \frac{d-|S|+1}{|S|-1} \cdot \frac{d-|S|}{|S|}\\
&= \frac{\binom{(d-1)/2}{|S|/2}}{\binom{d-1}{|S|}} \cdot \frac{d-|S|}{|S|-1}.
\end{align*}
Then, when $|S|$ is even with $|S| > 0$,
\begin{align*}
\hat{g}(S)
&= (-1)^{(|S|-2)/2} \frac{1}{2^{d-1}} \binom{d-1}{(d-1)/2} \frac{\binom{(d-1)/2}{|S|/2}}{\binom{d-1}{|S|}} \left(|S| \frac{d-|S|}{|S|-1}- (d-|S|) \right)\\
&=(-1)^{(|S|-2)/2} \frac{1}{2^{d-1}} \binom{d-1}{(d-1)/2} \frac{\binom{(d-1)/2}{|S|/2}}{\binom{d-1}{|S|}} \frac{d-|S|}{|S|-1}.
\end{align*}

Consider the ratio $\frac{|\hat{g}(S)|}{|\hat{g}(S')|}$ for $|S'|=|S|-2$:
\[\frac{|\hat{g}(S)|}{|\hat{g}(S')|} = \frac{|S|-1}{d-|S|} \frac{\frac{d-|S|}{|S|-1}}{\frac{d-|S|+2}{|S|-3}} = \frac{|S|-3}{d-|S|+2}.\]
This ratio is greater than $1$ for $|S| > (d-1)/2 + 3$ and is less than $1$ for $|S| < (d-1)/2 + 3$. Because we are only interested in $|S|$ even, we see that the largest value of $|S|$ for which the ratio is less than $1$ is $(d-1)/2+2$. Hence, $|\hat{g}(S)|$ is minimized at $|S|=(d-1)/2+2$ when considering $|S|$ even with $|S| > 0$. (The calculation above is not valid for the case $|S|=2$ and $|S'|=0$; however, it is easy to verify explicitly that in that case we have $\frac{|\hat{g}(S)|}{|\hat{g}(S')|}= \frac{1}{d} \leq 1$, so the argument holds.)

It is easy to verify explicitly that at $|S|=(d-1)/2+2$ we have
\[|\hat{g}(S)| = \frac{1}{2^{d-1}} \binom{(d-1)/2}{(d-1)/4}.\]
Then this is a lower bound on all $|\hat{g}(S)|$ where $|S|$ is even with $|S| > 0$. Then, 
\begin{align*}
|\hat{g}(S)|
\geq \frac{1}{2^{d-1}} \binom{(d-1)/2}{(d-1)/4}
\stackrel{(*)}{\geq} \frac{1}{2^{d-1}} \frac{2^{(d-1)/2}}{\sqrt{d-1}}
= \frac{1}{\sqrt{d - 1} \cdot 2^{(d-1)/2}}
\end{align*}
where in (*) we used the central binomial coefficient lower bound $\binom{2n}{n} \geq \frac{4^n}{\sqrt{4n}}$.

Then, for any $u \in [d]$,
\[\sum_{\substack{S \subseteq [d]\\u \in S}} |\hat{g}(S)| \geq 2^{d-2} \cdot \frac{1}{\sqrt{d - 1} \cdot 2^{(d-1)/2}} = \frac{2^{(d-1)/2}}{2\sqrt{d-1}}\]
where we used that the number of subsets $S \subseteq [d]$ with $u \in S$ and with $|S|$ even is $2^{d-2}$.
\end{proof}

\begin{lemma}
\label{lemma_exp_gamma_counterexample}
For any $d \geq 5$ multiple of $4$ plus $1$ and $\beta^* \geq d \ln d$, there exists an RBM of width $\beta^*$ with $d$ observed variables and one latent variable such that, in the MRF of the observed variables,
\[\gamma \geq \beta^* \cdot \frac{2^{(d-1)/2}}{4d^{3/2}}.\]
\end{lemma}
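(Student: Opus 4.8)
The plan is to approximate $\rho$ by the absolute value, control the discrepancy, and then compare Fourier coefficients one by one with those bounded in Lemma~\ref{lemma_abs_val_coeff}. Since the RBM has no external fields, one latent variable, and every interaction equal to $\beta^*/d$, the induced MRF satisfies $\mathbb{P}(X=x)\propto\exp(f(x))$ with $f(x)=\rho\bigl(\tfrac{\beta^*}{d}(x_1+\cdots+x_d)\bigr)$, and $\gamma=\max_{u\in[d]}\sum_{S\ni u}|\hat f(S)|$; the additive constant in $f$ is immaterial, since only sets $S$ with $u\in S$ enter the width. Using $\rho(t)=|t|+\ln(1+e^{-2|t|})$, I would write $f=\tfrac{\beta^*}{d}\,g+e$ as functions on $\{-1,1\}^d$, where $g(x)=|x_1+\cdots+x_d|$ is exactly the function of Lemma~\ref{lemma_abs_val_coeff} and $e(x)=\ln\bigl(1+e^{-\frac{2\beta^*}{d}|x_1+\cdots+x_d|}\bigr)$ is an error term.

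First I would bound $e$ pointwise. Since $d$ is odd, $|x_1+\cdots+x_d|\ge 1$ on $\{-1,1\}^d$, so with $\ln(1+y)\le y$ and $\beta^*\ge d\ln d$,
\[
0\le e(x)\le\ln\bigl(1+e^{-2\beta^*/d}\bigr)\le e^{-2\beta^*/d}\le e^{-2\ln d}=\tfrac1{d^2}.
\]
By Parseval, $\sum_{S}\hat e(S)^2=\mathbb{E}_{\mathcal U}[e(X)^2]\le 1/d^4$, so for any fixed $u$, Cauchy--Schwarz over the $2^{d-1}$ sets containing $u$ gives $\sum_{S\ni u}|\hat e(S)|\le 2^{(d-1)/2}/d^2$. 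Then, since the Fourier transform is linear, $\hat f(S)=\tfrac{\beta^*}{d}\hat g(S)+\hat e(S)$, hence $|\hat f(S)|\ge\tfrac{\beta^*}{d}|\hat g(S)|-|\hat e(S)|$ for every $S$. Fixing any $u\in[d]$, summing over $S\ni u$, and applying Lemma~\ref{lemma_abs_val_coeff} (valid because $d$ is a multiple of $4$ plus $1$) together with $\sqrt{d-1}\le\sqrt d$ and the error estimate,
\[
\gamma\ge\sum_{S\ni u}|\hat f(S)|\ge\frac{\beta^*}{d}\cdot\frac{2^{(d-1)/2}}{2\sqrt{d-1}}-\frac{2^{(d-1)/2}}{d^2}\ge\beta^*\cdot\frac{2^{(d-1)/2}}{2d^{3/2}}-\frac{2^{(d-1)/2}}{d^2}.
\]
Finally, $d\ge 5$ forces $\beta^*\ge d\ln d\ge 4/\sqrt d$, so $\tfrac{2^{(d-1)/2}}{d^2}\le\beta^*\cdot\tfrac{2^{(d-1)/2}}{4d^{3/2}}$; subtracting this from the last display yields $\gamma\ge\beta^*\cdot\tfrac{2^{(d-1)/2}}{4d^{3/2}}$. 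I would also note that the stated RBM genuinely has width $\beta^*$: with one latent variable, zero fields, and each $J_{i,1}=\beta^*/d$, the RBM width is $\max\bigl(\beta^*/d,\ \sum_{i=1}^d\beta^*/d\bigr)=\beta^*$.

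The hard part is making sure the error term $e$ does not overwhelm the main term: its Fourier $\ell_1$-mass is only controlled via the crude Cauchy--Schwarz bound $\sum_{S\ni u}|\hat e(S)|\lesssim 2^{(d-1)/2}\max|e|$, and this is adequate precisely because $\max|e|$ is \emph{exponentially} small in $\beta^*/d\ge\ln d$, i.e.\ $O(d^{-2})$, which is a $\Theta(d^{-3/2})$ factor below the main contribution $\asymp\beta^*\,2^{(d-1)/2}/d^{3/2}$ supplied by Lemma~\ref{lemma_abs_val_coeff}. Everything else --- the decomposition of $\rho$, linearity of the Fourier transform, and the final arithmetic --- is routine.
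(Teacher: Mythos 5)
Your proposal is correct and follows essentially the same route as the paper: approximate $\rho$ by the absolute value (here via the exact identity $\rho(t)=|t|+\ln(1+e^{-2|t|})$), control the error's Fourier mass through a pointwise bound, Parseval, and a $2^{(d-1)/2}$-factor Cauchy--Schwarz step, then apply Lemma~\ref{lemma_abs_val_coeff} and subtract. The only differences are cosmetic --- restricting to sets containing $u$ and invoking $\beta^*\geq d\ln d$ up front, which slightly streamlines the final arithmetic compared to the paper's implicit inequality in $\beta^*$.
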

\begin{proof}
Let $f(x) = \rho\left(\frac{\beta^*}{d}(x_1 + ... + x_d)\right)$. Then, for the RBM with one latent variable connected to $d$ observed variables through interactions of value $\frac{\beta}{d}$, we have that 
\[\mathbb{P}(X=x) \propto \exp(f(x)).\]
Note that this RBM has width $\beta^*$.

Let $g(x) = \left| \frac{\beta^*}{d} \left(x_1 + ... + x_d\right) \right|$. Then, if $\hat{f}$ and $\hat{g}$ are the Fourier coefficients corresponding to $f$ and $g$, respectively, we have
\begin{align*}
||\hat{f} - \hat{g}||_2^2
&\stackrel{(a)}{=} \frac{1}{2^d} \sum_{x \in \{-1,1\}^d} (f(x)-g(x))^2\\
&\stackrel{(b)}{\leq} \left(\rho\left(\frac{\beta^*}{d}\right) - \frac{\beta^*}{d}\right)^2\\
&= \left( \log(e^{\beta^*/d} (1 + e^{-2\beta^*/d})) - \frac{\beta^*}{d} \right)^2\\
&= \left( \log(1+e^{-2\beta^*/d}) \right)^2\\
&\stackrel{(c)}{\leq} e^{-4\beta^*/d}
\end{align*}

where in (a) we used Praseval's identity, in (b) we used that $(\rho(y)-|y|)^2$ is largest when $|y|$ is smallest and that $\left|\frac{\beta^*}{d}(x_1+...+x_d)\right| \geq \frac{\beta^*}{d}$ because $d$ is odd, and in (c) we used that $\log(1+x) \leq x$. Then
\[||\hat{f} - \hat{g}||_1 \leq 2^{d/2} ||\hat{f} - \hat{g}||_2 \leq 2^{d/2} e^{-2\beta^*/d}.\]
Note that the Fourier coefficients of $g(x)=\left|\frac{\beta^*}{d}(x_1+...+x_d)\right| = \frac{\beta^*}{d} |x_1 + ... + x_d|$ are $\frac{\beta^*}{d}$ times the Fourier coefficients of $|x_1+...+x_d|$. Then, by applying Lemma \ref{lemma_abs_val_coeff}, we have that
\[\max_{u \in [d]} \sum_{\substack{S \subseteq [d] \\ u \in S}} |\hat{f}(S)| \geq \max_{u \in [d]} \sum_{\substack{S \subseteq [d] \\ u \in S}} |\hat{g}(S)| - 2^{d/2} e^{-2\beta^*/d} \geq \frac{\beta^*}{d} \cdot \frac{2^{(d-1)/2}}{2\sqrt{d}} - 2^{d/2}e^{-2\beta^*/d}.\]
We solve for $\beta^*$ such that the second term is at most half the first term. After some manipulations, we get that
\[2^{d/2}e^{-2\beta^*/d} \leq \frac{1}{2} \frac{\beta^*}{d} \cdot \frac{2^{(d-1)/2}}{2\sqrt{d}} \Longleftrightarrow \beta^* \geq \frac{5}{4}d\ln 2 + \frac{3}{4} d \ln d - \frac{1}{2}d\ln \beta^*.\]
For $d \geq 5$, it suffices to have $\beta^* \geq d \ln d$. Hence, we obtain
\[\max_{u \in [d]} \sum_{\substack{S \subseteq [d] \\ u \in S}} |\hat{f}(S)| \geq \frac{1}{2} \frac{\beta^*}{d} \cdot \frac{2^{(d-1)/2}}{2\sqrt{d}} = \beta^* \cdot \frac{2^{(d-1)/2}}{4d^{3/2}}.\]
\end{proof}

\end{document}